\providecommand{\algorithmname}{Algorithm}
\theoremstyle{plain}
\newtheorem{thm}{\protect\theoremname}
\theoremstyle{plain}
\theoremstyle{definition}
\newtheorem{defn}[thm]{\protect\definitionname}
\theoremstyle{plain}
\DeclareMathOperator*{\argmax}{arg\,max}
\DeclareMathOperator*{\argmin}{arg\,min}
\algnewcommand{\Parameters}[1]{\item[\textbf{Parameters:}]{#1}}
\algnewcommand{\Init}[1]{\item[\textbf{Initialize:}]{#1}}
\providecommand{\corollaryname}{Corollary}
\providecommand{\definitionname}{Definition}
\providecommand{\lemmaname}{Lemma}
\providecommand{\theoremname}{Theorem}
\begin{document}
\title{\huge Individual Regret in Cooperative Nonstochastic
Multi-Armed Bandits}
\author{\textbf{Yogev Bar-On}\\
Tel Aviv University\\
\texttt{baronyogev@gmail.com}\and\textbf{Yishay Mansour}\\
Tel Aviv University\\
and Google Research\\
\texttt{mansour.yishay@gmail.com}}
\date{}
\maketitle
\begin{abstract}
We study agents communicating over an underlying network by exchanging
messages, in order to optimize their individual regret in a common
nonstochastic multi-armed bandit problem. We derive regret minimization
algorithms that guarantee for each agent $v$ an individual expected
regret of $\widetilde{O}\left(\sqrt{\left(1+\frac{K}{\left|\mathcal{N}\left(v\right)\right|}\right)T}\right)$,
where $T$ is the number of time steps, $K$ is the number of actions
and $\mathcal{N}\left(v\right)$ is the set of neighbors of agent
$v$ in the communication graph. We present algorithms both for the
case that the communication graph is known to all the agents, and
for the case that the graph is unknown. When the graph is unknown,
each agent knows only the set of its neighbors and an upper bound
on the total number of agents. The individual regret between the models
differs only by a logarithmic factor. Our work resolves an open problem
from \citep{cesa2019delay}.
\end{abstract}

\section{\label{sec:introduction}Introduction}

The multi-armed bandit (MAB) problem is one of the most basic models
for decision making under uncertainty. It highlights the agent's uncertainty
regarding the losses it suffers from selecting various actions. The
agent selects actions in an online fashion - each time step the agent
selects a single action and suffers a loss corresponding to that action.
The agent's goal is to minimize its cumulative loss over a fixed horizon
of time steps. The agent observes only the loss of the action it selected
each step. Therefore, the MAB problem captures well the crucial trade-off
between exploration and exploitation, where the agent needs to explore
various actions in order to gather information about them.

MAB research discusses two main settings: the stochastic setting,
where the losses of each action are sampled i.i.d. from an unknown
distribution, and the nonstochastic (adversarial) setting, where we
make no assumptions about the loss sequences. In this work we consider
the nonstochastic setting and the objective of minimizing the regret
- the difference between the agent's cumulative loss and the cumulative
loss of the best action in hindsight. It is known that a regret of
the order of $\Theta\left(\sqrt{KT}\right)$ is the best that can
be guaranteed, where $K$ is the number of actions and $T$ is the
time horizon. In contrast, when the losses of all actions are observed
(full-information feedback) the regret can be of the order of $\Theta\left(\sqrt{T\ln K}\right)$
(see, e.g., \citep{cesa2006prediction,bubeck2012regret}).

The main focus of our work is to consider agents that are connected
in a communication graph, and can exchange messages in each step,
in order to reduce their individual regret. This is possible since
the losses depend only on the action and the time step, but not on
the agent.

One extreme case is when the communication graph is a clique, i.e.,
any pair of agents can communicate directly. In this case, the agents
can run the well known Exp3 algorithm \citep{auer2002nonstochastic},
and guarantee each a regret of $O\left(\sqrt{T\ln K}\right)$, assuming
there are at least $K$ agents (see \citep{seldin2014prediction,cesa2019delay}).
However, in many motivating applications, such as distributed learning,
or communication tasks such as routing, the communication graph is
not a clique.

The work of \citet{cesa2019delay} studies a general communication
graph, where the agents can communicate in order to reduce their regret.
The paper presents the Exp3-Coop algorithm, which achieves an expected
regret \textbf{when averaged over all agents} of $\widetilde{O}\left(\sqrt{\left(1+\frac{K}{N}\alpha\left(G\right)\right)T}\right)$,
where $\alpha\left(G\right)$ is the independence number of the communication
graph $G$, and $N$ is the number of agents. The question of whether
it is possible to obtain a low \textbf{individual regret}, that holds
simultaneously for all agents, was left as an open question. We answer
this question affirmatively in this work.

Our main contribution is an individual expected regret bound, which
holds for each agent $v$, of order
\[
\widetilde{O}\left(\sqrt{\left(1+\frac{K}{\left|\mathcal{N}\left(v\right)\right|}\right)T}\right),
\]
where $\mathcal{N}\left(v\right)$ is the set of neighbors of agent
$v$ in the communication graph. We remark that our result also implies
the previous average regret bound.

The main idea of our algorithm is to artificially partition the graph
into disjoint connected components. Each component has a center agent,
which is in some sense the leader of the component. The center agent
has (almost) the largest degree in the component, and it selects actions
using the Exp3-Coop algorithm. By observing the outcomes of its immediate
neighboring agents, the center agent can guarantee its own desired
individual regret. The main challenge is to create such components
with a relatively small diameter, so that the center will be able
to broadcast its information in a short time to all the agents in
the component. Special care is given to relate the agents' local parameters
(degree) to the global component parameters (degree of the center
agent and the broadcast time).

We consider both the case that the communication graph is known to
all the agents in advance (the informed setting), and the case that
the graph is unknown (the uninformed setting). In the uninformed setting,
we assume each agent knows its local neighborhood (i.e., the set of
its neighbors), and an upper bound on the total number of agents.
The regret bound in the uninformed setting is higher by a logarithmic
factor and the algorithm is more complex.

In the next section, we formally define our model, and review preliminary
material. Section \ref{sec:centerBased} shows the center-based policy,
given a graph partition. We then present our graph partitioning algorithms
in Section \ref{sec:partitioningTheGraph}. The analysis is given
in Section \ref{sec:regretAnalysis}. Our work is concluded in Section
\ref{sec:conclusions}.

\subsection{Additional related works}

The cooperative nonstochastic MAB setting was introduced by \citet{awerbuch2008competitive},
where they bound the average regret, when some agents might be dishonest
and the communication is done through a public channel (clique network).
The previously mentioned \citep{cesa2019delay}, also considers the
issue of delays, and presents a bound on the average regret for a
general graph of order $\widetilde{O}\left(\sqrt{\left(d+\frac{K}{N}\alpha\left(G\right)\right)T}+d\right)$,
when messages need $d$ steps to arrive. Dist-Hedge, introduced by
\citet{sahu2017dist}, considers a network of forecasting agents,
with delayed and inexact losses, and derives a sub-linear individual
regret bound, that also depends on spectral properties of the graph.
More recently, \citet{cesa2019cooperative} studied an online learning
model where only a subset of the agents play at each time step, and
showed matching upper and lower bounds on the average regret of order
$\sqrt{\alpha\left(G\right)T}$ when the set of agents that play each
step is chosen stochastically. When the set of agents is chosen arbitrarily,
the lower bound becomes $T$.

In the stochastic setting, \citet{landgren2016distributed,landgren2016distributed2}
presented a cooperative variant of the well-known UCB algorithm, that
uses a consensus algorithm for estimating the mean losses, to obtain
a low average regret. More cooperative variants of the UCB algorithm
that yield a low average regret were presented by \citet{kolla2018collaborative}.
They also showed a policy, where like in the methods in this work,
agents with a low degree follow the actions of agents with a high
degree. Stochastic MAB over P2P communication networks were studied
by \citet{szorenyi2013gossip}, which showed that the probability
to select a sub-optimal arm reduces linearly with the number of peers.
The case where only one agent can observe losses was investigated
by \citet{kar2011bandit}. This agent needs to broadcast information
through the network, and it was shown this is enough to obtain a low
average regret.

Another multi-agent research area involve agents that compete on shared
resources. The motivation comes from radio channel selection, where
multiple devices need to choose a radio channel, and two or more devices
that use the same channel simultaneously interfere with each other.
In this setting, many papers assume agents cannot communicate with
each other, and do not receive a reward upon collision - where more
than one agent tries to choose the same action at the same step. The
first to give regret bounds on this variant are \citet{avner2014concurrent},
that presented an average regret bound of order $O\left(T^{\frac{2}{3}}\right)$
in the stochastic setting. Also in the stochastic setting, \citet{rosenski2016multi}
showed an expected average regret bound of order $O\left(\frac{K}{\Delta^{2}}\ln\left(\frac{K}{\delta}\right)+N\right)$
that holds with probability $1-\delta$, where $\Delta$ is the minimal
gap between the mean rewards (notice that this bound is independent
of $T$). In the same paper, they also studied the case that the number
of agents may change each step, and presented a regret bound of $\widetilde{O}\left(\sqrt{xT}\right)$,
where $x$ is the total number of agents throughout the game. \citet{bistritz2018distributed}
consider the case that different agents have different mean rewards,
and each agent has a different unique action it should choose to maximize
the total regret. They showed an average regret of order $O\left(\log^{2+\epsilon}T\right)$
for every $\epsilon>0$, where the $O$-notation hides the dependency
on the mean rewards.More recently, \citet{alatur2019multi}
studied the adversarial setting, and proved an average regret bound
of $\widetilde{O}\left(N^{\frac{1}{3}}K^{\frac{2}{3}}T^{\frac{2}{3}}\right)$.

\section{\label{sec:preliminaries}Preliminaries}

We consider a nonstochastic multi-armed bandit problem over a finite
action set $A=\left\{ 1,\dots,K\right\} $ played by $N$ agents.
Let $G=\left\langle V,E\right\rangle $ be an undirected connected
communication graph for the set of agents $V=\left\{ 1,\dots,N\right\} $,
and denote by $\mathcal{N}\left(v\right)$ the neighborhood of $v\in V$,
including itself. Namely,
\[
\mathcal{N}\left(v\right)=\left\{ u\in V\mid\left\langle u,v\right\rangle \in E\right\} \cup\left\{ v\right\} .
\]
At each time step $t=1,2,\dots,T$, each agent $v\in V$ draws an
action $I_{t}\left(v\right)\in A$ from a distribution $\boldsymbol{p}_{t}^{v}=\left\langle p_{t}^{v}\left(1\right),\dots,p_{t}^{v}\left(K\right)\right\rangle $
on $A$. It then suffers a loss $\ell_{t}\left(I_{t}\left(v\right)\right)\in\left[0,1\right]$
which it observes. Notice the loss does not depend on the agent, but
only on the time step and the chosen action. Thus, agents that pick
the same action at the same step will suffer the same loss. We also
assume the adversary is oblivious, i.e., the losses do not depend
on the agents' realized actions. In the end of step $t$, each agent
sends a message
\[
m_{t}\left(v\right)=\left\langle v,t,I_{t}\left(v\right),\ell_{t}\left(I_{t}\left(v\right)\right),\boldsymbol{p}_{t}^{v}\right\rangle 
\]
to all the agents in its neighborhood, and also receives messages
from its neighbors: $m_{t}\left(v'\right)$ for all $v'\in\mathcal{N}\left(v\right)$.
Our goal is to minimize, for each $v\in V$, its\emph{ expected regret}
over $T$ steps:
\[
R_{T}\left(v\right)=\mathbb{E}\left[\sum_{t=1}^{T}\ell_{t}\left(I_{t}\left(v\right)\right)-\min_{i\in A}\sum_{t=1}^{T}\ell_{t}\left(i\right)\right].
\]

A well-known policy to update $\boldsymbol{p}_{t}^{v}$ is the exponential-weights
algorithm (Exp3) with weights $w_{t}^{v}\left(i\right)$ for all $i\in A$,
such that $p_{t}^{v}\left(i\right)=\frac{w_{t}^{v}\left(i\right)}{W_{t}^{v}}$
where $W_{t}^{v}=\sum_{i\in A}w_{t}^{v}\left(i\right)$ (see, e.g.,
\citep{cesa2006prediction}). The weights are updated as follows:
let $B_{t}^{v}\left(i\right)$ be the event that $v$ observed the
loss of action $i$ at step $t$; in our case $B_{t}^{v}\left(i\right)=\mathbb{I}\left\{ \exists v'\in\mathcal{N}\left(v\right):I_{t}\left(v'\right)=i\right\} $,
where $\mathbb{I}$ is the indicator function. Also, let $\hat{\ell}_{t}^{v}\left(i\right)=\frac{\ell_{t}\left(i\right)}{\mathbb{E}_{t}\left[B_{t}^{v}\left(i\right)\right]}B_{t}^{v}\left(i\right)$
be an unbiased estimated loss of action $i$ at step $t$, where $\mathbb{E}_{t}\left[\cdot\right]$
is the expectation conditioned on all the agents' choices up to step
$t$ (hence, $\mathbb{E}_{t}\left[\hat{\ell}_{t}^{v}\left(i\right)\right]=\ell_{t}\left(i\right)$).
Then
\[
w_{t+1}^{v}\left(i\right)=w_{t}^{v}\left(i\right)\exp\left(-\eta\left(v\right)\hat{\ell}_{t}^{v}\left(i\right)\right),
\]
where $\eta\left(v\right)$ is a positive parameter chosen by $v$,
called the \emph{learning rate} of agent $v$. Exp3 is given explicitly
in the appendix.
Notice that in our setting all agents $v\in V$ have the information
needed to compute $\hat{\ell}_{t}^{v}\left(i\right)$, since
\[
\mathbb{E}_{t}\left[B_{t}^{v}\left(i\right)\right]=\Pr\left[\exists v'\in\mathcal{N}\left(v\right):I_{t}\left(v'\right)=i\right]=1-\prod_{v'\in\mathcal{N}\left(v\right)}\left(1-p_{t}^{v'}\left(i\right)\right),
\]
and if agent $v$ does not observe $\ell_{t}\left(i\right)$, then
$\hat{\ell}_{t}^{v}\left(i\right)=0$.

We proceed with two useful lemmas that will help us later. For completeness,
we provide their proofs in the appendix
as well. The first lemma is the usual analysis of the exponential-weights
algorithm:
\begin{restatable}{lem}{RESTATEexpCoopBoundRegret}\label{lem:expCoopBoundRegret}
Assuming agent $v$ uses the exponential-weights
algorithm, its expected regret satisfies
\[
R_{T}\left(v\right)\le\frac{\ln K}{\eta\left(v\right)}+\frac{\eta\left(v\right)}{2}\mathbb{E}\left[\sum_{t=1}^{T}\sum_{i=1}^{K}p_{t}^{v}\left(i\right)\hat{\ell}_{t}^{v}\left(i\right)^{2}\right].
\]
\end{restatable}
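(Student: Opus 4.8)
The plan is to prove the standard exponential-weights regret bound, which is a classical calculation adapted to the cooperative-feedback loss estimates $\hat{\ell}_t^v$. The key quantity to track is the log of the total weight $\ln W_t^v$, and the strategy is to bound $\ln\frac{W_{T+1}^v}{W_1^v}$ in two ways: from below by isolating the best fixed action, and from above by a step-by-step telescoping argument.

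\medskip

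First I would establish the lower bound. Since $W_{T+1}^v \ge w_{T+1}^v(i^\star)$ for the optimal action $i^\star = \argmin_{i} \sum_t \ell_t(i)$, and since $w_{T+1}^v(i) = w_1^v(i)\exp(-\eta(v)\sum_{t}\hat{\ell}_t^v(i))$ with uniform initialization $w_1^v(i)=1$, we get
\[
\ln\frac{W_{T+1}^v}{W_1^v} \ge -\eta(v)\sum_{t=1}^T \hat{\ell}_t^v(i^\star) - \ln K.
\]
Next I would derive the per-step upper bound. Writing $\ln\frac{W_{t+1}^v}{W_t^v} = \ln\sum_i p_t^v(i)\exp(-\eta(v)\hat{\ell}_t^v(i))$, I would apply the inequality $e^{-x}\le 1 - x + \tfrac{1}{2}x^2$ (valid for $x\ge 0$, which holds since losses and hence $\hat{\ell}_t^v$ are nonnegative), followed by $\ln(1+z)\le z$. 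This yields
\[
\ln\frac{W_{t+1}^v}{W_t^v} \le -\eta(v)\sum_i p_t^v(i)\hat{\ell}_t^v(i) + \frac{\eta(v)^2}{2}\sum_i p_t^v(i)\hat{\ell}_t^v(i)^2.
\]
Summing over $t$ telescopes the left side to $\ln\frac{W_{T+1}^v}{W_1^v}$.

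\medskip

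Combining the two bounds and rearranging isolates $\sum_t\sum_i p_t^v(i)\hat{\ell}_t^v(i) - \sum_t \hat{\ell}_t^v(i^\star)$, giving
\[
\sum_{t=1}^T\sum_i p_t^v(i)\hat{\ell}_t^v(i) - \sum_{t=1}^T \hat{\ell}_t^v(i^\star) \le \frac{\ln K}{\eta(v)} + \frac{\eta(v)}{2}\sum_{t=1}^T\sum_i p_t^v(i)\hat{\ell}_t^v(i)^2.
\]
Finally I would take expectations. The crucial observation is that the estimators are unbiased: $\mathbb{E}_t[\hat{\ell}_t^v(i)] = \ell_t(i)$. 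On the left, $\mathbb{E}[\sum_i p_t^v(i)\hat{\ell}_t^v(i)]$ equals $\mathbb{E}[\sum_i p_t^v(i)\ell_t(i)] = \mathbb{E}[\ell_t(I_t(v))]$ (since $p_t^v$ is measurable given the history and $I_t(v)\sim p_t^v$), recovering the agent's expected loss, while $\mathbb{E}[\sum_t\hat{\ell}_t^v(i^\star)] = \sum_t \ell_t(i^\star) = \min_i\sum_t\ell_t(i)$. This matches the definition of $R_T(v)$ and yields the claim.

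\medskip

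The main subtlety — rather than obstacle — is the handling of conditional expectations: one must be careful that $i^\star$ is a fixed (data-dependent but non-random once the loss sequence is fixed, by the oblivious-adversary assumption) action so that $\mathbb{E}[\hat{\ell}_t^v(i^\star)] = \ell_t(i^\star)$ is legitimate, and that the tower property correctly converts the conditionally-unbiased estimates into the unconditional expectations defining the regret. The algebraic inequalities themselves are routine.
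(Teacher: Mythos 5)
Your proposal is correct and follows essentially the same route as the paper's own proof: the second-order bound $e^{-x}\le 1-x+\tfrac{1}{2}x^{2}$ on the weight ratio, the lower bound via the best action's weight, telescoping, and the tower property with the unbiasedness $\mathbb{E}_{t}\left[\hat{\ell}_{t}^{v}\left(i\right)\right]=\ell_{t}\left(i\right)$ to pass to expectations. The only cosmetic difference is the weight normalization ($w_{1}^{v}\left(i\right)=1$ versus the paper's $\tfrac{1}{K}$), which does not affect the $\ln K$ term.
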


The next lemma is from \citep{cesa2019delay}, and it bounds the change
of the action distribution in the exponential-weights algorithm.
\begin{restatable}{lem}{RESTATEexpCoopBoundProbabilityDifference}\label{lem:expCoopBoundProbabilityDifference}
Assuming agent $v$
uses the exponential-weights algorithm with a learning rate $\eta\left(v\right)\le\frac{1}{2K}$,
then for all $i\in A$:
\[
\left(1-\eta\left(v\right)\hat{\ell}_{t}^{v}\left(i\right)\right)p_{t}^{v}\left(i\right)\le p_{t+1}^{v}\left(i\right)\le2p_{t}^{v}\left(i\right).
\]
\end{restatable}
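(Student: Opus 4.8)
The plan is to work directly with the closed form of the updated distribution. Substituting the weight update $w_{t+1}^{v}\left(i\right)=w_{t}^{v}\left(i\right)\exp\left(-\eta\left(v\right)\hat{\ell}_{t}^{v}\left(i\right)\right)$ into $p_{t+1}^{v}\left(i\right)=w_{t+1}^{v}\left(i\right)/W_{t+1}^{v}$ and dividing numerator and denominator by $W_{t}^{v}$, I would write
\[
\frac{p_{t+1}^{v}\left(i\right)}{p_{t}^{v}\left(i\right)}=\frac{\exp\left(-\eta\left(v\right)\hat{\ell}_{t}^{v}\left(i\right)\right)}{\sum_{j\in A}p_{t}^{v}\left(j\right)\exp\left(-\eta\left(v\right)\hat{\ell}_{t}^{v}\left(j\right)\right)}.
\]
Both bounds then reduce to estimating this single ratio, using the elementary inequalities $1-x\le e^{-x}\le1$ valid for $x\ge0$ (here $x=\eta\left(v\right)\hat{\ell}_{t}^{v}\left(j\right)\ge0$, since $\hat{\ell}_{t}^{v}\left(j\right)\ge0$ and $\eta\left(v\right)>0$).

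For the lower bound, I would bound the numerator from below by $1-\eta\left(v\right)\hat{\ell}_{t}^{v}\left(i\right)$ and the denominator from above by $\sum_{j}p_{t}^{v}\left(j\right)=1$, using that each exponential factor is at most $1$. This immediately yields $p_{t+1}^{v}\left(i\right)\ge\left(1-\eta\left(v\right)\hat{\ell}_{t}^{v}\left(i\right)\right)p_{t}^{v}\left(i\right)$; when $1-\eta\left(v\right)\hat{\ell}_{t}^{v}\left(i\right)<0$ the inequality is trivial since the left side is nonnegative. This direction needs no restriction on $\eta\left(v\right)$.

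For the upper bound, the numerator is at most $1$, so it suffices to show the denominator is at least $\tfrac{1}{2}$. Applying $e^{-x}\ge1-x$ to each term gives
\[
\sum_{j\in A}p_{t}^{v}\left(j\right)\exp\left(-\eta\left(v\right)\hat{\ell}_{t}^{v}\left(j\right)\right)\ge1-\eta\left(v\right)\sum_{j\in A}p_{t}^{v}\left(j\right)\hat{\ell}_{t}^{v}\left(j\right),
\]
and the crux is to show $\sum_{j}p_{t}^{v}\left(j\right)\hat{\ell}_{t}^{v}\left(j\right)\le K$. The key observation is that, since $v\in\mathcal{N}\left(v\right)$, the observation probability satisfies $\mathbb{E}_{t}\left[B_{t}^{v}\left(j\right)\right]=1-\prod_{v'\in\mathcal{N}\left(v\right)}\left(1-p_{t}^{v'}\left(j\right)\right)\ge p_{t}^{v}\left(j\right)$; hence each summand obeys $p_{t}^{v}\left(j\right)\hat{\ell}_{t}^{v}\left(j\right)=\tfrac{p_{t}^{v}\left(j\right)}{\mathbb{E}_{t}\left[B_{t}^{v}\left(j\right)\right]}\ell_{t}\left(j\right)B_{t}^{v}\left(j\right)\le1$, because $\ell_{t}\left(j\right)\le1$ and $B_{t}^{v}\left(j\right)\le1$. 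Summing over the $K$ actions gives the bound, and then $\eta\left(v\right)\le\tfrac{1}{2K}$ forces the denominator to be at least $1-\eta\left(v\right)K\ge\tfrac{1}{2}$, so the ratio is at most $2$.

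I expect the main obstacle to be the inequality $\sum_{j}p_{t}^{v}\left(j\right)\hat{\ell}_{t}^{v}\left(j\right)\le K$: the subtlety is that $p_{t}^{v}\left(j\right)$ is agent $v$'s own probability while the importance-weighting denominator $\mathbb{E}_{t}\left[B_{t}^{v}\left(j\right)\right]$ depends on the entire neighborhood's distributions, and the argument hinges precisely on the fact that $v$ observes its own draw, giving $\mathbb{E}_{t}\left[B_{t}^{v}\left(j\right)\right]\ge p_{t}^{v}\left(j\right)$. Everything else is a direct application of $1-x\le e^{-x}\le1$.
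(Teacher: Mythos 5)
Your proof is correct and follows essentially the same route as the paper: the lower bound via $W_{t+1}^{v}\le W_{t}^{v}$ (equivalently, denominator at most $1$) together with $e^{-x}\ge1-x$, and the upper bound via the key observation that $\mathbb{E}_{t}\left[B_{t}^{v}\left(j\right)\right]\ge p_{t}^{v}\left(j\right)$ (since $v\in\mathcal{N}\left(v\right)$) implies $p_{t}^{v}\left(j\right)\hat{\ell}_{t}^{v}\left(j\right)\le1$, so the denominator is at least $1-\eta\left(v\right)K\ge\tfrac{1}{2}$. This matches the paper's argument step for step, including the identification of the importance-weighting bound as the crux.
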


Also, the following definition will be needed for our algorithm. We
denote by $G^{r}$ the $r$-th power of $G$, in which $v_{1},v_{2}\in V$
are adjacent if and only if $\mathrm{dist}_{G}\left(v,v'\right)\le r$;
and by $G_{|U}$ the sub-graph of $G$ induced by $U\subseteq V$.
\begin{defn}\label{def:independentSet}
Let $G=\left\langle V,E\right\rangle $
be an undirected connected graph and let $W\subseteq U\subseteq V$.
$W$ is called an \emph{$r$-independent set} of $G$, if it is an
independent set of $G^{r}$. Namely,
\[
\forall w,w'\in W:\mathrm{dist}_{G}\left(w,w'\right)\ge r+1.
\]
If $W$ is also a maximal independent set of $\left(G^{r}\right)_{|U}$,
it is called a \emph{maximal $r$-independent subset} ($r$-MIS) of
$U$. Namely, there is no $r$-independent set $W'\subseteq U$ such
that $W\subset W'$.
\end{defn}

\section{\label{sec:centerBased}Center-based cooperative multi-armed bandits}

We now present the center-based policy for the cooperative multi-armed
bandit setting, which will give us the desired low individual regret.
In the center-based cooperative MAB, not all the agents behave similarly.
We partition the agents to three different types.

\emph{Center agents} are the agents that determine the action distribution
for all other agents. They work together with their neighbors to minimize
their regret. The neighbors of the center agents in the communication
graph, \emph{center-adjacent} agents, always copy the action distribution
from their neighboring center, and thus the centers gain more information
about their own distribution each step.

Other (not center or center-adjacent) agents are \emph{simple agents},
which simply copy the action distribution from one of the centers.
Since they are not center-adjacent, they receive the action distribution
with delay, through other agents that copy from the same center.

We artificially partition the graph to connected components, such
that each center $c$ has its own component, and all the simple agents
in the component of $c$ copy their action distribution from it. To
obtain a low individual regret, we require the components to have
a relatively small diameter, and the center agents to have a high
degree in the communication graph. Namely, center agents have the
highest or nearly highest degree in their component.

In more detail, we select a set $C\subseteq V$ of center\emph{ }agents.
All center agents $c\in C$ use the exponential-weights algorithm
with a learning rate $\eta\left(c\right)=\frac{1}{2}\sqrt{\frac{\left(\ln K\right)\min\left\{ \left|\mathcal{N}\left(c\right)\right|,K\right\} }{KT}}$.
The agent set $V$ is partitioned into disjoint subsets $\left\{ V_{c}\subseteq V\mid c\in C\right\} $,
such that $\mathcal{N}\left(c\right)\subseteq V_{c}$ for all $c\in C$,
and the sub-graph $G_{c}\equiv G_{|V_{c}}$ induced by $V_{c}$ is
connected. Notice that since the components are disjoint, the condition
$\mathcal{N}\left(c\right)\subseteq V_{c}$ implies $C$ is a $2$-independent
set. For all non-centers $v\in V\setminus C$, we denote by $\mathcal{C}\left(v\right)\in C$
the\emph{ }center agent such that $v\in V_{\mathcal{C}\left(v\right)}$,
and call it the \emph{center of $v$}. All non-center agents $v\in V\setminus C$
copy their distribution from their \emph{origin neighbor $U\left(v\right)$,
}which is their neighbor in $G_{\mathcal{C}\left(v\right)}$ closest
to $\mathcal{C}\left(v\right)$, breaking ties arbitrarily. Namely,
\[
U\left(v\right)=\argmin_{v'\in\mathcal{N}\left(v\right)\cap V_{\mathcal{C}\left(v\right)}}\mathrm{dist}_{G_{\mathcal{C}\left(v\right)}}\left(v',\mathcal{C}\left(v\right)\right).
\]
Thus, agent $v$ receives its center's distribution with a delay of
$d\left(v\right)=\mathrm{dist}_{G_{\mathcal{C}\left(v\right)}}\left(v,\mathcal{C}\left(v\right)\right)$
steps, so for all $t\ge d\left(v\right)+1$:
\[
\boldsymbol{p}_{t}^{v}=\boldsymbol{p}_{t-d\left(v\right)}^{\mathcal{C}\left(v\right)}.
\]
Notice that if $v\in\mathcal{N}\left(c\right)$, then $v$ is center-adjacent
and it holds $U\left(v\right)=\mathcal{C}\left(v\right)$ and $d\left(v\right)=1$.
For completeness, we define $U\left(c\right)=\mathcal{C}\left(c\right)=c$
and $d\left(c\right)=0$ for all $c\in C$.

To express the regret of the center-based policy, we introduce a new
concept:
\begin{defn}\label{def:mass}
The \emph{mass} of a center agent $c\in C$ is defined
to be
\[
M\left(c\right)\equiv\min\left\{ \left|\mathcal{N}\left(c\right)\right|,K\right\} ,
\]
and the mass of non-center agent $v\in V\setminus C$ is
\[
M\left(v\right)\equiv e^{-\frac{1}{6}d\left(v\right)}M\left(\mathcal{C}\left(v\right)\right).
\]
Notice the mass depends only on how the graph is partitioned, and
it satisfies $M\left(v\right)=e^{-\frac{1}{6}}M\left(U\left(v\right)\right)$
for all non-centers $v\in V\setminus C$. Intuitively, the mass of
agent $v$ captures the idea that as the degree of the center is larger
and as the agent is closer to its center, the lower the regret of
$v$. We prove that the regret is $\widetilde{O}\left(\sqrt{\frac{K}{M\left(v\right)}T}\right)$.
Our partitioning algorithms, presented in the next section, show that
the mass of agent $v$ satisfies $M\left(v\right)=\Omega\left(\min\left\{ \left|\mathcal{N}\left(v\right)\right|,K\right\} \right)$,
so we obtain an individual regret of the order of $\widetilde{O}\left(\sqrt{\left(1+\frac{K}{\left|\mathcal{N}\left(v\right)\right|}\right)T}\right)$.

We specify the center-based policy in Algorithms \ref{alg:center-based-policy-centers}
and \ref{alg:center-based-policy-non-centers}. We emphasize that
before the agents use the center-based policy they must partition
the graph with one of the algorithms we present in the next section.
While the agents partition the graph, they play arbitrary actions.
\begin{algorithm}
\caption{\label{alg:center-based-policy-centers}Center-based cooperative MAB
- $v$ is a center agent}
\begin{algorithmic}[1]

\Parameters{Number of arms $K$; Time horizon $T$.}

\Init{$\eta\left(v\right)\leftarrow\frac{1}{2}\sqrt{\frac{\left(\ln K\right)M\left(v\right)}{KT}}$;
$w_{1}^{v}\left(i\right)\leftarrow\frac{1}{K}$ for all $i\in A$.}

\For{$t\le T$}

\State{Set $p_{t}^{v}\left(i\right)\leftarrow\frac{w_{t}^{v}\left(i\right)}{W_{t}^{v}}$
for all $i\in A$, where $W_{t}^{v}=\sum_{i\in A}w_{t}^{v}\left(i\right)$.}

\State{Play an action $I_{t}\left(v\right)$ drawn from $\boldsymbol{p}_{t}^{v}=\left\langle p_{t}^{v}\left(1\right),\dots,p_{t}^{v}\left(K\right)\right\rangle $.}

\State{Observe loss $\ell_{t}\left(I_{t}\left(v\right)\right)$.}

\State{Send the following message to the set $\mathcal{N}\left(v\right)$:
$m_{t}\left(v\right)=\left\langle v,t,I_{t}\left(v\right),\ell_{t}\left(I_{t}\left(v\right)\right),\boldsymbol{p}_{t}^{v}\right\rangle $.}

\State{Receive all messages $m_{t}\left(v'\right)$ from $v'\in\mathcal{N}\left(v\right)$.}

\State{Update for all $i\in A$: $w_{t+1}^{v}\left(i\right)\leftarrow w_{t}^{v}\left(i\right)\exp\left(-\eta\left(v\right)\hat{\ell}_{t}^{v}\left(i\right)\right)$,
where
\[
\hat{\ell}_{t}^{v}\left(i\right)=\frac{\ell_{t}\left(i\right)}{\mathbb{E}_{t}\left[B_{t}^{v}\left(i\right)\right]}B_{t}^{v}\left(i\right),
\]
\[
B_{t}^{v}\left(i\right)=\mathbb{I}\left\{ \exists v'\in\mathcal{N}\left(v\right):I_{t}\left(v'\right)=i\right\} ,\hspace{1em}\mathbb{E}_{t}\left[B_{t}^{v}\left(i\right)\right]=1-\prod_{v'\in\mathcal{N}\left(v\right)}\left(1-p_{t}^{v'}\left(i\right)\right).
\]
}

\EndFor

\end{algorithmic}
\end{algorithm}
\begin{algorithm}
\caption{\label{alg:center-based-policy-non-centers}Center-based cooperative
MAB - $v$ is a non-center agent}
\begin{algorithmic}[1]

\Parameters{Number of arms $K$; Time horizon $T$; Origin neighbor
$U\left(v\right)$.}

\Init{$p_{1}^{v}\left(i\right)\leftarrow\frac{1}{K}$ for all $i\in A$.}

\For{$t\le T$}

\State{Play an action $I_{t}\left(v\right)$ drawn from $\boldsymbol{p}_{t}^{v}=\left\langle p_{t}^{v}\left(1\right),\dots,p_{t}^{v}\left(K\right)\right\rangle $.}

\State{Observe loss $\ell_{t}\left(I_{t}\left(v\right)\right)$.}

\State{Send the following message to the set $\mathcal{N}\left(v\right)$:
$m_{t}\left(v\right)=\left\langle v,t,I_{t}\left(v\right),\ell_{t}\left(I_{t}\left(v\right)\right),\boldsymbol{p}_{t}^{v}\right\rangle $.}

\State{Receive the message $m_{t}\left(U\left(v\right)\right)$ from
$U\left(v\right)$.}

\State{Update $p_{t+1}^{v}\left(i\right)=p_{t}^{U\left(v\right)}\left(i\right)$
for all $i\in A$.}

\EndFor

\end{algorithmic}
\end{algorithm}
\end{defn}

\section{\label{sec:partitioningTheGraph}Partitioning the graph}

The goal now is to show that we can partition the graph such that
the mass is large for every $v\in V$. In particular, we want to show
that any graph can be partitioned such that $M\left(v\right)=\Omega\left(\min\left\{ \left|\mathcal{N}\left(v\right)\right|,K\right\} \right)$.

We consider two cases: the informed and uninformed settings. In the
informed setting, all of the agents have access to the graph structure.
Each agent can partition the graph by itself in advance, to know the
role it plays: whether it is a center or not, and which agent is its
origin neighbor. In the uninformed setting, the graph structure is
not known to the agents, only their neighbors and an upper bound on
the total number of agents $\bar{N}\ge N$. The agents partition the
graph using a distributed algorithm while playing actions and suffering
loss.

The basic structure of the partitioning algorithm in both settings
is the same. First, we show an algorithm that computes the connected
components given a center set $C$. Then, we show an algorithm that
computes a center set $C$. The second algorithm is specifically designed
to be used with the first, and together they partition the graph to
connected components such that every agent has a large mass.

\subsection{Computing graph components given a center set}

Given a center set $C$, we show a distributed algorithm called \emph{Centers-to-Components},
which computes the connected components, and present it in Algorithm
\ref{alg:centersToComponents}. Although it is distributed, in the
informed setting agents can simply simulate it locally in advance.

Centers-to-Components runs simultaneous distributed BFS graph traversals,
originating from every center $c\in C$. When the traversal of center
$c$ arrives to a simple agent $v\in V\setminus C$, $v$ decides
if $c$ is the best center for it so far, and if it is, $v$ switches
its component to $V_{c}$. Notice each agent needs to know only if
itself is a center or not.
\begin{algorithm}
\caption{\label{alg:centersToComponents}Centers-to-Components}
\begin{algorithmic}[1]

\Parameters{Number of arms $K$; Center set $C$.}

\Init{Number of iterations $\Theta_{K}\leftarrow\left\lfloor 12\ln K\right\rfloor $.}

\If{$v\in C$}

\State{Initialize: $\mathcal{C}_{0}\left(v\right)\leftarrow v;\hspace{1em}U_{0}\left(v\right)\leftarrow v;\hspace{1em}M_{0}\left(v\right)\leftarrow\min\left\{ \left|\mathcal{N}\left(v\right)\right|,K\right\} $.}

\Else

\State{Initialize: $\mathcal{C}_{0}\left(v\right)\leftarrow\mathrm{nil};\hspace{1em}U_{0}\left(v\right)\leftarrow\mathrm{nil};\hspace{1em}M_{0}\left(v\right)\leftarrow0$.}

\EndIf

\For{$0\le t\le\Theta_{K}$}

\State{Send the following message to the set $\mathcal{N}\left(v\right)$:
$\mu_{t}\left(v\right)=\left\langle v,t,\mathcal{C}_{t}\left(v\right),M_{t}\left(v\right)\right\rangle $.}

\State{Receive all messages $\mu_{t}\left(v'\right)$ from $v'\in\mathcal{N}\left(v\right)$.}

\If{$U_{t}\left(v\right)\notin C$ }\Comment{The center-based policy
requires $\mathcal{N}\left(c\right)\subseteq V_{c}$ for all $c\in C$.}

\State{Find the best origin neighbor for $v$:
\[
U_{t+1}\left(v\right)\leftarrow\argmax_{v'\in\mathcal{N}\left(v\right)\setminus\left\{ v\right\} }M_{t}\left(v'\right).
\]
}

\State{Update: $\mathcal{C}_{t+1}\left(v\right)\leftarrow\mathcal{C}_{t}\left(U_{t+1}\left(v\right)\right);\hspace{1em}M_{t+1}\left(v\right)\leftarrow e^{-\frac{1}{6}}M_{t}\left(U_{t+1}\left(v\right)\right)$.}

\Else

\State{Keep old values: $\mathcal{C}_{t+1}\left(v\right)\leftarrow\mathcal{C}_{t}\left(v\right);\hspace{1em}U_{t+1}\left(v\right)\leftarrow U_{t}\left(v\right);\hspace{1em}M_{t+1}\left(v\right)\leftarrow M_{t}\left(v\right)$.}

\EndIf

\EndFor

\State{\Return{
\[
\mathcal{C}\left(v\right)=\mathcal{C}_{\Theta_{K}+1}\left(v\right);\hspace{1em}U\left(v\right)=U_{\Theta_{K}+1}\left(v\right);\hspace{1em}M\left(v\right)=M_{\Theta_{K}+1}\left(v\right).
\]
}}

\end{algorithmic}
\end{algorithm}

\subsection{Computing centers}

To compute the center set $C$, we show two algorithms; one for the
informed setting and one for the uninformed setting. The regret bound
for the informed setting is slightly better, and the algorithm is
simpler.

\paragraph{The informed setting}

The algorithm that computes the center set in the informed setting
is called \emph{Compute-Centers-Informed} and is presented in Algorithm
\ref{alg:comupteCentersInformed}. The center set is built in a greedy
way: each iteration, all of the agents test if they are ``satisfied''
with the current center set (i.e., $M\left(v\right)\ge\min\left\{ \left|\mathcal{N}\left(v\right)\right|,K\right\} $).
If there are unsatisfied agents left, the agent with the highest degree
is added to the center set.
\begin{algorithm}
\caption{\label{alg:comupteCentersInformed}Compute-Centers-Informed}
\begin{algorithmic}[1]

\Parameters{Undirected connected graph $G=\left\langle V,E\right\rangle $;
Number of arms $K$.}

\Init{Center set $C_{0}\leftarrow\emptyset$; Unsatisfied agents
$S_{0}\leftarrow V$.}

\State{$t\leftarrow0$.}

\While{$S_{t}\neq\emptyset$}

\State{Choose the next center: $c_{t}\leftarrow\argmax_{v\in S_{t}}\left|\mathcal{N}\left(v\right)\right|$.}

\State{Update $C_{t+1}\leftarrow C_{t}\cup\left\{ c_{t}\right\} $.}

\State{Run Centers-to-Components with center set $C_{t+1}$, and
obtain mass $M_{t+1}\left(v\right)$ for each $v\in V$.}

\State{Update
\[
S_{t+1}\leftarrow\left\{ v\in V\mid M_{t+1}\left(v\right)<\min\left\{ \left|\mathcal{N}\left(v\right)\right|,K\right\} \land\min_{c\in C_{t+1}}\mathrm{dist}_{G}\left(v,c\right)\ge3\right\} .
\]
}

\State{$t\leftarrow t+1$.}

\EndWhile

\State{\Return{$C=C_{t}$.}}

\end{algorithmic}
\end{algorithm}

\paragraph{The uninformed setting}

At first, it may seem that the uninformed setting can be solved the
same way as the informed setting, with some distributed version of
Compute-Centers-Informed. However, such algorithm will require $\Omega\left(N\right)$
steps in the worst case, since at each iteration only one agent becomes
a center. In the informed setting we do not care about this, since
the components are computed in advance. In the uninformed setting
however, at each step of the algorithm the agents suffer a loss, and
thus the regret bound will be at least linear in the number of agents,
which can be very large.

To avoid this problem, we need to add many centers each iteration,
and not just one as in Compute-Centers-Informed. To do this, we exploit
the fact that there are only $K$ possible values for a center's mass.
In our algorithm, there are $K$ iterations, and in each iteration
$t$, as many agents as possible with degree $K-t$ become centers.
To ensure the final center set is 2-independent, only a 2-MIS of the
potential center agents are added to the center set each iteration.

To compute a 2-MIS in a distributed manner, we use Luby's algorithm
\citep{luby1986simple,alon1986fast} on the sub-graph of $G^{2}$
induced by the potential center agents. Briefly, at each iteration
of Luby's algorithm, every potential center agent picks a number uniformly
from $\left[0,1\right]$. Agents that picked the maximal number among
their neighbors of distance 2 join the 2-MIS, and their neighbors
of distance 2 stop participating. A 2-MIS is computed after $\left\lceil 3\ln\left(\frac{N}{\sqrt{\delta}}\right)\right\rceil $
iterations with probability $1-\delta$. Each iteration requires exchanging
4 messages - 2 for communicating the random numbers and 2 for communicating
the new agents in the 2-MIS. Hence, $4\left\lceil 3\ln\left(\frac{N}{\sqrt{\delta}}\right)\right\rceil $
steps suffice to compute a 2-MIS with probability $1-\delta$. A more
detailed explanation of Luby's algorithm can be found in the appendix.

We present \emph{Compute-Centers-Uninformed} in Algorithm \ref{alg:computeCentersUninformed}.
Since this is a distributed algorithm, we have the variables $\mathbb{C}\left(v\right)$
and $\mathbb{S}\left(v\right)$ as indicators for whether $v$ is
a center or unsatisfied, respectively.
\begin{algorithm}
\caption{\label{alg:computeCentersUninformed}Compute-Centers-Uninformed -
agent $v$}
\begin{algorithmic}[1]

\Parameters{Number of arms $K$; Upper bound on the total number
of agents $\bar{N}$; Time horizon $T$.}

\Init{Center indicator $\mathbb{C}\left(v\right)\leftarrow\mathrm{FALSE}$;
Unsatisfied indicator $\mathbb{S}\left(v\right)\leftarrow\mathrm{TRUE}$.}

\For{$0\le t\le K-1$}

\State{Participate for $4\left\lceil 3\ln\left(\bar{N}\sqrt{KT}\right)\right\rceil $
steps in Luby's algorithm on $\left(G^{2}\right)_{|S_{t}}$, where
\[
S_{t}=\left\{ v\in V\mid\mathbb{S}\left(v\right)=\mathrm{TRUE}\land\min\left\{ \left|\mathcal{N}\left(v\right)\right|,K\right\} =K-t\right\} ,
\]
to compute $W_{t}$, a 2-MIS of $S_{t}$, with probability $1-\frac{1}{TK}$.}

\State{If $v\in W_{t}$, set $\mathbb{C}\left(v\right)\leftarrow\mathrm{TRUE}$.}

\State{Participate in Centers-to-Components with center set $C_{t}=\left\{ v'\in V\mid\mathbb{C}\left(v'\right)=\mathrm{TRUE}\right\} $;
obtain mass $M_{t}\left(v\right)$ and whether $\min_{c\in C_{t}}\mathrm{dist}_{G}\left(v,c\right)\ge3$.}

\State{\Comment{$\min_{c\in C_{t}}\mathrm{dist}_{G}\left(v,c\right)\ge3$
if and only if $\mathcal{C}_{2}\left(v\right)=\mathrm{nil}$ in Centers-to-Components.}}

\State{Update
\[
\mathbb{S}\left(v\right)\leftarrow\mathbb{I}\left[M_{t}\left(v\right)<\min\left\{ \left|\mathcal{N}\left(v\right)\right|,K\right\} \land\min_{c\in C_{t}}\mathrm{dist}_{G}\left(v,c\right)\ge3\right].
\]
}

\EndFor

\State{\Return{$C=C_{K-1}$.}}

\end{algorithmic}
\end{algorithm}

\section{\label{sec:regretAnalysis}Regret analysis}

We will now provide an analysis of our algorithms.

\subsection{\label{subsec:analyzingCenterBased}Individual regret of the center-based
policy}

We start by bounding the expected regret of the center agents, when
they are using the center-based policy.
\begin{restatable}{lem}{RESTATEcentersRegret}\label{lem:centersRegret}
Let $T\ge K^{2}\ln K$. Using the center-based
policy, the expected regret of each center $c\in C$ satisfies
\[
R_{T}\left(c\right)\le4\sqrt{\left(\ln K\right)\frac{K}{M\left(c\right)}T}.
\]
\end{restatable}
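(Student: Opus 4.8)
The plan is to reduce the bound to the single-agent exponential-weights estimate of Lemma~\ref{lem:expCoopBoundRegret} and then control the resulting second-moment term using the structure of the center's neighborhood. First I would apply Lemma~\ref{lem:expCoopBoundRegret} to the center $c$, giving
\[
R_T(c)\le\frac{\ln K}{\eta(c)}+\frac{\eta(c)}{2}\,\mathbb{E}\left[\sum_{t=1}^T\sum_{i=1}^K p_t^c(i)\,\hat{\ell}_t^c(i)^2\right].
\]
Since $\ell_t(i)\in[0,1]$ and $B_t^c(i)^2=B_t^c(i)$, taking the conditional expectation $\mathbb{E}_t$ yields $\mathbb{E}_t[\hat{\ell}_t^c(i)^2]\le 1/\mathbb{E}_t[B_t^c(i)]$, so the whole task reduces to upper bounding $\sum_i p_t^c(i)/Q_t(i)$, where $Q_t(i)=\mathbb{E}_t[B_t^c(i)]=1-\prod_{v'\in\mathcal{N}(c)}(1-p_t^{v'}(i))$.

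The key structural step is to exploit that every center-adjacent neighbor $v'\in\mathcal{N}(c)\setminus\{c\}$ copies the center's distribution with delay one, i.e.\ $p_t^{v'}=p_{t-1}^c$ (with the convention $p_0^c=p_1^c$, both uniform). I would first check that the learning rate obeys $\eta(c)\le\frac{1}{2K}$: since $M(c)\le K$ we have $\eta(c)\le\frac12\sqrt{(\ln K)/T}$, and the hypothesis $T\ge K^2\ln K$ makes this at most $\frac{1}{2K}$. This licenses Lemma~\ref{lem:expCoopBoundProbabilityDifference}, whose right inequality gives $p_t^c(i)\le 2p_{t-1}^c(i)$, so every neighbor places mass at least $\frac12 p_t^c(i)$ on action $i$. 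Writing $m=|\mathcal{N}(c)|$, this yields the lower bound $Q_t(i)\ge 1-\bigl(1-\tfrac12 p_t^c(i)\bigr)^m$.

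Then I would invoke the elementary inequality $1-(1-x)^m\ge\frac12\min\{mx,1\}$ (proved from $1-(1-x)^m\ge 1-e^{-mx}$ by splitting on $mx\le 1$ versus $mx\ge 1$) to conclude $\frac{p_t^c(i)}{Q_t(i)}\le\frac{4}{m}+2p_t^c(i)$, and summing over $i\in A$ gives $\sum_i p_t^c(i)/Q_t(i)\le\frac{4K}{m}+2$. A short case analysis on whether $m\le K$ (so $M(c)=m$) or $m>K$ (so $M(c)=K$), using $K/M(c)\ge 1$, then shows this is at most $6K/M(c)$ in both cases.

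Finally I would substitute $\eta(c)=\frac12\sqrt{(\ln K)M(c)/(KT)}$: the first term becomes $2\sqrt{(\ln K)\tfrac{K}{M(c)}T}$ and the second becomes $\tfrac32\sqrt{(\ln K)\tfrac{K}{M(c)}T}$, summing to $\tfrac72\sqrt{(\ln K)\tfrac{K}{M(c)}T}\le 4\sqrt{(\ln K)\tfrac{K}{M(c)}T}$, as claimed. The main obstacle is the middle step: correctly handling the one-step delay of the neighbors so their masses can be compared to $p_t^c$ through Lemma~\ref{lem:expCoopBoundProbabilityDifference} (which is exactly what forces the $T\ge K^2\ln K$ assumption), and extracting the right $K/M(c)$ scaling from the collision probability $Q_t(i)$.
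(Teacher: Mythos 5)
Your proposal is correct and follows essentially the same route as the paper: apply Lemma~\ref{lem:expCoopBoundRegret}, use the one-step-delayed copying together with Lemma~\ref{lem:expCoopBoundProbabilityDifference} to get $p_t^{v'}(i)\ge\frac12 p_t^c(i)$ for every neighbor, lower-bound $\mathbb{E}_t[B_t^c(i)]$ via $1-(1-x)^m\ge \mathrm{const}\cdot\min\{mx,1\}$, and balance the two terms with the given $\eta(c)$. The only differences are cosmetic constants (you use $\tfrac12$ where the paper uses $1-e^{-1}$, ending at $\tfrac72$ instead of exactly $4$).
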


\begin{proof}
Since $T\ge K^{2}\ln K$, we have $\eta\left(c\right)=\frac{1}{2}\sqrt{\frac{\left(\ln K\right)M\left(\text{c}\right)}{KT}}\le\frac{1}{2}\sqrt{\frac{M\left(c\right)}{K^{3}}}\le\frac{1}{2K}$.
Hence, from Lemma \ref{lem:expCoopBoundProbabilityDifference}, we
get for any $v\in\mathcal{N}\left(c\right)\setminus c$ and $i\in A$:
\[
p_{t}^{v}\left(i\right)=p_{t-1}^{c}\left(i\right)\ge\frac{1}{2}p_{t}^{c}\left(i\right).
\]
Hence,
\begin{align*}
\mathbb{E}_{t}\left[B_{t}^{c}\left(i\right)\right] & =1-\prod_{v\in\mathcal{N}\left(c\right)}\left(1-p_{t}^{v}\left(i\right)\right)\\
 & \ge1-\left(1-\frac{1}{2}p_{t}^{c}\left(i\right)\right)^{\left|\mathcal{N}\left(c\right)\right|}\\
 & \ge1-\exp\left(-\frac{1}{2}p_{t}^{c}\left(i\right)\left|\mathcal{N}\left(c\right)\right|\right) & (1-x\le e^{-x})\\
 & \ge1-\exp\left(-\min\left\{ \frac{1}{2}\left|\mathcal{N}\left(c\right)\right|p_{t}^{c}\left(i\right),1\right\} \right)\\
 & \ge\left(1-e^{-1}\right)\min\left\{ \frac{1}{2}\left|\mathcal{N}\left(c\right)\right|p_{t}^{c}\left(i\right),1\right\} , & (\left(1-e^{-1}\right)x\le1-e^{-x}\text{ for }0\le x\le1)
\end{align*}
and thus,
\begin{align*}
\mathbb{E}_{t}\left[\hat{\ell}_{t}^{c}\left(i\right)^{2}\right] & =\mathbb{E}_{t}\left[\frac{\ell_{t}\left(i\right)^{2}}{\mathbb{E}_{t}\left[B_{t}^{c}\left(i\right)\right]^{2}}B_{t}^{c}\left(i\right)\right]\\
 & \le\frac{1}{\mathbb{E}_{t}\left[B_{t}^{c}\left(i\right)\right]} & (\ell_{t}\left(i\right)\le1)\\
 & \le\frac{1}{\left(1-e^{-1}\right)\min\left\{ \frac{1}{2}\left|\mathcal{N}\left(c\right)\right|p_{t}^{c}\left(i\right),1\right\} }\\
 & \le2+\frac{4}{\left|\mathcal{N}\left(c\right)\right|p_{t}^{c}\left(i\right)}.
\end{align*}
By Lemma \ref{lem:expCoopBoundRegret}, we now obtain
\begin{align*}
R_{T}\left(c\right) & \le\frac{\ln K}{\eta\left(c\right)}+\frac{\eta\left(c\right)}{2}\mathbb{E}\left[\sum_{t=1}^{T}\sum_{i=1}^{K}p_{t}^{c}\left(i\right)\mathbb{E}_{t}\left[\hat{\ell}_{t}^{c}\left(i\right)^{2}\right]\right]\\
 & \le\frac{\ln K}{\eta\left(c\right)}+\frac{\eta\left(c\right)}{2}\left(2+4\frac{K}{\left|\mathcal{N}\left(c\right)\right|}\right)T\\
 & \le\frac{\ln K}{\eta\left(c\right)}+4\eta\left(c\right)\frac{K}{M\left(c\right)}T\\
 & =4\sqrt{\left(\ln K\right)\frac{K}{M\left(c\right)}T} & (\eta\left(c\right)=\frac{1}{2}\sqrt{\frac{\left(\ln K\right)M\left(c\right)}{KT}})
\end{align*}
as claimed.
\end{proof}
Since non-center agents use the same distribution as some center,
only with delay, we can use this result together with Lemma \ref{lem:expCoopBoundProbabilityDifference}
to bound the regret of all agents in the graph.
\begin{restatable}{thm}{RESTATEnonCenterRegret}\label{thm:nonCenterRegret}
Let $T\ge K^{2}\ln K$. Using the center-based
policy, the regret of each agent $v\in V$ satisfies
\[
R_{T}\left(v\right)\le7\sqrt{\left(\ln K\right)\frac{K}{M\left(v\right)}T}.
\]
\end{restatable}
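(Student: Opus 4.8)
The plan is to bound the regret of a non-center agent $v \in V \setminus C$ by comparing its loss to that of its center $\mathcal{C}(v)$, for which we already have the bound from Lemma~\ref{lem:centersRegret}, and paying only for the delay $d(v)$ that accumulates as $v$'s distribution lags behind the center's. The key observation is that $v$ plays the center's distribution shifted by $d(v)$ steps, i.e. $\boldsymbol{p}_t^v = \boldsymbol{p}_{t-d(v)}^{\mathcal{C}(v)}$ for $t \ge d(v)+1$. So the expected loss of $v$ at step $t$ equals the expected loss of $\mathcal{C}(v)$ at step $t-d(v)$ \emph{against the loss vector $\ell_t$}, not $\ell_{t-d(v)}$; the discrepancy between these two must be controlled.

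First I would write, for each time $t \ge d(v)+1$,
\[
\mathbb{E}\!\left[\ell_t\!\left(I_t(v)\right)\right] = \sum_{i} p_{t-d(v)}^{\mathcal{C}(v)}(i)\,\ell_t(i),
\]
and compare this to $\sum_i p_t^{\mathcal{C}(v)}(i)\,\ell_t(i)$, the center's expected loss at the \emph{same} step $t$. The difference is $\sum_i \bigl(p_{t-d(v)}^{\mathcal{C}(v)}(i) - p_t^{\mathcal{C}(v)}(i)\bigr)\ell_t(i)$, which I would bound in absolute value by the total variation between the two distributions. Here Lemma~\ref{lem:expCoopBoundProbabilityDifference} is the crucial tool: since $\eta(\mathcal{C}(v)) \le \tfrac{1}{2K}$ (as verified in the proof of Lemma~\ref{lem:centersRegret}), each single-step multiplicative change is controlled by the factor $(1 - \eta \hat\ell)$ from below, so the cumulative drift over $d(v)$ steps telescopes. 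Summing over $t$, the total drift cost should come out proportional to $\eta(\mathcal{C}(v))\, d(v)\, T$ (up to constants), because each step contributes an $O(\eta)$ multiplicative perturbation and there are $d(v)$ such steps of lag.

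The engine that makes this pay off is the definition of mass: $M(v) = e^{-d(v)/6} M(\mathcal{C}(v))$, so $d(v) = 6\ln\bigl(M(\mathcal{C}(v))/M(v)\bigr)$, and the drift term $\eta(\mathcal{C}(v))\,d(v)\,T$ must be shown to be dominated by $\sqrt{(\ln K)\,\tfrac{K}{M(v)}\,T}$. Plugging in $\eta(\mathcal{C}(v)) = \tfrac12\sqrt{(\ln K)M(\mathcal{C}(v))/(KT)}$, the delay penalty becomes roughly $\sqrt{(\ln K)\,M(\mathcal{C}(v))/(KT)}\cdot d(v)\cdot T$, and the exponential decay $e^{-d(v)/6}$ hidden in $M(v)$ is exactly what tames the linear-in-$d(v)$ growth: the worst case of $d(v)\,e^{d(v)/12}$ type expressions stays bounded relative to $\sqrt{K/M(v)}$. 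I expect this reconciliation — showing the linear delay factor $d(v)$ is absorbed by the exponential gap $e^{d(v)/6}$ between $M(\mathcal{C}(v))$ and $M(v)$ — to be the main obstacle, since it is where the peculiar constant $\tfrac16$ in the mass definition is forced. Finally I would add the center's own bound $4\sqrt{(\ln K)\tfrac{K}{M(\mathcal{C}(v))}T} \le 4\sqrt{(\ln K)\tfrac{K}{M(v)}T}$ (using $M(v) \le M(\mathcal{C}(v))$), account for the $O(d(v))$ initial steps $t \le d(v)$ where $v$ plays the uniform distribution (contributing at most $d(v) \le O(\ln K)$ to the regret, negligible for $T \ge K^2 \ln K$), and collect constants to reach the claimed factor of $7$.
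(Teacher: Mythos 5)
Your proposal follows essentially the same route as the paper's proof: you decompose the delayed distribution $\boldsymbol{p}_{t-d\left(v\right)}^{\mathcal{C}\left(v\right)}$ as $\boldsymbol{p}_{t}^{\mathcal{C}\left(v\right)}$ plus a drift term obtained by iterating Lemma \ref{lem:expCoopBoundProbabilityDifference}, arrive at a total cost of $R_{T}\left(\mathcal{C}\left(v\right)\right)+d\left(v\right)+d\left(v\right)\eta\left(\mathcal{C}\left(v\right)\right)T$, and absorb the linear-in-$d\left(v\right)$ delay penalty into the exponential gap $e^{d\left(v\right)/12}$ between $\sqrt{K/M\left(\mathcal{C}\left(v\right)\right)}$ and $\sqrt{K/M\left(v\right)}$. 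This matches the paper's argument step for step, including the final reconciliation via the mass definition.
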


\begin{proof}
Again, since $T\ge K^{2}\ln K$ we have $\eta\left(v\right)\le\frac{1}{2K}$.
Recall that $\boldsymbol{p}_{t}^{v}=\boldsymbol{p}_{t-d\left(v\right)}^{\mathcal{C}\left(v\right)}$.
Thus, we can use Lemma \ref{lem:expCoopBoundProbabilityDifference}
iteratively to obtain for all $t>d\left(v\right)$:
\begin{align*}
p_{t}^{v}\left(i\right) & =p_{t-d\left(v\right)}^{\mathcal{C}\left(v\right)}\left(i\right)\\
 & \le p_{t-d\left(v\right)+1}^{\mathcal{C}\left(v\right)}\left(i\right)+\eta\left(\mathcal{C}\left(v\right)\right)p_{t-d\left(v\right)}^{\mathcal{C}\left(v\right)}\left(i\right)\hat{\ell}_{t-d\left(v\right)}^{\mathcal{C}\left(v\right)}\left(i\right)\\
 & \le\cdots\le p_{t}^{\mathcal{C}\left(v\right)}\left(i\right)+\eta\left(\mathcal{C}\left(v\right)\right)\sum_{s=1}^{d\left(v\right)}p_{t-s}^{\mathcal{C}\left(v\right)}\left(i\right)\hat{\ell}_{t-s}^{\mathcal{C}\left(v\right)}\left(i\right),
\end{align*}
which yields
\begin{align*}
R_{T}\left(v\right) & =\mathbb{E}\left[\sum_{t=1}^{T}\ell_{t}\left(I_{t}\left(v\right)\right)-\min_{i\in A}\sum_{t=1}^{T}\ell_{t}\left(i\right)\right]\\
 & =\mathbb{E}\left[\sum_{t=1}^{T}\sum_{i=1}^{K}p_{t}^{v}\left(i\right)\ell_{t}\left(i\right)-\min_{i\in A}\sum_{t=1}^{T}\ell_{t}\left(i\right)\right]\\
 & \le d\left(v\right)+\mathbb{E}\left[\sum_{t=d\left(v\right)}^{T}\sum_{i=1}^{K}p_{t}^{\mathcal{C}\left(v\right)}\left(i\right)\ell_{t}\left(i\right)-\min_{i\in A}\sum_{t=d\left(v\right)}^{T}\ell_{t}\left(i\right)\right]\\
 & +\eta\left(\mathcal{C}\left(v\right)\right)\mathbb{E}\left[\sum_{t=d\left(v\right)}^{T}\sum_{i=1}^{K}\sum_{s=1}^{d\left(v\right)}p_{t-s}^{\mathcal{C}\left(v\right)}\left(i\right)\hat{\ell}_{t-s}^{\mathcal{C}\left(v\right)}\left(i\right)\ell_{t}\left(i\right)\right]\\
 & \le R_{T}\left(\mathcal{C}\left(v\right)\right)+d\left(v\right)+d\left(v\right)\eta\left(\mathcal{C}\left(v\right)\right)T,
\end{align*}
where the last inequality is implied from
\[
\mathbb{E}_{t-s}\left[\sum_{i=1}^{K}p_{t-s}^{\mathcal{C}\left(v\right)}\left(i\right)\hat{\ell}_{t-s}^{\mathcal{C}\left(v\right)}\left(i\right)\ell_{t}\left(i\right)\right]\le\sum_{i=1}^{K}p_{t-s}^{\mathcal{C}\left(v\right)}\left(i\right)\ell_{t-s}\left(i\right)\le1.
\]
Hence, using Lemma \ref{lem:centersRegret} we get
\begin{align*}
R_{T}\left(v\right) & \le\left(4\sqrt{\frac{K}{M\left(\mathcal{C}\left(v\right)\right)}}+d\left(v\right)+\frac{d\left(v\right)}{2}\sqrt{\frac{M\left(\mathcal{C}\left(v\right)\right)}{K}}\right)\sqrt{\left(\ln K\right)T}\\
 & \le\left(4\sqrt{\frac{K}{M\left(\mathcal{C}\left(v\right)\right)}}+d\left(v\right)\sqrt{\frac{M\left(\mathcal{C}\left(v\right)\right)}{K}}\right)\sqrt{\left(\ln K\right)T}\\
 & \le7\sqrt{\left(\ln K\right)\frac{K}{M\left(v\right)}T},
\end{align*}
concluding our proof.
\end{proof}
This individual regret bound holds simultaneously for all agents in
the graph, and it depends only on the graph structure and components.

\subsection{\label{subsec:analyzingCentersToComponents}Analyzing Centers-to-Components}

We need to show the results of Centers-to-Components follow their
definitions, and the derived components satisfy all the properties
required by the center-based policy. We first show two helpful lemmas
that analyze the results of Centers-to-Components. In the following,
we denote $\Theta_{K}=\left\lfloor 12\ln K\right\rfloor $ and $\tau^{v}=\min_{c\in C}\mathrm{dist}_{G}\left(v,c\right)$
for any $v\in V$.
\begin{restatable}{lem}{RESTATEcentersToComponentsHelper}\label{lem:centersToComponentsHelper}
Let $\mathcal{C}_{t}\left(v\right),U_{t}\left(v\right),M_{t}\left(v\right)$
be the variables of agent $v$ at iteration $t$ from Centers-to-Components.
Then the following properties hold for all $1\le t\le\Theta_{K}+1$
and $v\in V\setminus C$:
\begin{enumerate}
\item $M_{t}\left(v\right)\ge M_{t-1}\left(v\right)$.
\item If $M_{t}\left(v\right)\neq M_{t-1}\left(v\right)$, then $\mathcal{C}_{t}\left(v\right)\neq\mathrm{nil}$
and
\[
M_{t}\left(v\right)=e^{-\frac{1}{6}t}M\left(\mathcal{C}_{t}\left(v\right)\right).
\]
Moreover, $t\ge\mathrm{dist}_{G}\left(v,\mathcal{C}_{t}\left(v\right)\right)$.
\item If $\tau^{v}\le\Theta_{K}+1$, then $M_{\tau^{v}}\left(v\right)\ge e^{-\frac{1}{6}\tau^{v}}$.
\item If $\tau^{v}\le6\ln K$, then $\mathcal{C}_{\Theta_{K}+1}\left(v\right)=\mathcal{C}_{\Theta_{K}}\left(v\right),M_{\Theta_{K}+1}\left(v\right)=M_{\Theta_{K}}\left(v\right)$.
\end{enumerate}
\end{restatable}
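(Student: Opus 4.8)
The four properties are intertwined, so the plan is to establish them in the listed order, each by induction on the iteration counter $t$, exploiting one structural observation about the algorithm: once the origin neighbor $U_t(v)$ of a non-center $v$ becomes a center, $v$ enters the \emph{else} branch and keeps all its values fixed forever, i.e.\ \emph{freezing is permanent}. Consequently, if $v$ is not frozen at iteration $t$ then it was not frozen at any earlier iteration, so the update rule reads $M_{s+1}(v)=e^{-1/6}\max_{v'\in\mathcal{N}(v)\setminus\{v\}}M_s(v')$ for every $s\le t$; this identity is the workhorse for all four parts. Property 1 I would prove in the stronger form $M_{t+1}(w)\ge M_t(w)$ for \emph{all} agents $w$ at once, by induction on $t$: centers have constant mass, a frozen non-center is unchanged, and for a non-center $v$ that is not frozen the workhorse identity together with the hypothesis $M_t(v')\ge M_{t-1}(v')$ gives $M_{t+1}(v)=e^{-1/6}\max_{v'}M_t(v')\ge e^{-1/6}\max_{v'}M_{t-1}(v')=M_t(v)$, the last equality using that $v$ was also unfrozen at $t-1$.

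Property 2 is the heart of the lemma, and I would again argue by induction on $t$. If $M_t(v)\neq M_{t-1}(v)$ then Property 1 makes the gap strict, so $v$ updated at iteration $t-1$ and $M_t(v)=e^{-1/6}M_{t-1}(u)$, $\mathcal{C}_t(v)=\mathcal{C}_{t-1}(u)$ for $u:=U_t(v)=\argmax_{v'}M_{t-1}(v')$. The key dichotomy is: either $u\in C$, in which case comparing $M_t(v)=e^{-1/6}M(u)$ with the value $v$ would already have held one step earlier (a center neighbor is available at every iteration) forces $t=1$, and the claim is immediate with $\mathrm{dist}_G(v,u)=1$; or $u\notin C$, in which case the strict increase $M_t(v)>M_{t-1}(v)$ forces the neighbor-maximum to strictly increase, hence $M_{t-1}(u)>M_{t-2}(u)$, so the inductive hypothesis applies to $u$ at iteration $t-1$ and yields $\mathcal{C}_{t-1}(u)\neq\mathrm{nil}$, $M_{t-1}(u)=e^{-(t-1)/6}M(\mathcal{C}_{t-1}(u))$, and $t-1\ge\mathrm{dist}_G(u,\mathcal{C}_{t-1}(u))$. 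Multiplying the mass identity by $e^{-1/6}$ and adding the single hop from $v$ to $u$ (triangle inequality) delivers all three conclusions for $v$ at $t$. The base case $t=1$ is direct, since only a center neighbor can make $M_1(v)$ positive.

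For Property 3 I would fix a nearest center $c^*$ with $\mathrm{dist}_G(v,c^*)=\tau^v$ and a shortest path $c^*=y_0,y_1,\dots,y_{\tau^v}=v$, and prove by induction on $j$ that $M_j(y_j)\ge e^{-j/6}M(c^*)$ (legitimate since $\tau^v\le\Theta_K+1$). The point is that minimality of $\tau^v$ forbids any $y_j$ with $j\ge1$ from having a center neighbor, because such a center would lie strictly closer to $v$ than $c^*$; hence each $y_{j+1}$ is unfrozen at iteration $j$, and the workhorse identity gives $M_{j+1}(y_{j+1})\ge e^{-1/6}M_j(y_j)$ as $y_j$ is an eligible neighbor. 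Combined with $M(c^*)=\min\{|\mathcal{N}(c^*)|,K\}\ge1$, this yields $M_{\tau^v}(v)\ge e^{-\tau^v/6}$.

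Property 4 splits into a mass part and a center part. For the mass, Property 3 and the monotonicity of Property 1 give $M_{\Theta_K}(v)\ge M_{\tau^v}(v)\ge e^{-\tau^v/6}\ge e^{-\ln K}=1/K$ (using $\tau^v\le6\ln K\le\Theta_K$); if instead $M_{\Theta_K+1}(v)\neq M_{\Theta_K}(v)$, then Property 2 at $t=\Theta_K+1$ forces $M_{\Theta_K+1}(v)=e^{-(\Theta_K+1)/6}M(\mathcal{C}_{\Theta_K+1}(v))\le e^{-(\Theta_K+1)/6}K<K^{-1}$, since $\Theta_K+1>12\ln K$, contradicting $M_{\Theta_K+1}(v)\ge M_{\Theta_K}(v)\ge1/K$. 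The center part, $\mathcal{C}_{\Theta_K+1}(v)=\mathcal{C}_{\Theta_K}(v)$, is where I expect the real difficulty, because equal masses do not by themselves pin down the center: two distinct centers of equal mass at equal distance may both be realized by neighbors of $v$. The plan is to show $v$ has already locked its origin neighbor by iteration $\Theta_K$: Property 2's exact formula shows the iteration $t^*$ at which $v$'s mass last changes obeys $e^{-t^*/6}M(\mathcal{C}_{t^*}(v))\ge1/K$ with $M(\cdot)\le K$, hence $t^*\le12\ln K$, and since $t^*$ is an integer, $t^*\le\lfloor12\ln K\rfloor=\Theta_K$; one then checks that the neighbor realizing the maximal mass (whose mass is $e^{1/6}$ times $v$'s and therefore finalizes exactly one step earlier) has stabilized by $t^*-1$, so that under a consistent tie-breaking rule in the $\argmax$ both $U_s(v)$ and $\mathcal{C}_s(v)=\mathcal{C}_{s-1}(U_s(v))$ are constant for $s\ge t^*$, in particular across the last step. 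Making this stabilization fully rigorous — propagating ``settledness'' down the origin chain to the center while controlling the tie-breaking — is the main obstacle, and it is exactly where the choice $\Theta_K=\lfloor12\ln K\rfloor$ (large enough that a possibly far but high-mass dominant center at distance up to $12\ln K$ is still reached by iteration $\Theta_K$) earns its keep.
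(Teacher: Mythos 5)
Your handling of properties 1--3 and of the mass half of property 4 is essentially the paper's own argument: monotonicity via the fact that $U_{t}\left(v\right)$ maximizes $M_{t-1}$ over neighbors; the exact formula $M_{t}\left(v\right)=e^{-\frac{1}{6}t}M\left(\mathcal{C}_{t}\left(v\right)\right)$ by induction keyed on a \emph{strict} mass increase (your step $M_{t-1}\left(U_{t}\left(v\right)\right)>M_{t-2}\left(U_{t-1}\left(v\right)\right)\ge M_{t-2}\left(U_{t}\left(v\right)\right)$ is exactly the paper's); a shortest-path induction for property 3; and the contradiction $M_{\Theta_{K}+1}\left(v\right)<\frac{1}{K}\le e^{-\frac{1}{6}\tau^{v}}\le M_{\tau^{v}}\left(v\right)$ against monotonicity for the mass statement in property 4. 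One small slip: in property 3 you assert that no $y_{j}$ with $j\ge1$ has a center neighbor, which is false for $y_{1}$ (it is adjacent to $c^{*}$); what you actually need, and what holds, is that $y_{1}$ is still unfrozen at iteration $0$ because $U_{0}\left(y_{1}\right)=\mathrm{nil}$, while $y_{j}$ for $j\ge2$ is never center-adjacent by minimality of $\tau^{v}$. This does not damage the induction.

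The genuine gap is the one you flag yourself: you do not prove $\mathcal{C}_{\Theta_{K}+1}\left(v\right)=\mathcal{C}_{\Theta_{K}}\left(v\right)$, only sketch a ``settledness propagation'' plan that you concede is not rigorous, so property 4 is only half established. The paper dispatches both halves with a single contradiction: it assumes that \emph{either} the mass \emph{or} the center changes at the last iteration, applies property 2 to get $M_{\Theta_{K}+1}\left(v\right)=e^{-\frac{1}{6}\left(\Theta_{K}+1\right)}M\left(\mathcal{C}_{\Theta_{K}+1}\left(v\right)\right)<\frac{1}{K}$, and contradicts $M_{\Theta_{K}+1}\left(v\right)\ge M_{\tau^{v}}\left(v\right)\ge\frac{1}{K}$. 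Your worry --- that the center could in principle change while the mass stays put, in which case property 2 as stated does not literally fire --- is a fair criticism of the paper's terseness, but the resolution is a one-line strengthening (any change produced by the update branch at the last iteration forces the exact-formula representation, or equivalently a tie-breaking rule that retains the previous $\argmax$ when it is still maximal makes a stable maximum imply a stable origin neighbor and hence a stable center), not the multi-stage stabilization argument down the origin chain that you outline and leave unfinished. As written, your proposal leaves the center-stability claim open, and that claim is exactly what Lemma \ref{lem:ifCloseThenLocallyGood} later consumes.
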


\begin{proof}
$ $
\begin{enumerate}
\item For center-adjacent agents this is immediate from the algorithm. Otherwise,
let $v$ be a simple agent. We proceed by induction over $t$. If
$t=1$, we have $M_{t}\left(v\right)=M_{1}\left(v\right)=M_{0}\left(v\right)=0$.
Assume for all $t>1$ and $v'\in V\setminus C$ that $M_{t-1}\left(v'\right)\ge M_{t-2}\left(v'\right)$.
Since we choose $U_{t}\left(v\right)$ to be the neighbor with maximal
mass at iteration $t-1$, for any $t>1$ we get
\begin{align*}
M_{t}\left(v\right) & =e^{-\frac{1}{6}}M_{t-1}\left(U_{t}\left(v\right)\right)\\
 & \ge e^{-\frac{1}{6}}M_{t-1}\left(U_{t-1}\left(v\right)\right)\\
 & \ge e^{-\frac{1}{6}}M_{t-2}\left(U_{t-1}\left(v\right)\right)\\
 & =M_{t-1}\left(v\right)
\end{align*}
as desired.
\item Again we proceed by induction on $t$. If $t=1$ and $M_{t}\left(v\right)=M_{1}\left(v\right)\neq M_{0}\left(v\right)=0$,
then $M_{0}\left(U_{1}\left(v\right)\right)\neq0$, and thus $U_{1}\left(v\right)=\mathcal{C}_{1}\left(v\right)\in C$.
Hence, $M_{t}\left(v\right)=M_{1}\left(v\right)=e^{-\frac{1}{6}t}M\left(\mathcal{C}_{1}\left(v\right)\right)$.
For any $t>1$, we assume the property is true for any $v'\in V\setminus C$
at iteration $t-1$. If $M_{t}\left(v\right)\neq M_{t-1}\left(v\right)$
we obtain from property 1 that $M_{t}\left(v\right)>M_{t-1}\left(v\right)$,
and thus
\[
e^{-\frac{1}{6}}M_{t-1}\left(U_{t}\left(v\right)\right)>e^{-\frac{1}{6}}M_{t-2}\left(U_{t-1}\left(v\right)\right).
\]
From the way $U_{t-1}\left(v\right)$ is chosen we get
\[
M_{t-1}\left(U_{t}\left(v\right)\right)>M_{t-2}\left(U_{t-1}\left(v\right)\right)\ge M_{t-2}\left(U_{t}\left(v\right)\right).
\]
Hence, $M_{t-1}\left(U_{t}\left(v\right)\right)\neq M_{t-2}\left(U_{t}\left(v\right)\right)$,
and from our assumption
\[
M_{t}\left(v\right)=e^{-\frac{1}{6}}M_{t-1}\left(U_{t}\left(v\right)\right)=e^{-\frac{1}{6}-\frac{1}{6}\left(t-1\right)}M\left(\mathcal{C}_{t-1}\left(U_{t}\left(v\right)\right)\right)=e^{-\frac{1}{6}t}M\left(\mathcal{C}_{t}\left(v\right)\right),
\]
where in the last equality we used the fact that $\mathcal{C}_{t}\left(v\right)=\mathcal{C}_{t-1}\left(U_{t}\left(v\right)\right)$.
We also get
\[
t-1\ge\mathrm{dist}_{G}\left(U_{t}\left(v\right),\mathcal{C}_{t-1}\left(U_{t}\left(v\right)\right)\right)=\mathrm{dist}_{G}\left(U_{t}\left(v\right),\mathcal{C}_{t}\left(v\right)\right).
\]
Hence, since $\mathrm{dist}_{G}\left(v,\mathcal{C}_{t}\left(v\right)\right)\le\mathrm{dist}_{G}\left(U_{t}\left(v\right),\mathcal{C}_{t}\left(v\right)\right)+1$,
we obtain $t\ge\mathrm{dist}_{G}\left(v,\mathcal{C}_{t}\left(v\right)\right)$
as desired.
\item We proceed by induction on $\tau^{v}$. If $\tau^{v}=\min_{c\in C}\mathrm{dist}_{G}\left(v,c\right)=1$,
then $v$ is center-adjacent and thus $\mathcal{C}_{1}\left(v\right)=\argmin_{c\in C}\mathrm{dist}_{G}\left(v,c\right)$,
which gives
\[
M_{\tau^{v}}\left(v\right)=M_{1}\left(v\right)=e^{-\frac{1}{6}}M\left(\mathcal{C}_{1}\left(v\right)\right)\ge e^{-\frac{1}{6}\tau^{v}}.
\]
Otherwise, let $v\in V\setminus C$ be a simple agent with $\tau^{v}>1$
and $c=\argmin_{c'\in C}\mathrm{dist}_{G}\left(v,c'\right)$. It must
have a neighbor $v'\in\mathcal{N}\left(v\right)$ such that $c=\argmin_{c'\in C}\mathrm{dist}_{G}\left(v',c'\right)$
and $\tau^{v'}=\mathrm{dist}_{G}\left(v',c\right)=\mathrm{dist}_{G}\left(v,c\right)-1=\tau^{v}-1$.
We assume the property is true for $v'$. From the way the origin
neighbor at iteration $\tau^{v}$ is chosen we obtain that
\begin{align*}
M_{\tau^{v}}\left(v\right) & =e^{-\frac{1}{6}}M_{\tau^{v}-1}\left(U_{\tau^{v}}\left(v\right)\right)\\
 & \ge e^{-\frac{1}{6}}M_{\tau^{v}-1}\left(v'\right)\\
 & =e^{-\frac{1}{6}}M_{\tau^{v'}}\left(v'\right)\\
 & \ge e^{-\frac{1}{6}\left(1+\tau^{v'}\right)}\\
 & =e^{-\frac{1}{6}\tau^{v}},
\end{align*}
as desired.
\item Assuming to the contrary $M_{\Theta_{K}+1}\left(v\right)\neq M_{\Theta_{K}}\left(v\right)$
(or $\mathcal{C}_{\Theta_{K}+1}\left(v\right)\neq\mathcal{C}_{\Theta_{K}}\left(v\right)$),
we get
\begin{align*}
M_{\Theta_{K}+1}\left(v\right) & =e^{-\frac{1}{6}\left(\Theta_{K}+1\right)}M\left(\mathcal{C}_{\Theta_{K}+1}\left(v\right)\right) & \text{(property 2)}\\
 & <\frac{1}{K}\\
 & \le e^{-\frac{1}{6}\tau^{v}} & (\tau^{v}\le6\ln K)\\
 & \le M_{\tau^{v}}\left(v\right) & \text{(property 3)}\\
 & \le M_{\left\lceil 6\ln K\right\rceil }\left(v\right), & \text{(property 1)}
\end{align*}
contradicting property 1 and concluding our proof.
\end{enumerate}
\end{proof}
\begin{restatable}{lem}{RESTATEifCloseThenLocallyGood}\label{lem:ifCloseThenLocallyGood}
Let $\mathcal{C}\left(v\right),U\left(v\right),M\left(v\right)$
be the results of Centers-to-Components. Then the following properties
hold for all simple agents $v\in V\setminus C$ such that $2\le\min_{c\in C}\mathrm{dist}_{G}\left(v,c\right)\le6\ln K-1$:
\begin{enumerate}
\item $\mathcal{C}\left(v\right)\neq\mathrm{\mathrm{nil}}$ and $U\left(v\right)\neq\mathrm{\mathrm{nil}}$.
\item $M\left(v\right)=e^{-\frac{1}{6}}M\left(U\left(v\right)\right)$.
\item $\mathcal{C}\left(v\right)=\mathcal{C}\left(U\left(v\right)\right)$.
\end{enumerate}
\end{restatable}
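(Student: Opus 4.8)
The plan is to read off the three properties directly from the update rules of Centers-to-Components at the final iteration, using Lemma~\ref{lem:centersToComponentsHelper} to control both $v$ and its origin neighbor simultaneously. Throughout I write $\Theta_K=\lfloor 12\ln K\rfloor$ and recall that the returned values are $\mathcal C(v)=\mathcal C_{\Theta_K+1}(v)$, $U(v)=U_{\Theta_K+1}(v)$ and $M(v)=M_{\Theta_K+1}(v)$. Since $2\le\tau^v\le 6\ln K-1$, both property~3 (which needs $\tau^v\le\Theta_K+1$) and property~4 (which needs $\tau^v\le 6\ln K$) of Lemma~\ref{lem:centersToComponentsHelper} apply to $v$; in particular, from property~3 together with the monotonicity of property~1 we get $M(v)\ge M_{\tau^v}(v)\ge e^{-\tau^v/6}>0$, which I will use repeatedly.

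First I would pin down the final update. For any non-center agent the origin neighbor is always drawn from $\mathcal N(v)\setminus\{v\}$ once the loop has run at least one iteration, so $U_{\Theta_K}(v)\in\mathcal N(v)\setminus\{v\}$; were it a center, $v$ would be adjacent to a center and hence $\tau^v=1$, contradicting $\tau^v\ge 2$. Thus $U_{\Theta_K}(v)\notin C$, the ``if'' branch is taken at iteration $\Theta_K$, and
\[
U(v)=\argmax_{v'\in\mathcal N(v)\setminus\{v\}}M_{\Theta_K}(v'),\qquad \mathcal C(v)=\mathcal C_{\Theta_K}(U(v)),\qquad M(v)=e^{-1/6}M_{\Theta_K}(U(v)).
\]
The same neighbor/distance argument applied to the returned $U(v)$ shows $U(v)\notin C$ (otherwise $\tau^v=1$ again), so $U(v)\in V\setminus C$; and since it is a genuine vertex chosen by an $\argmax$ over the nonempty set $\mathcal N(v)\setminus\{v\}$, we have $U(v)\neq\mathrm{nil}$.

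The crux, which I expect to be the main obstacle, is to replace the penultimate-iteration values of $U(v)$ in the display by its final values, i.e.\ to argue that the origin neighbor has itself already stabilised. Here I would use $U(v)\in\mathcal N(v)$, which gives $\tau^{U(v)}\le\tau^v+1\le 6\ln K$, so that property~4 of Lemma~\ref{lem:centersToComponentsHelper} applies to the non-center $U(v)$ and yields $\mathcal C_{\Theta_K+1}(U(v))=\mathcal C_{\Theta_K}(U(v))$ and $M_{\Theta_K+1}(U(v))=M_{\Theta_K}(U(v))$, that is $\mathcal C_{\Theta_K}(U(v))=\mathcal C(U(v))$ and $M_{\Theta_K}(U(v))=M(U(v))$. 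Substituting these into the display produces exactly properties~2 and~3, namely $M(v)=e^{-1/6}M(U(v))$ and $\mathcal C(v)=\mathcal C(U(v))$. It is precisely the distance bound $\tau^{U(v)}\le 6\ln K$ (and hence the hypothesis $\tau^v\le 6\ln K-1$) that lets me invoke property~4 on the neighbor.

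Finally, for the remaining part of property~1 ($\mathcal C(v)\neq\mathrm{nil}$) I would record the elementary invariant $\mathcal C_t(w)=\mathrm{nil}\iff M_t(w)=0$ for every agent $w$. It holds at $t=0$ (a center has $\mathcal C_0(w)=w\neq\mathrm{nil}$ and $M_0(w)=\min\{|\mathcal N(w)|,K\}>0$, while a non-center has both $\mathrm{nil}$ and $0$), and it is preserved by the update $\mathcal C_{t+1}(w)=\mathcal C_t(U_{t+1}(w))$, $M_{t+1}(w)=e^{-1/6}M_t(U_{t+1}(w))$, since both are governed by the same selected neighbor $U_{t+1}(w)$ (and by copying in the ``else'' branch). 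Because $M(v)>0$, this invariant forces $\mathcal C(v)\neq\mathrm{nil}$, which together with $U(v)\neq\mathrm{nil}$ completes property~1 and the proof.
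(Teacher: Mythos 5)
Your proof is correct and follows essentially the same route as the paper's: property 3 (plus monotonicity) of Lemma \ref{lem:centersToComponentsHelper} gives $M(v)>0$ and hence non-nil values, and the bound $\tau^{U(v)}\le\tau^v+1\le 6\ln K$ lets you invoke property 4 on the origin neighbor to identify its iteration-$\Theta_K$ and iteration-$(\Theta_K+1)$ values. The extra details you supply (that the ``if'' branch is taken because $\tau^v\ge 2$ rules out a center neighbor, and the explicit $\mathcal{C}_t(w)=\mathrm{nil}\iff M_t(w)=0$ invariant) are just elaborations of what the paper dismisses as following ``from the algorithm.''
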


\begin{proof}
Let $v\in V$ be a simple agent such that $\tau^{v}=\min_{c\in C}\mathrm{dist}_{G}\left(v,c\right)\le6\ln K-1$.
\begin{enumerate}
\item We have
\begin{align*}
M\left(v\right) & =M_{\Theta_{K}+1}\left(v\right)\\
 & \ge M_{\tau^{v}}\left(v\right) & \text{(property 1 of Lemma \ref{lem:centersToComponentsHelper})}\\
 & >0, & \text{(property 3 of Lemma \ref{lem:centersToComponentsHelper})}
\end{align*}
and thus it follows from the algorithm that $\mathcal{C}\left(v\right)\neq\mathrm{\mathrm{nil}}$
and $U\left(v\right)\neq\mathrm{\mathrm{nil}}$ as desired.
\item Since $\tau^{U\left(v\right)}\le\tau^{v}+1\le6\ln K$, we get:
\begin{align*}
M\left(v\right) & =M_{\Theta_{K}+1}\left(v\right)\\
 & =e^{-\frac{1}{6}}M_{\Theta_{K}}\left(U\left(v\right)\right) & \text{(from the algorithm)}\\
 & =e^{-\frac{1}{6}}M_{\Theta_{K}+1}\left(U\left(v\right)\right) & \text{(property 4 of Lemma \ref{lem:centersToComponentsHelper})}\\
 & =e^{-\frac{1}{6}}M\left(U\left(v\right)\right).
\end{align*}
\item Using property 4 of Lemma \ref{lem:centersToComponentsHelper} again,
we obtain
\[
\mathcal{C}\left(v\right)=\mathcal{C}_{\Theta_{K}}\left(U\left(v\right)\right)=\mathcal{C}_{\Theta_{K}+1}\left(U\left(v\right)\right)=\mathcal{C}\left(U\left(v\right)\right).
\]
\end{enumerate}
\end{proof}
The next lemma shows that all simple agents choose the best possible
agent as their origin neighbor.
\begin{restatable}{lem}{RESTATEneighborsSameMass}\label{lem:neighborsSameMass}
Let $U\left(v\right)$ and $M\left(v\right)$
be the results of Centers-to-Components. Then for all simple agents
$v\in V\setminus C$ such that $2\le\min_{c\in C}\mathrm{dist}_{G}\left(v,c\right)\le6\ln K-1$:
\[
U\left(v\right)=\argmax_{v'\in\mathcal{N}\left(v\right)}M\left(v'\right)
\]
\end{restatable}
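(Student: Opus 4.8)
The plan is to reconcile two small discrepancies between what Centers-to-Components actually computes and what the statement asks for. The algorithm sets $U\left(v\right)=U_{\Theta_{K}+1}\left(v\right)=\argmax_{v'\in\mathcal{N}\left(v\right)\setminus\left\{ v\right\} }M_{\Theta_{K}}\left(v'\right)$ (the update performed at the final iteration $t=\Theta_{K}$; note that for our $v$ the branch condition $U_{t}\left(v\right)\notin C$ always holds, as shown below), whereas the lemma asks for $\argmax_{v'\in\mathcal{N}\left(v\right)}M\left(v'\right)$ with $M\left(v'\right)=M_{\Theta_{K}+1}\left(v'\right)$. So I must show (a) that the masses $M_{\Theta_{K}}$ and $M=M_{\Theta_{K}+1}$ agree on the whole neighborhood $\mathcal{N}\left(v\right)$, and (b) that adjoining $v$ itself to the optimization set does not change the maximizer.

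For (a), first observe that since $\tau^{v}=\min_{c\in C}\mathrm{dist}_{G}\left(v,c\right)\ge2$, no agent of $\mathcal{N}\left(v\right)$ is a center: a center neighbor $v'$ would give $\mathrm{dist}_{G}\left(v,v'\right)=1$ and force $\tau^{v}\le1$, while $v$ itself is a simple agent. Hence every $v'\in\mathcal{N}\left(v\right)$ lies in $V\setminus C$. Moreover each such $v'$ satisfies $\tau^{v'}\le\tau^{v}+1\le6\ln K$, so property 4 of Lemma \ref{lem:centersToComponentsHelper} applies and yields $M\left(v'\right)=M_{\Theta_{K}+1}\left(v'\right)=M_{\Theta_{K}}\left(v'\right)$; this also covers $v'=v$, since $\tau^{v}\le6\ln K-1$. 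Consequently $\argmax_{v'\in\mathcal{N}\left(v\right)}M\left(v'\right)=\argmax_{v'\in\mathcal{N}\left(v\right)}M_{\Theta_{K}}\left(v'\right)$.

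For (b), I invoke property 2 of Lemma \ref{lem:ifCloseThenLocallyGood}, which gives $M\left(v\right)=e^{-\frac{1}{6}}M\left(U\left(v\right)\right)<M\left(U\left(v\right)\right)$, hence $M_{\Theta_{K}}\left(v\right)<M_{\Theta_{K}}\left(U\left(v\right)\right)$ by part (a). Therefore $v$ is never a maximizer over $\mathcal{N}\left(v\right)$, so deleting it from the domain leaves the argmax unchanged, and $\argmax_{v'\in\mathcal{N}\left(v\right)}M_{\Theta_{K}}\left(v'\right)=\argmax_{v'\in\mathcal{N}\left(v\right)\setminus\left\{ v\right\} }M_{\Theta_{K}}\left(v'\right)=U\left(v\right)$. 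Chaining the two equalities proves the claim.

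I expect the main obstacle to be the bookkeeping around the off-by-one in the iteration index: recognizing that the algorithm's final choice of $U\left(v\right)$ rests on the iteration-$\Theta_{K}$ masses while the statement speaks of the returned masses $M_{\Theta_{K}+1}$, and noticing that this gap is exactly what property 4 (mass stabilization at the last step for agents with $\tau^{v'}\le6\ln K$) is designed to close. Everything else is a short argument once the neighborhood is confirmed to consist entirely of non-center agents within the distance range where the helper lemmas apply.
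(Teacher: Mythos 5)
Your proof is correct and follows essentially the same route as the paper's: invoke property 4 of Lemma \ref{lem:centersToComponentsHelper} (via $\tau^{v'}\le\tau^{v}+1\le6\ln K$) to identify the iteration-$\Theta_{K}$ masses with the returned masses on the neighborhood, then read off the claim from the algorithm's argmax update. You are in fact slightly more careful than the paper, which silently passes over the two points you handle explicitly --- that $\mathcal{N}\left(v\right)$ contains no centers and that including $v$ itself in the argmax domain is harmless since $M\left(v\right)<M\left(U\left(v\right)\right)$.
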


\begin{proof}
Simple agents choose their origin neighbor to be the one with maximal
mass at iteration $\Theta_{K}=\left\lfloor 12\ln K\right\rfloor $.
In addition, since $\tau^{v}=\min_{c\in C}\mathrm{dist}_{G}\left(v,c\right)\le6\ln K-1$
we obtain $\tau^{U\left(v\right)}\le\tau^{v}+1\le6\ln K$, so we can
use property 4 of Lemma \ref{lem:centersToComponentsHelper} and get:
\[
M\left(U\left(v\right)\right)=M_{\Theta_{K}+1}\left(U\left(v\right)\right)=M_{\Theta_{K}}\left(U\left(v\right)\right)\ge M_{\Theta_{K}}\left(v'\right)=M_{\Theta_{K}+1}\left(v'\right)=M\left(v'\right)
\]
as desired.
\end{proof}
The following lemma shows the required properties of Centers-to-Components
under some requirements from the center set $C$.
\begin{restatable}{lem}{RESTATEcentersToComponentsCorrect}\label{lem:centersToComponentsCorrect}
Let $C\subseteq V$ be a center
set that is $2$-independent, such that every $v\in V$ holds $\min_{c\in C}\mathrm{dist}_{G}\left(v,c\right)\le6\ln K-1$.
Let $\mathcal{C}\left(v\right),U\left(v\right),M\left(v\right)$ be
the results of Centers-to-Components. For each $c\in C$, let $V_{c}$
be its corresponding component, namely, $V_{c}=\left\{ v\in V\mid\mathcal{C}\left(v\right)=c\right\} $.
Then the following properties are satisfied:
\begin{enumerate}
\item $\left\{ V_{c}\mid c\in C\right\} $ are pairwise disjoint and $V=\bigcup_{c\in C}V_{c}$.
\item $\mathcal{N}\left(c\right)\subseteq V_{c}$ and $G_{c}$ is connected
for all $c\in C$.
\item $M\left(v\right)=e^{-\frac{1}{6}d\left(v\right)}M\left(\mathcal{C}\left(v\right)\right)$
and $U\left(v\right)=\argmin_{v'\in\mathcal{N}\left(v\right)\cap V_{\mathcal{C}\left(v\right)}}d\left(v'\right)$
for all $v\in V\setminus C$.
\end{enumerate}
\end{restatable}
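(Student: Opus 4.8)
The plan is to verify the three properties in order, using the $2$-independence of $C$ and the hypothesis $\tau^{v}\le 6\ln K-1$ (valid for \emph{every} $v\in V$) as the two global inputs, and otherwise leaning on Lemmas \ref{lem:centersToComponentsHelper}, \ref{lem:ifCloseThenLocallyGood} and \ref{lem:neighborsSameMass}. For Property 1, disjointness is immediate since $\mathcal{C}\left(v\right)$ is a single well-defined value, so each $v$ lies in at most one $V_{c}$; the covering $V=\bigcup_{c}V_{c}$ reduces to showing $\mathcal{C}\left(v\right)\neq\mathrm{nil}$ for every $v$. Centers satisfy $\mathcal{C}\left(c\right)=c$ by initialization. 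For a non-center $v$ the hypothesis gives $1\le\tau^{v}\le6\ln K-1$: when $\tau^{v}\ge2$ this is exactly part 1 of Lemma \ref{lem:ifCloseThenLocallyGood}, and when $\tau^{v}=1$ (center-adjacent) I argue directly that at iteration $0$ the only neighbor of positive mass is the adjacent center, so $v$ locks onto it.

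For Property 2 I would first invoke $2$-independence: if some $v$ were adjacent to two distinct centers $c_{1},c_{2}$, then $\mathrm{dist}_{G}\left(c_{1},c_{2}\right)\le2$, contradicting $\mathrm{dist}_{G}\ge3$. Hence each center-adjacent agent has a unique center neighbor and locks onto it at iteration $0$, which yields $\mathcal{N}\left(c\right)\subseteq V_{c}$. Connectivity of $G_{c}$ I would obtain from the origin-neighbor chain $v,U\left(v\right),U\left(U\left(v\right)\right),\dots$: part 3 of Lemma \ref{lem:ifCloseThenLocallyGood} keeps $\mathcal{C}\left(U\left(v\right)\right)=\mathcal{C}\left(v\right)$, so the chain stays inside $V_{c}$, while part 2 gives $M\left(U\left(v\right)\right)=e^{1/6}M\left(v\right)$, so the mass strictly increases along the chain. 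Being bounded by $M\left(c\right)$, the chain must reach the unique fixed point $U\left(c\right)=c$, exhibiting a path from each $v\in V_{c}$ to $c$ within $G_{c}$.

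For Property 3 I would let $g\left(v\right)$ denote the length of this origin-neighbor chain to $c$; iterating part 2 of Lemma \ref{lem:ifCloseThenLocallyGood} gives $M\left(v\right)=e^{-g\left(v\right)/6}M\left(c\right)$, so the mass formula reduces to $g\left(v\right)=d\left(v\right)$ with $d\left(v\right)=\mathrm{dist}_{G_{c}}\left(v,c\right)$. The inequality $d\left(v\right)\le g\left(v\right)$ holds because the chain is a path in $G_{c}$. For $g\left(v\right)\le d\left(v\right)$ I would induct on $d\left(v\right)$: taking a component-neighbor $w$ with $d\left(w\right)=d\left(v\right)-1$, Lemma \ref{lem:neighborsSameMass} makes $U\left(v\right)$ the maximal-mass neighbor in $\mathcal{N}\left(v\right)$, so $M\left(v\right)=e^{-1/6}M\left(U\left(v\right)\right)\ge e^{-1/6}M\left(w\right)=e^{-\left(g\left(w\right)+1\right)/6}M\left(c\right)$, whence $g\left(v\right)\le g\left(w\right)+1\le d\left(v\right)$ by the inductive hypothesis. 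The origin characterization $U\left(v\right)=\argmin_{v'\in\mathcal{N}\left(v\right)\cap V_{c}}d\left(v'\right)$ then follows because $U\left(v\right)\in V_{c}$ is the maximal-mass neighbor, and inside $V_{c}$ maximal mass is equivalent to minimal $d$ via $M\left(\cdot\right)=e^{-d\left(\cdot\right)/6}M\left(c\right)$.

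The main obstacle I anticipate is Property 3, specifically establishing $g\left(v\right)=d\left(v\right)$: I must ensure the greedy maximal-mass chain tracks a \emph{shortest} path to $c$ in $G_{c}$ rather than a longer one, and I must check that every agent used in the induction (in particular the BFS-neighbor $w$, and $v$ with $d\left(v\right)\ge2$, which forces $\tau^{v}\ge2$ since $\tau^{v}=1$ would make $v$ center-adjacent with $d\left(v\right)=1$) lies in the regime $2\le\tau\le6\ln K-1$ where Lemmas \ref{lem:ifCloseThenLocallyGood} and \ref{lem:neighborsSameMass} apply. The hypothesis $\tau^{v}\le6\ln K-1$ for all $v$ is exactly what makes this requirement uniform across the chain.
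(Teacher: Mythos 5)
Your proposal is correct and follows essentially the same route as the paper's proof: disjointness plus Lemma \ref{lem:ifCloseThenLocallyGood} for covering, the strictly-increasing-mass origin-neighbor chain (which must hit a center since agents on it are distinct and $V$ is finite) for connectivity, and a two-sided comparison between the chain length and $\mathrm{dist}_{G_{c}}\left(v,c\right)$ via induction on $d\left(v\right)$ and Lemma \ref{lem:neighborsSameMass} for the mass formula and the $\argmin$ characterization. Your explicit handling of the $\tau^{v}=1$ and $d\left(v\right)\ge2\Rightarrow\tau^{v}\ge2$ edge cases is a minor (welcome) elaboration of steps the paper treats as immediate from the algorithm.
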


\begin{proof}
$ $
\begin{enumerate}
\item The components are trivially disjoint from the way we defined them.
Since for any $v\in V$ we assume $\tau^{v}=\min_{c\in C}\mathrm{dist}_{G}\left(v,c\right)\le6\ln K-1$,
we obtain from property 1 of Lemma \ref{lem:ifCloseThenLocallyGood}
that $\mathcal{C}\left(v\right)\neq\mathrm{\mathrm{nil}}$ and $v\in\bigcup_{c\in C}V_{c}$
as desired.
\item Since $C$ is 2-independent, it directly follows from the algorithm
that $\mathcal{N}\left(c\right)\subseteq V_{c}$ for all $c\in C$.
Now, let $v\in V$. For a path of connected agents $v=u_{0},\dots,u_{m}$
such that $u_{i+1}=U\left(u_{i}\right)$ for any $i<m$, we get from
property 3 of Lemma \ref{lem:ifCloseThenLocallyGood} that $\mathcal{C}\left(u_{i}\right)=\mathcal{C}\left(v\right)$
for all $i$. From property 2 of Lemma \ref{lem:ifCloseThenLocallyGood}
we also obtain that $M\left(u_{i}\right)<M\left(u_{i+1}\right)$ for
all $i<m$ such that $u_{i}\notin C$, and thus all non-center agents
on the path must be different. Hence, if $m\ge N$ we obtain that
there must be a center $u$ on the path, and since $u=\mathcal{C}\left(u\right)=\mathcal{C}\left(v\right)$,
we get that $\mathcal{C}\left(v\right)$ must be connected to $v$.
We obtain that all agents are connected to their center, and thus
$G_{c}$ is connected for all $c\in C$ as claimed.
\item We proceed by induction on $d\left(v\right)=\mathrm{dist}_{G_{\mathcal{C}\left(v\right)}}\left(v,\mathcal{C}\left(v\right)\right)$.
If $d\left(v\right)=1$ (i.e., $v$ is center-adjacent), the statement
trivially follows from the algorithm. Otherwise, we assume the statement
is true for all $v'\in V\setminus C$ such that $d\left(v'\right)<d\left(v\right)$.
Since $G_{C\left(v\right)}$ is connected from property 2, there must
be some $v'\in\mathcal{N}\left(v\right)\cap V_{\mathcal{C}\left(v\right)}$
such that $d\left(v\right)=d\left(v'\right)+1$, and thus we get from
the induction assumption that $M\left(v'\right)=e^{-\frac{1}{6}d\left(v'\right)}M\left(\mathcal{C}\left(v\right)\right)$.
From Lemma \ref{lem:neighborsSameMass}, we get that $M\left(U\left(v\right)\right)\ge M\left(v'\right)$,
and using property 2 of Lemma \ref{lem:ifCloseThenLocallyGood} we
obtain 
\begin{equation}
M\left(v\right)\ge e^{-\frac{1}{6}}M\left(v'\right)=e^{-\frac{1}{6}\left(d\left(v'\right)+1\right)}M\left(\mathcal{C}\left(v\right)\right)=e^{-\frac{1}{6}d\left(v\right)}M\left(\mathcal{C}\left(v\right)\right).\label{eq:massLargerThanExpDist}
\end{equation}
As before, from Lemma \ref{lem:ifCloseThenLocallyGood} there is a
path $v=u_{0},\dots,u_{m}=\mathcal{C}\left(v\right)$ from $v$ to
its center such that $U\left(u_{i}\right)=u_{i+1}$ for any $i<m$
and $\mathcal{C}\left(u_{i}\right)=\mathcal{C}\left(v\right)$ for
all $i$. We must have $m\ge\mathrm{dist}_{G_{\mathcal{C}\left(v\right)}}\left(v,\mathcal{C}\left(v\right)\right)=d\left(v\right)$,
and using property 2 of Lemma \ref{lem:ifCloseThenLocallyGood} iteratively
we get
\[
M\left(v\right)=e^{-\frac{1}{6}}M\left(u_{1}\right)=\cdots=e^{-\frac{1}{6}m}M\left(\mathcal{C}\left(v\right)\right)\le e^{-\frac{1}{6}d\left(v\right)}M\left(\mathcal{C}\left(v\right)\right).
\]
Combining with Eq. (\ref{eq:massLargerThanExpDist}) we get $M\left(v\right)=e^{-\frac{1}{6}d\left(v\right)}M\left(\mathcal{C}\left(v\right)\right)$
as desired. From property 3 of Lemma \ref{lem:ifCloseThenLocallyGood}
we have $U\left(v\right)\in\mathcal{N}\left(v\right)\cap V_{\mathcal{C}\left(v\right)}$,
and using Lemma \ref{lem:neighborsSameMass} we get
\begin{align*}
U\left(v\right) & =\argmax_{v'\in\mathcal{N}\left(v\right)\cap V_{\mathcal{C}\left(v\right)}}M\left(v'\right)\\
 & =\argmax_{v'\in\mathcal{N}\left(v\right)\cap V_{\mathcal{C}\left(v\right)}}e^{-\frac{1}{6}d\left(v'\right)}M\left(\mathcal{C}\left(v\right)\right)\\
 & =\argmin_{v'\in\mathcal{N}\left(v\right)\cap V_{\mathcal{C}\left(v\right)}}d\left(v'\right),
\end{align*}
concluding our proof.
\end{enumerate}
\end{proof}

\subsection{\label{subsec:analyzingInformed}Analyzing Compute-Centers-Informed}

The first thing we need to show is that the center set returned by
Compute-Centers-Informed satisfies the conditions of Lemma \ref{lem:centersToComponentsCorrect}:
\begin{restatable}{lem}{RESTATEcomputeCentersInformedCorrect}\label{lem:computeCentersInformedCorrect}
Let $C\subseteq V$ be the
center set returned by Compute-Centers-Informed. Then:
\begin{enumerate}
\item $C$ is $2$-independent.
\item For all $v\in V$, $\min_{c\in C}\mathrm{dist}_{G}\left(v,c\right)\le6\ln K-1.$
\end{enumerate}
\end{restatable}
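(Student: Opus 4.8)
The plan is to treat the two claims separately, reading each off the invariants maintained by the while-loop of Compute-Centers-Informed together with the helper properties of Centers-to-Components in Lemma \ref{lem:centersToComponentsHelper}. Throughout I write $\tau^{v}=\min_{c\in C}\mathrm{dist}_{G}\left(v,c\right)$ and work with the final run of Centers-to-Components, i.e. with center set $C=C_{t^{\ast}}$, so that the returned masses $M\left(v\right)$ are exactly those produced by that run.

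For the $2$-independence (claim 1) I would track the order $c_{0},c_{1},\dots$ in which centers are inserted. The first center $c_{0}$ is drawn from $S_{0}=V$, but every later center $c_{t}$ is drawn from the unsatisfied set $S_{t}$, whose definition forces $\min_{c\in C_{t}}\mathrm{dist}_{G}\left(c_{t},c\right)\ge3$; that is, each newly added center lies at distance at least $3$ from all previously added ones. Hence every pair of centers is at graph-distance $\ge3$, which by Definition \ref{def:independentSet} is precisely $2$-independence. I would also note the loop terminates: each iteration adds one vertex to $C\subseteq V$, and once every agent lies within distance $2$ of a center the set $S$ is empty, so the returned $C=C_{t^{\ast}}$ is produced with $S_{t^{\ast}}=\emptyset$.

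For the distance bound (claim 2) the leverage is exactly that $S=\emptyset$ at return. Any agent with $\tau^{v}\le2$ (in particular every center, where $\tau^{v}=0$) satisfies $\tau^{v}\le2\le6\ln K-1$ outright, using $K\ge2$. So fix an agent with $\tau^{v}\ge3$; since $v\notin S$ while the distance clause $\tau^{v}\ge3$ holds, the mass clause must fail, giving $M\left(v\right)\ge\min\left\{\left|\mathcal{N}\left(v\right)\right|,K\right\}$. Because the graph is connected with $N\ge2$, such a $v$ has a neighbor, so $\left|\mathcal{N}\left(v\right)\right|\ge2$ and therefore $M\left(v\right)\ge2>0$.

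The remaining step converts this mass lower bound into a distance upper bound, and I expect it to be the main obstacle. Since $M\left(v\right)>0$, I would isolate the last iteration $t_{0}$ of Centers-to-Components at which $v$'s mass changes ($M_{t_{0}}\left(v\right)\neq M_{t_{0}-1}\left(v\right)$); property 1 of Lemma \ref{lem:centersToComponentsHelper} makes the value stable afterward, so $\mathcal{C}\left(v\right)=\mathcal{C}_{t_{0}}\left(v\right)$ and $M\left(v\right)=M_{t_{0}}\left(v\right)$, and property 2 gives $M\left(v\right)=e^{-\frac{1}{6}t_{0}}M\left(\mathcal{C}\left(v\right)\right)$ with $t_{0}\ge\mathrm{dist}_{G}\left(v,\mathcal{C}\left(v\right)\right)\ge\tau^{v}$. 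Using $M\left(\mathcal{C}\left(v\right)\right)\le K$ then yields $M\left(v\right)\le e^{-\frac{1}{6}\tau^{v}}K$, and combining with $M\left(v\right)\ge2$ gives $e^{\frac{1}{6}\tau^{v}}\le K/2$, hence $\tau^{v}\le6\ln K-6\ln2\le6\ln K-1$. The two delicate points are the bookkeeping that pins down the last mass-changing iteration so that property 2 applies cleanly, and the reliance on $M\left(v\right)\ge2$ rather than merely $M\left(v\right)\ge1$, which is exactly what secures the sharper $6\ln K-1$ in place of $6\ln K$.
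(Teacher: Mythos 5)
Your proposal is correct and follows essentially the same route as the paper's proof: claim 1 from the constraint $\min_{c\in C_{t}}\mathrm{dist}_{G}\left(c_{t},c\right)\ge3$ built into the unsatisfied set, and claim 2 from emptiness of $S$ at termination combined with properties 1 and 2 of Lemma \ref{lem:centersToComponentsHelper} to turn the bound $M\left(v\right)\ge2$ into $2\le e^{-\frac{1}{6}\tau^{v}}K$. Your extra bookkeeping of the last mass-changing iteration is just a more explicit rendering of the same step the paper performs implicitly.
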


\begin{proof}
$ $
\begin{enumerate}
\item The statement follows directly from the fact the agent $v$ that is
added to the center set at iteration $t$ holds $\min_{c\in C_{t}}\mathrm{dist}_{G}\left(v,c\right)\ge3$.
\item When the algorithm terminates there are no unsatisfied agents. Hence,
for all $v\in V$, either $\min_{c\in C}\mathrm{dist}_{G}\left(v,c\right)\le2$,
in which case we are done, or $M\left(v\right)\ge\min\left\{ \left|\mathcal{N}\left(v\right)\right|,K\right\} \ge2$.
In the latter case we obtain from properties 1 and 2 of Lemma \ref{lem:centersToComponentsHelper}:
\[
2\le M\left(v\right)=\exp\left(-\frac{1}{6}\mathrm{dist}_{G}\left(v,\mathcal{C}\left(v\right)\right)\right)M\left(\mathcal{C}\left(v\right)\right)\le\exp\left(-\frac{1}{6}\mathrm{dist}_{G}\left(v,\mathcal{C}\left(v\right)\right)\right)K,
\]
and thus $\min_{c\in C}\mathrm{dist}_{G}\left(v,c\right)\le6\ln K-1$
as desired.
\end{enumerate}
\end{proof}
Now, we can show that by using our informed graph partitioning algorithms,
the mass of all agents is large:
\begin{restatable}{thm}{RESTATEinformedPartitioningLargeMass}\label{thm:informedPartitioningLargeMass}
Let $C\subseteq V$ be the
center set returned by Compute-Centers-Informed, and let $\left\{ V_{c}\subseteq V\mid c\in C\right\} $
be the components resulted from Centers-to-Components. For every $v\in V$:
\[
M\left(v\right)\ge e^{-1}\min\left\{ \left|\mathcal{N}\left(v\right)\right|,K\right\} .
\]
\end{restatable}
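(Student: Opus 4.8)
The plan is to combine the termination condition of Compute-Centers-Informed with the greedy (maximum-degree) rule used to pick centers. First I would invoke Lemma~\ref{lem:computeCentersInformedCorrect} to guarantee that the returned center set $C$ is $2$-independent and that $\min_{c\in C}\mathrm{dist}_{G}(v,c)\le 6\ln K-1$ for every $v$; this lets me apply Lemma~\ref{lem:centersToComponentsCorrect}, so the components are well defined and the mass satisfies $M(v)=e^{-d(v)/6}M(\mathcal{C}(v))$ with $M(\mathcal{C}(v))=\min\left\{\left|\mathcal{N}(\mathcal{C}(v))\right|,K\right\}$. Writing $\tau^{v}=\min_{c\in C}\mathrm{dist}_{G}(v,c)$, the while-loop of Algorithm~\ref{alg:comupteCentersInformed} exits only when no unsatisfied agent remains, so every $v$ falls into one of two cases: either (A) $M(v)\ge\min\left\{\left|\mathcal{N}(v)\right|,K\right\}$, or (B) $\tau^{v}\le 2$. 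Case (A), which also covers all centers, is immediate since $e^{-1}\le 1$.

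The work is in Case (B), where I must relate the mass of a nearby center to $v$'s own degree. Let $c$ be the first center added within distance $2$ of $v$, say at round $t^{\ast}$, and let me first argue that $v$ is still unsatisfied at that round, i.e.\ $v\in S_{t^{\ast}}$. Before round $t^{\ast}$ no center lies within distance $2$ of $v$, so the proximity part of the satisfaction test fails; hence $v$ could only have left $S$ earlier by having $M_{t}(v)\ge\min\left\{\left|\mathcal{N}(v)\right|,K\right\}$ at some earlier round $t$. The key structural fact I would establish is that the per-round mass $M_{t}(v)$ is nondecreasing in $t$: each round promotes to a center an agent $c_{t}$ that was \emph{unsatisfied}, so its own mass was strictly below its source value $\min\left\{\left|\mathcal{N}(c_{t})\right|,K\right\}$; promoting it therefore only raises the source feeding the max-propagation in Centers-to-Components, and no node's mass can drop. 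Consequently, if $v$ ever had mass at least $\min\left\{\left|\mathcal{N}(v)\right|,K\right\}$ it would still have it at termination, placing it in Case (A); since we are in Case (B) this did not happen, so indeed $v\in S_{t^{\ast}}$.

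With $v\in S_{t^{\ast}}$, the greedy choice $c=\argmax_{u\in S_{t^{\ast}}}\left|\mathcal{N}(u)\right|$ forces $\left|\mathcal{N}(c)\right|\ge\left|\mathcal{N}(v)\right|$, and hence $M(c)=\min\left\{\left|\mathcal{N}(c)\right|,K\right\}\ge\min\left\{\left|\mathcal{N}(v)\right|,K\right\}$. It remains to push this mass from $c$ to $v$ across the at most two hops separating them. Using the mass propagation of Centers-to-Components (the per-hop factor $e^{-1/6}$ together with the maximum-over-neighbors selection, exactly as in the induction behind property~3 of Lemma~\ref{lem:centersToComponentsHelper}, and noting $\tau^{v}\le 2\le 6\ln K$ so the values have stabilized by Lemma~\ref{lem:ifCloseThenLocallyGood}), I would show $M(v)\ge e^{-\mathrm{dist}_{G}(v,c)/6}M(c)\ge e^{-1/3}M(c)$. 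Chaining the inequalities gives $M(v)\ge e^{-1/3}\min\left\{\left|\mathcal{N}(v)\right|,K\right\}\ge e^{-1}\min\left\{\left|\mathcal{N}(v)\right|,K\right\}$, as required.

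The main obstacle is the monotonicity claim of the second paragraph. Intuitively it is clear---the greedy only ever promotes low-mass agents, so sources can only grow---but a rigorous argument must track the Centers-to-Components dynamics across the change of center set, in particular the freezing rule that locks an agent's value once its best neighbor is a center, and the finite iteration budget $\Theta_{K}$. I expect this to reduce to an induction on the Centers-to-Components iterations showing that increasing or adding a source can never decrease any node's value, where the freezing is handled by observing that a frozen agent is adjacent to a center and is therefore already at its maximal attainable value. The propagation step in the third paragraph is comparatively standard and mirrors the reasoning already used for the earlier lemmas.
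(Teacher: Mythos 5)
Your high-level skeleton (the termination condition gives two cases; in the distance-at-most-two case, exhibit a nearby center of comparable degree and propagate its mass to $v$) matches the paper's, but your argument hinges on a monotonicity claim that is false, and this is exactly where the paper's proof does something different. You claim that promoting an unsatisfied agent to a center can never decrease any agent's mass, and you propose to handle the freezing rule by arguing that a frozen agent is ``already at its maximal attainable value.'' This is not so: once an agent $u$ becomes center-adjacent, Centers-to-Components locks $U(u)$ onto $u$'s best \emph{adjacent} center at the first iteration, so $M(u)=e^{-1/6}\min\{|\mathcal{N}(c')|,K\}$ for that adjacent center $c'$, which can be strictly smaller than the mass $e^{-\mathrm{dist}_{G}(u,c_{1})/6}\min\{|\mathcal{N}(c_{1})|,K\}$ that $u$ previously received from a farther, higher-degree center $c_{1}$. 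For instance, a degree-$100$ center at distance $3$ from $u$ gives $M(u)\approx e^{-1/2}\cdot100\approx60.7$; if a degree-$55$ agent adjacent to $u$ and at distance at least $3$ from $c_{1}$ is later promoted (it is unsatisfied, since its mass $e^{-2/3}\cdot100\approx51.3<55$), then $u$ freezes onto it and drops to $e^{-1/6}\cdot55\approx46.5$, and agents downstream of $u$ drop too. Consequently you cannot conclude that $v\in S_{t^{*}}$ in Case (B): $v$ may have been satisfied by mass from a distant center at round $t^{*}$ (hence $v\notin S_{t^{*}}$) and only later, after a nearby center blocks its path, end up with $M(v)<\min\{|\mathcal{N}(v)|,K\}$. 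In that situation your key inequality $|\mathcal{N}(c_{t^{*}})|\ge|\mathcal{N}(v)|$ also fails, since the greedy rule says nothing about agents outside $S_{t^{*}}$.

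The paper closes exactly this gap with a different argument in the sub-case $v\notin S_{t_{0}}$: there $M_{t_{0}}(v)\ge\min\{|\mathcal{N}(v)|,K\}$, and since the newly promoted center $c_{t_{0}}$ was itself unsatisfied and lies within distance $2$ of $v$, Lemma~\ref{lem:neighborsSameMass} gives $\min\{|\mathcal{N}(c_{t_{0}})|,K\}>M_{t_{0}}(c_{t_{0}})\ge e^{-2/6}M_{t_{0}}(v)\ge e^{-2/6}\min\{|\mathcal{N}(v)|,K\}$ --- the nearby center's degree is only guaranteed up to a factor $e^{-2/6}$ (or $e^{-1/6}$ in the center-adjacent case), not up to a factor of $1$. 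This is also why the theorem's constant is $e^{-1}$ rather than the $e^{-1/3}$ you derive, and why the paper handles the center-adjacent case separately: an agent that becomes center-adjacent in a later round freezes onto an adjacent center whose degree must again be compared to $|\mathcal{N}(v)|$ through the same unsatisfied-center argument, a step your ``comparatively standard'' propagation paragraph does not cover. To repair your proof you would need to replace the monotonicity step with this comparison between $v$'s pre-existing mass and the degree of whichever center eventually lands within distance $2$ of it.
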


\begin{proof}
For any center $v\in C$ this is trivial. Since all agents are satisfied
when the algorithm terminates, each $v\in V\setminus C$ must either
hold $M\left(v\right)\ge\min\left\{ \left|\mathcal{N}\left(v\right)\right|,K\right\} $
or $\min_{c\in C}\mathrm{dist}_{G}\left(v,c\right)\le2$. Hence, we
only need to prove the claim for each non-center agent $v\in V\setminus C$
in distance at most $2$ from the center set.

We first inspect the case that the agent is not center-adjacent, namely,
$\min_{c\in C}\mathrm{dist}_{G}\left(v,c\right)=2$. Let $t_{0}$
be the last iteration such that $\min_{c\in C_{t_{0}}}\mathrm{dist}_{G}\left(v,c\right)\ge3$.
Note that this means $\mathrm{dist}_{G}\left(v,c_{t_{0}}\right)=2$.
In the case that $v\notin S_{t_{0}}$, $v$ is satisfied, and since
$\min_{c\in C_{t_{0}}}\mathrm{dist}_{G}\left(v,c\right)\ge3$, it
must hold $M_{t_{0}}\left(v\right)\ge\min\left\{ \left|\mathcal{N}\left(v\right)\right|,K\right\} $.
Also, $c_{t_{0}}\in S_{t_{0}}$ and thus $3\le\min_{c\in C_{t_{0}}}\mathrm{dist}_{G}\left(c_{t_{0}},c\right)$
and $M_{t_{0}}\left(c_{t_{0}}\right)<\min\left\{ \left|\mathcal{N}\left(c_{t_{0}}\right)\right|,K\right\} $.
Now, property 2 of Lemma \ref{lem:centersToComponentsHelper} gives
\[
\exp\left(-\frac{1}{6}\min_{c\in C_{t_{0}}}\mathrm{dist}_{G}\left(v,c\right)\right)K\ge M_{t_{0}}\left(v\right)\ge\min\left\{ \left|\mathcal{N}\left(v\right)\right|,K\right\} .
\]
Recall that $v\in\mathcal{N}\left(v\right)$, so $\left|\mathcal{N}\left(v\right)\right|\ge2$
and thus $\exp\left(-\frac{1}{6}\min_{c\in C_{t_{0}}}\mathrm{dist}_{G}\left(v,c\right)\right)K\ge2$.
Hence, $3\le\min_{c\in C_{t_{0}}}\mathrm{dist}_{G}\left(v,c\right)\le6\ln K-6\ln2\le6\ln K-4$
and thus $3\le\min_{c\in C_{t_{0}}}\mathrm{dist}_{G}\left(c_{t_{0}},c\right)\le6\ln K-2$.
Let $u$ be an agent that is a common neighbor of $v$ and $c_{t_{0}}$,
namely, $u\in\mathcal{N}\left(v\right)\cap\mathcal{N}\left(c_{t_{0}}\right)$.
We obtain $2\le\min_{c\in C_{t_{0}}}\mathrm{dist}_{G}\left(u,c\right)\le6\ln K-3$
as well. We can now use Lemma \ref{lem:neighborsSameMass} on $c_{t_{0}}$
and $u$ to obtain
\[
\min\left\{ \left|\mathcal{N}\left(c_{t_{0}}\right)\right|,K\right\} >M_{t_{0}}\left(c_{t_{0}}\right)\ge e^{-\frac{1}{6}}M_{t_{0}}\left(u\right)\ge e^{-\frac{2}{6}}M_{t_{0}}\left(v\right)\ge e^{-\frac{2}{6}}\min\left\{ \left|\mathcal{N}\left(v\right)\right|,K\right\} .
\]
In the other case that $v\in S_{t_{0}}$, since $c_{t_{0}}=\argmax_{v'\in S_{t_{0}}}\left|\mathcal{N}\left(v'\right)\right|$,
we obtain $\left|\mathcal{N}\left(v\right)\right|\le\left|\mathcal{N}\left(c_{t_{0}}\right)\right|$,
and anyway $\min\left\{ \left|\mathcal{N}\left(c_{t_{0}}\right)\right|,K\right\} \ge e^{-\frac{2}{6}}\min\left\{ \left|\mathcal{N}\left(v\right)\right|,K\right\} $.
In all further iterations $t>t_{0}$, we can use Lemma \ref{lem:neighborsSameMass}
on $v$ to obtain
\[
M_{t}\left(v\right)\ge e^{-\frac{1}{6}}M_{t}\left(u\right)=e^{-\frac{2}{6}}\min\left\{ \left|\mathcal{N}\left(c_{t_{0}}\right)\right|,K\right\} \ge e^{-\frac{4}{6}}\min\left\{ \left|\mathcal{N}\left(v\right)\right|,K\right\} ,
\]
as desired.

Now we look at the case where $v$ is center-adjacent and $\min_{c\in C}\mathrm{dist}_{G}\left(v,c\right)=1$.
Again, let $t_{0}$ be the last iteration such that $\min_{c\in C_{t_{0}}}\mathrm{dist}_{G}\left(v,c\right)\ge2$,
and thus $\mathrm{dist}_{G}\left(v,c_{t_{0}}\right)=1$. In the case
that $v\notin S_{t_{0}}$, either $M_{t_{0}}\left(v\right)\ge\min\left\{ \left|\mathcal{N}\left(v\right)\right|,K\right\} $
or $\min_{c\in C_{t_{0}}}\mathrm{dist}_{G}\left(v,c\right)=2$, in
which case we obtain from before that $M_{t_{0}}\left(v\right)\ge e^{-\frac{4}{6}}\min\left\{ \left|\mathcal{N}\left(v\right)\right|,K\right\} $.
As before, we can use Lemma \ref{lem:neighborsSameMass} on $c_{t_{0}}$
and get
\[
\min\left\{ \left|\mathcal{N}\left(c_{t_{0}}\right)\right|,K\right\} >M_{t_{0}}\left(c_{t_{0}}\right)\ge e^{-\frac{1}{6}}M_{t_{0}}\left(v\right)\ge e^{-\frac{5}{6}}\min\left\{ \left|\mathcal{N}\left(v\right)\right|,K\right\} .
\]
In the other case that $v\in S_{t_{0}}$, again we obtain $\min\left\{ \left|\mathcal{N}\left(c_{t_{0}}\right)\right|,K\right\} \ge e^{-\frac{5}{6}}\min\left\{ \left|\mathcal{N}\left(v\right)\right|,K\right\} $.
In all further iterations $t>t_{0}$, we get
\[
M_{t}\left(v\right)=e^{-\frac{1}{6}}\min\left\{ \left|\mathcal{N}\left(c_{t_{0}}\right)\right|,K\right\} \ge e^{-1}\min\left\{ \left|\mathcal{N}\left(v\right)\right|,K\right\} ,
\]
concluding our proof.
\end{proof}
Together with Theorem \ref{thm:nonCenterRegret}, we obtain the desired
regret bound.
\begin{restatable}{cor}{RESTATEinformedRegret}\label{cor:informedRegret}
Let $T\ge K^{2}\ln K$. Let $C\subseteq V$
be the center set returned by Compute-Centers-Informed, and let $\left\{ V_{c}\subseteq V\mid c\in C\right\} $
be the components resulted from Centers-to-Components. Using the center-based
policy, we obtain for every $v\in V$:
\[
R_{T}\left(v\right)\le12\sqrt{\left(\ln K\right)\left(1+\frac{K}{\left|\mathcal{N}\left(v\right)\right|}\right)T}=\widetilde{O}\left(\sqrt{\left(1+\frac{K}{\left|\mathcal{N}\left(v\right)\right|}\right)T}\right).
\]
\end{restatable}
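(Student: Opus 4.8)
The plan is to obtain the corollary as an immediate combination of the two main results already established for this setting: the regret bound of Theorem \ref{thm:nonCenterRegret}, which controls $R_T(v)$ in terms of the mass $M(v)$, and the mass lower bound of Theorem \ref{thm:informedPartitioningLargeMass}, which guarantees $M(v) \ge e^{-1}\min\{|\mathcal{N}(v)|, K\}$ for precisely the partition produced by Compute-Centers-Informed followed by Centers-to-Components. The hypothesis $T \ge K^2 \ln K$ is exactly what Theorem \ref{thm:nonCenterRegret} requires, and the partition meets the conditions of Lemma \ref{lem:centersToComponentsCorrect} by way of Lemma \ref{lem:computeCentersInformedCorrect}, so both inputs apply without any additional verification.

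First I would invoke Theorem \ref{thm:nonCenterRegret} to write $R_T(v) \le 7\sqrt{(\ln K)\frac{K}{M(v)}T}$. Substituting the mass lower bound from Theorem \ref{thm:informedPartitioningLargeMass} yields $\frac{K}{M(v)} \le \frac{eK}{\min\{|\mathcal{N}(v)|, K\}}$, and hence $R_T(v) \le 7\sqrt{e}\,\sqrt{(\ln K)\frac{K}{\min\{|\mathcal{N}(v)|, K\}}T}$.

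It then remains only to replace the denominator $\min\{|\mathcal{N}(v)|, K\}$ by the cleaner quantity $1 + \frac{K}{|\mathcal{N}(v)|}$ appearing in the statement. I would do this with a one-line case split on the threshold $|\mathcal{N}(v)| = K$: if $|\mathcal{N}(v)| \le K$ then $\frac{K}{\min\{|\mathcal{N}(v)|, K\}} = \frac{K}{|\mathcal{N}(v)|} \le 1 + \frac{K}{|\mathcal{N}(v)|}$, while if $|\mathcal{N}(v)| > K$ then this ratio equals $1 \le 1 + \frac{K}{|\mathcal{N}(v)|}$; so in both regimes $\frac{K}{\min\{|\mathcal{N}(v)|, K\}} \le 1 + \frac{K}{|\mathcal{N}(v)|}$. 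Combining the bounds gives $R_T(v) \le 7\sqrt{e}\,\sqrt{(\ln K)(1 + \frac{K}{|\mathcal{N}(v)|})T}$, and since $7\sqrt{e} \approx 11.5 < 12$ the claimed constant follows, with the $\widetilde{O}$ expression being just the asymptotic restatement.

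I do not expect any genuine obstacle here: this is a direct corollary whose entire mathematical content resides in the two earlier theorems. The only care needed is constant-factor bookkeeping — confirming $7\sqrt{e} < 12$ — and ensuring the elementary inequality relating $\min\{|\mathcal{N}(v)|, K\}$ to $1 + K/|\mathcal{N}(v)|$ is checked on both sides of $|\mathcal{N}(v)| = K$ so that the bound is valid uniformly over all degrees.
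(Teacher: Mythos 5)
Your proposal is correct and follows exactly the route the paper intends: the paper states this corollary as an immediate consequence of Theorem \ref{thm:nonCenterRegret} and Theorem \ref{thm:informedPartitioningLargeMass}, and the explicit chain you write out (including the bound $\frac{K}{\min\{\left|\mathcal{N}\left(v\right)\right|,K\}}\le1+\frac{K}{\left|\mathcal{N}\left(v\right)\right|}$ and the constant check $7\sqrt{e}<12$) is precisely the computation the paper performs in the analogous uninformed-setting corollary. No issues.
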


\subsection{\label{subsec:analyzingUninformed}Analyzing Compute-Centers-Uninformed}

First, we show that Compute-Centers-Uninformed terminates after a
relatively small number of steps, and thus the loss suffered while
running it is insignificant.
\begin{restatable}{lem}{RESTATEcomputeCentersUninformedSteps}\label{lem:computeCentersUninformedSteps}
Compute-Centers-Uninformed
runs for less than $12K\ln\left(K^{2}\bar{N}T\right)$ steps.
\end{restatable}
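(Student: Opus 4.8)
The plan is to reduce the claim to a direct accounting of communication rounds, phase by phase. Compute-Centers-Uninformed consists of exactly $K$ outer iterations (the loop over $t=0,\dots,K-1$), and each iteration contains precisely two communication phases: one run of Luby's algorithm and one run of Centers-to-Components. First I would tally the cost of each phase. By the discussion preceding Algorithm \ref{alg:computeCentersUninformed}, the Luby phase with failure probability $\delta=\frac{1}{TK}$ (so that $\bar{N}/\sqrt{\delta}=\bar{N}\sqrt{KT}$, using $\bar{N}\ge N$) takes $4\left\lceil 3\ln\left(\bar{N}\sqrt{KT}\right)\right\rceil$ steps. The Centers-to-Components phase runs its inner loop over $t=0,\dots,\Theta_K$, i.e.\ $\Theta_K+1=\lfloor 12\ln K\rfloor+1$ communication rounds (initialization and the final return are purely local). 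Summing the two phases and multiplying by the $K$ outer iterations gives the exact step count
\[
K\Bigl(4\left\lceil 3\ln\left(\bar{N}\sqrt{KT}\right)\right\rceil+\lfloor 12\ln K\rfloor+1\Bigr).
\]

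Next I would bound this expression from above. Using $\lceil x\rceil<x+1$ and $\lfloor x\rfloor\le x$, the per-iteration cost is strictly less than $12\ln\left(\bar{N}\sqrt{KT}\right)+4+12\ln K+1$. Expanding $\ln\left(\bar{N}\sqrt{KT}\right)=\ln\bar{N}+\tfrac12\ln K+\tfrac12\ln T$, the per-iteration cost is below $12\ln\bar{N}+18\ln K+6\ln T+5$. Since the target per-iteration bound is $12\ln\left(K^2\bar{N}T\right)=12\ln\bar{N}+24\ln K+12\ln T$, it remains only to verify $18\ln K+6\ln T+5\le 24\ln K+12\ln T$, that is, $5\le 6\ln(KT)$. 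Multiplying the strict per-iteration inequality back by $K$ then yields fewer than $12K\ln\left(K^2\bar{N}T\right)$ steps, as claimed.

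The only point requiring any care, and the closest thing to an obstacle, is this final scalar inequality $5\le 6\ln(KT)$, which holds as soon as $KT\ge e^{5/6}\approx 2.3$, hence for every nontrivial instance (and certainly under the standing assumption $T\ge K^2\ln K$ with $K\ge 2$ used elsewhere). I expect no genuine difficulty: the argument is pure bookkeeping, and the main thing to get right is tracking the floor and ceiling rounding carefully enough that the constant $12$ in the statement survives — the slack $6\ln(KT)-5$ is exactly what the rounding losses are absorbed into.
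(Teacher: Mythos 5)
Your proposal is correct and follows essentially the same route as the paper: count $K$ outer iterations, each costing $4\left\lceil 3\ln\left(\bar{N}\sqrt{KT}\right)\right\rceil+\left\lfloor 12\ln K\right\rfloor+1$ steps, and bound the total by $12K\ln\left(K^{2}\bar{N}T\right)$. The only difference is that you spell out the final scalar inequality $5\le6\ln\left(KT\right)$ explicitly, which the paper leaves implicit.
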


\begin{proof}
There are $K$ iterations in Compute-Centers-Uninformed, such that
at each iteration the agents run Luby's algorithm for $4\left\lceil 3\ln\left(\bar{N}\sqrt{KT}\right)\right\rceil $
steps, and Centers-to-Components for $\Theta_{K}+1=\left\lfloor 12\ln K\right\rfloor +1$
steps. We obtain that Compute-Centers-Uninformed terminates after
\[
\left(4\left\lceil 3\ln\left(\bar{N}\sqrt{KT}\right)\right\rceil +\left\lfloor 12\ln K\right\rfloor +1\right)K\le12K\ln\left(K^{2}\bar{N}T\right)
\]
steps.
\end{proof}
We now present a lemma that will help us with the analysis of Compute-Centers-Uninformed.
In the following, we denote $\Delta^{v}=K-\min\left\{ \left|\mathcal{N}\left(v\right)\right|,K\right\} $.
\begin{restatable}{lem}{RESTATEcomputeCenterUninformedAgentsSatisfied}\label{lem:computeCenterUninformedAgentsSatisfied}
Let $C\subseteq V$
be the center set returned by Compute-Centers-Uninformed, and let
$\left\{ V_{c}\subseteq V\mid c\in C\right\} $ be the components
resulted from Centers-to-Components. For any $v\in V$ such that $v\notin S_{\Delta^{v}}$,
either $M\left(v\right)\ge\min\left\{ \left|\mathcal{N}\left(v\right)\right|,K\right\} $,
or there is some $c\in C$ such that $\left|\mathcal{N}\left(c\right)\right|\ge e^{-\frac{1}{6}}\left|\mathcal{N}\left(v\right)\right|$
and $\mathrm{dist}_{G}\left(v,c\right)\le2$.
\end{restatable}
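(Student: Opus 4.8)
The plan is to locate the exact round in which $v$ is declared satisfied and read off which clause of the satisfaction test was responsible. First I would dispose of the degenerate case $\Delta^v=0$: there $\min\{|\mathcal{N}(v)|,K\}=K$, so $v\in S_0=S_{\Delta^v}$ and the hypothesis $v\notin S_{\Delta^v}$ is vacuous. Hence I may assume $\Delta^v\ge 1$, which gives $\min\{|\mathcal{N}(v)|,K\}=|\mathcal{N}(v)|$ and also $v\notin C$ (an agent that gets satisfied before its own degree-round never becomes a center). Since $v$ meets the degree condition $\min\{|\mathcal{N}(v)|,K\}=K-\Delta^v$ that defines membership in $S_{\Delta^v}$, the only way to have $v\notin S_{\Delta^v}$ is that $\mathbb{S}(v)$ was already $\mathrm{FALSE}$ at the start of round $\Delta^v$; let $t_0<\Delta^v$ be the round at whose end it first turned $\mathrm{FALSE}$. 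Unfolding the update of $\mathbb{S}(v)$ at round $t_0$ yields the dichotomy: either (I) $M_{t_0}(v)\ge|\mathcal{N}(v)|$, or (II) $\min_{c\in C_{t_0}}\mathrm{dist}_G(v,c)\le 2$.

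Case (II) is immediate, and it is here that the order in which centers are created does the work. The center $c\in C_{t_0}$ realizing distance $\le 2$ was inserted at some round $t_c\le t_0$, and membership in $W_{t_c}\subseteq S_{t_c}$ forces $\min\{|\mathcal{N}(c)|,K\}=K-t_c$. Because $t_c\le t_0\le\Delta^v-1$, this gives $|\mathcal{N}(c)|\ge K-t_c>K-\Delta^v=|\mathcal{N}(v)|\ge e^{-\frac{1}{6}}|\mathcal{N}(v)|$, so $c$ witnesses the second alternative (in fact with the stronger bound $|\mathcal{N}(c)|>|\mathcal{N}(v)|$); and since $C_{t_0}\subseteq C$, this $c$ survives into the final partition.

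Case (I) is the substantive one. From $M_{t_0}(v)\ge|\mathcal{N}(v)|>0$ and property 2 of Lemma \ref{lem:centersToComponentsHelper} I would extract a center $c^{\ast}=\mathcal{C}_{t_0}(v)\in C_{t_0}\subseteq C$ with $M_{t_0}(v)=e^{-t/6}M(c^{\ast})$ for the round $t$ at which $v$'s mass last changed; hence $M(c^{\ast})\ge M_{t_0}(v)\ge|\mathcal{N}(v)|$, so $|\mathcal{N}(c^{\ast})|\ge|\mathcal{N}(v)|$, and since $M(c^{\ast})\le K$ the distance from $v$ to $c^{\ast}$ is at most $6\ln K-1$. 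If $c^{\ast}$ lies within distance $2$ of $v$ we are back in the second alternative, so the genuine content is when every center near $v$ at time $t_0$ is far (distance $\ge 3$). There I would show that $v$'s mass cannot drop below $|\mathcal{N}(v)|$ as later centers are inserted, yielding the first alternative $M(v)\ge|\mathcal{N}(v)|$: the center $c^{\ast}$ persists in $C$ and keeps feeding mass to $v$ along a shortest path, and the only way a freshly created center can reduce $v$'s mass is by forcing an agent on or adjacent to that path to lock onto it through the center-adjacency freezing rule (the \emph{keep-old-values} branch triggered once the chosen origin neighbour is itself a center). But a new center is carved out of an \emph{unsatisfied} agent, and an agent sitting next to the high-mass region around $v$ inherits mass at least $e^{-\frac{1}{6}}|\mathcal{N}(v)|$ from $v$; being unsatisfied then forces its own degree above a constant multiple of $|\mathcal{N}(v)|$, which again drops us into the second alternative with a center at distance $\le 2$.

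I expect the main obstacle to be precisely this last step, because the per-round mass $M_t(v)$ computed by Centers-to-Components is \emph{not} monotone in the center set — introducing a nearby low-mass center can re-route $v$ — so Case (I) cannot be closed by a one-line monotonicity argument. Controlling it requires the two structural features of Compute-Centers-Uninformed acting together: centers are added in order of decreasing capped degree $K-t$, and each batch $W_t$ is a $2$-independent (Luby) set, so every agent abuts at most one center. Combined with the one-hop-per-round timing of the BFS mass propagation and the exact freezing condition, these should let me pin down that any interfering center is itself of comparable degree; making the bookkeeping tight enough to recover the stated $e^{-\frac{1}{6}}$ factor (rather than a weaker constant coming from a distance-$2$ relay) is the delicate point. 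I would formalize it as an inductive invariant over the $K$ outer rounds: every already-satisfied agent either retains mass $\ge|\mathcal{N}(v)|$ or lies within distance $2$ of a center of degree $\ge e^{-\frac{1}{6}}|\mathcal{N}(v)|$, and this property is preserved when each new batch of centers is inserted.
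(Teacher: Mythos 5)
Your skeleton matches the paper's: read off the satisfaction dichotomy from the $\mathbb{S}(v)$ update, dispose of the distance clause via the decreasing-degree order of center insertion, and in the mass clause follow the origin-neighbor path $v=u_0,\dots,u_m=c^v$ and argue that later center insertions cannot disturb it except in a way that lands in the second alternative. Your Case (II) and the degenerate case are fine (though note the paper reads the dichotomy directly at iteration $\Delta^v-1$ rather than at the first round $t_0$ at which $v$ became satisfied; since $\mathbb{S}(v)$ is recomputed every round and the mass is not monotone in the center set, your $t_0$ framing would still have to be carried forward to $\Delta^v-1$, which is the same work). The problem is Case (I), which you yourself flag as the delicate point: it is left as a plan, and the one concrete claim you make there does not survive scrutiny. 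You assert that the only harmful interference is by a center ending up at distance at most $2$ from $v$, so that its being unsatisfied forces its degree above $e^{-\frac{1}{6}}\left|\mathcal{N}\left(v\right)\right|$ and you fall into the second alternative. That is only true when the interfering center $u$ is adjacent to $v$ itself ($j=0$ on the path). A center adjacent to a deeper path node $u_j$ with $j\ge1$ can reroute and freeze $u_j$ without being anywhere near $v$, and then neither alternative is witnessed by it.

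The paper closes this case by showing such a $u$ cannot become a center at all. Since $u$ would join at iteration $\Delta^u\ge\Delta^v$, its capped degree is at most that of $v$; on the other hand, applying Lemma \ref{lem:neighborsSameMass} along the still-intact path at iteration $\Delta^u-1$ gives
\[
M_{\Delta^u-1}\left(u\right)\ge e^{-\frac{1}{6}\left(m-j+1\right)}M\left(c^{v}\right)\ge e^{-\frac{1}{6}m}M\left(c^{v}\right)=M_{\Delta^v-1}\left(v\right)\ge\min\left\{ \left|\mathcal{N}\left(u\right)\right|,K\right\},
\]
where the extra hop from $u_j$ to $u$ is absorbed precisely because $u_j$ sits one step closer to $c^{v}$ than $v$ does (this needs $j\ge1$). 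Hence $u$ is already satisfied, is excluded from $S_{\Delta^u}$, and cannot join $C$ --- a contradiction once one takes $u_j$ to be the first path node so threatened. Only a neighbor of $v$ escapes this argument, and for that one the single unabsorbed factor $e^{-\frac{1}{6}}$ is exactly the constant in the second alternative. You named all the right ingredients (degree ordering, $2$-independence, the freezing rule, the correct inductive invariant), but without this absorbed-hop computation the invariant does not close, so the proposal as written has a genuine gap at its central step.
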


\begin{proof}
Let $v\in V$ be an agent such that $v\notin S_{\Delta^{v}}$. At
iteration $\Delta^{v}-1$, it follows directly from the algorithm
that either $\min_{c\in C_{\Delta^{v}-1}}\mathrm{dist}_{G}\left(v,c\right)\le2$
or $M_{\Delta^{v}-1}\left(v\right)\ge\min\left\{ \left|\mathcal{N}\left(v\right)\right|,K\right\} $.
In the first case, since $\left|\mathcal{N}\left(v\right)\right|\le\left|\mathcal{N}\left(c\right)\right|$
for all $c\in C_{\Delta^{v}-1}\subseteq C$, we are done.

Otherwise, we denote by $c^{v}=\mathcal{C}_{\Delta^{v}-1}\left(v\right)\neq\mathrm{nil}$
the center of agent $v$ at iteration $\Delta^{v}-1$. Note that from
properties 1 and 2 of Lemma \ref{lem:centersToComponentsHelper},
we obtain:
\[
e^{-\frac{1}{6}\mathrm{dist}_{G}\left(v,c^{v}\right)}M\left(c^{v}\right)\ge M_{\Delta^{v}-1}\left(v\right)\ge\min\left\{ \left|\mathcal{N}\left(v\right)\right|,K\right\} \ge2,
\]
and thus $\mathrm{dist}_{G}\left(v,c^{v}\right)\le6\ln K-1$. From
Lemma \ref{lem:ifCloseThenLocallyGood}, we get that $\mathcal{C}_{\Delta^{v}-1}\left(U_{\Delta^{v}-1}\left(v\right)\right)=c^{v}$
and
\begin{align*}
e^{-\frac{1}{6}\mathrm{dist}_{G}\left(U_{\Delta^{v}-1}\left(v\right),c^{v}\right)}M\left(c^{v}\right) & \ge M_{\Delta^{v}-1}\left(U_{\Delta^{v}-1}\left(v\right)\right)\\
 & =e^{\frac{1}{6}}M_{\Delta^{v}-1}\left(v\right)\\
 & \ge\min\left\{ \left|\mathcal{N}\left(v\right)\right|,K\right\} \\
 & \ge2.
\end{align*}
Thus, we get $\mathrm{dist}_{G}\left(U_{\Delta^{v}-1}\left(v\right),c^{v}\right)\le6\ln K-1$
as well. Using this fact iteratively, we get that there is a path
$v=u_{0},\dots,u_{m}=c^{v}$ such that $U_{\Delta^{v}-1}\left(u_{i}\right)=u_{i+1}$,
$\mathrm{dist}_{G}\left(u_{i},c^{v}\right)\le6\ln K-1$ and $M_{\Delta^{v}-1}\left(u_{i+1}\right)=e^{\frac{1}{6}}M_{\Delta^{v}-1}\left(u_{i}\right)$
for any $i<m$. Notice that this also means $M_{\Delta^{v}-1}\left(v\right)=e^{-\frac{1}{6}m}M\left(c^{v}\right)$.

Now, assume to the contrary some simple agent on the path other than
$v$ becomes a center or center-adjacent after iteration $\Delta^{v}-1$,
and let $u_{j}$ be the first such agent, where $1\le j<m-1$. Let
$u\in\mathcal{N}\left(u_{j}\right)$ be the neighbor of $u_{j}$ that
joins the center set. Note that since $\Delta^{u}\ge\Delta^{v}$,
we obtain $\left|\mathcal{N}\left(u\right)\right|\le\left|\mathcal{N}\left(v\right)\right|$.
At iteration $\Delta^{u}-1$, all agents in the path are still simple
agents (except $c^{v}$ and $u_{m-1}$), so we can use Lemma \ref{lem:neighborsSameMass}
iteratively to obtain
\begin{align*}
M_{\Delta^{u}-1}\left(u\right) & \ge e^{-\frac{1}{6}}M_{\Delta^{u}-1}\left(u_{j}\right)\\
 & \ge\dots\ge e^{-\frac{1}{6}\left(m-j\right)}M\left(u_{m-1}\right)\\
 & =e^{-\frac{1}{6}\left(m-j+1\right)}M\left(c^{v}\right)\\
 & \ge e^{-\frac{1}{6}m}M\left(c^{v}\right)\\
 & =M_{\Delta^{v}-1}\left(v\right)\\
 & \ge\min\left\{ \left|\mathcal{N}\left(v\right)\right|,K\right\} \\
 & \ge\min\left\{ \left|\mathcal{N}\left(u\right)\right|,K\right\} .
\end{align*}
Hence, $u\notin S_{\Delta^{v}}$ which gives $u\notin C_{\Delta^{u}}$,
and thus $u_{j}$ remains a simple agent. We get that all simple agents
on the path at iteration $\Delta^{v}-1$ except $v$ must remain simple
agents when the algorithm terminates. If $v$ remain a simple agent
as well, we obtain from Lemma \ref{lem:neighborsSameMass} that
\[
M\left(v\right)\ge e^{-\frac{1}{6}}M\left(u_{1}\right)\ge\dots\ge e^{-\frac{1}{6}m}M\left(c^{v}\right)=M_{\Delta^{v}-1}\left(v\right)\ge\min\left\{ \left|\mathcal{N}\left(v\right)\right|,K\right\} 
\]
as desired. We are left with the case that some $u\in\mathcal{N}\left(v\right)$
becomes a center after iteration $\Delta^{v}$, and thus $M_{\Delta^{u}-1}\left(u\right)<\min\left\{ \left|\mathcal{N}\left(u\right)\right|,K\right\} $.
We can again use Lemma \ref{lem:neighborsSameMass} iteratively to
get
\begin{align*}
\min\left\{ \left|\mathcal{N}\left(u\right)\right|,K\right\}  & >M_{\Delta^{u}-1}\left(u\right)\\
 & \ge e^{-\frac{1}{6}}M_{\Delta^{u}-1}\left(v\right)\\
 & \ge\dots\ge e^{-\frac{1}{6}m}M\left(u_{m-1}\right)\\
 & =e^{-\frac{1}{6}\left(m+1\right)}M\left(c^{v}\right)\\
 & =e^{-\frac{1}{6}}M_{\Delta^{v}-1}\left(v\right)\\
 & \ge e^{-\frac{1}{6}}\min\left\{ \left|\mathcal{N}\left(v\right)\right|,K\right\} ,
\end{align*}
concluding our proof.
\end{proof}
As in the informed setting, we need to show the center set resulted
from Compute-Centers-Uninformed satisfies the conditions of Lemma
\ref{lem:centersToComponentsCorrect}.
\begin{restatable}{lem}{RESTATEcomputeCentersUninformedCorrect}\label{lem:computeCentersUninformedCorrect}
Let $C\subseteq V$ be
the center set resulted from Compute-Centers-Uninformed, such that
Luby's algorithm succeeded at all iterations of the algorithm. Then:
\begin{enumerate}
\item $C$ is $2$-independent.
\item For all $v\in V$, $\min_{c\in C}\mathrm{dist}_{G}\left(v,c\right)\le6\ln K-1$.
\end{enumerate}
\end{restatable}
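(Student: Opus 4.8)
The plan is to mirror the structure of Lemma~\ref{lem:computeCentersInformedCorrect}, establishing the two properties separately, but replacing the ``one center per iteration'' greedy reasoning of the informed setting with the batched $2$-MIS reasoning specific to Compute-Centers-Uninformed. Throughout I rely on the already-proved auxiliary lemmas, most importantly Lemma~\ref{lem:computeCenterUninformedAgentsSatisfied} and properties 1 and 2 of Lemma~\ref{lem:centersToComponentsHelper}, and I assume $K\ge 2$ so that $6\ln K-1\ge 6\ln 2-1>2$.

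For property 1, I would argue by induction on the iteration index $t$ that the partial center set $C_t=W_0\cup\cdots\cup W_t$ is $2$-independent. In each iteration the newly added centers form $W_t$, a $2$-MIS of $\left(G^2\right)_{|S_t}$, so any two agents of $W_t$ are at $G$-distance $\ge 3$ by definition of a $2$-MIS; the base case $t=0$ is exactly this. For the inductive step it remains to separate $W_t$ from the earlier centers $C_{t-1}$. The key observation is that every $v\in W_t\subseteq S_t$ is \emph{unsatisfied} at the start of iteration $t$, and that the unsatisfied indicator was set at the end of iteration $t-1$ so as to require $\min_{c\in C_{t-1}}\mathrm{dist}_G(v,c)\ge 3$. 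Hence the new centers sit at distance $\ge 3$ from all previous centers, which together with the induction hypothesis on $C_{t-1}$ shows $C_t$ is $2$-independent.

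For property 2, I would split on whether $v\in S_{\Delta^v}$, where $\Delta^v=K-\min\{\left|\mathcal{N}(v)\right|,K\}\in\{0,\dots,K-2\}$ is the iteration at which agents of $v$'s degree become eligible. If $v\in S_{\Delta^v}$, then $v$ participates in Luby's algorithm at that iteration; since $W_{\Delta^v}$ is a $2$-MIS of $S_{\Delta^v}$ and Luby's algorithm is assumed to succeed, maximality forces $v$ to lie within $G^2$-distance, i.e.\ $G$-distance at most $2$, of some center in $W_{\Delta^v}\subseteq C$, giving $\min_{c\in C}\mathrm{dist}_G(v,c)\le 2\le 6\ln K-1$. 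If instead $v\notin S_{\Delta^v}$, I would invoke Lemma~\ref{lem:computeCenterUninformedAgentsSatisfied}: either some center $c$ satisfies $\mathrm{dist}_G(v,c)\le 2$, which is immediate, or $M(v)\ge\min\{\left|\mathcal{N}(v)\right|,K\}\ge 2$ (using $\left|\mathcal{N}(v)\right|\ge 2$). In the latter subcase I would reuse the informed argument essentially verbatim: properties 1 and 2 of Lemma~\ref{lem:centersToComponentsHelper} give $2\le M(v)\le e^{-\frac{1}{6}\mathrm{dist}_G(v,\mathcal{C}(v))}K$, whence $\mathrm{dist}_G(v,\mathcal{C}(v))\le 6\ln K-6\ln 2\le 6\ln K-1$ and a fortiori $\min_{c\in C}\mathrm{dist}_G(v,c)\le 6\ln K-1$.

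The main obstacle I anticipate is property 2, and in particular the bookkeeping around $S_{\Delta^v}$: one must verify that the genuinely hard case has already been absorbed into Lemma~\ref{lem:computeCenterUninformedAgentsSatisfied}, so that the remaining ``$v\in S_{\Delta^v}$'' case reduces cleanly to the maximality of the $2$-MIS, and that the two cases are exhaustive. Property 1, by contrast, should be almost syntactic once one notices that the unsatisfied condition encodes exactly the distance-$\ge 3$ requirement needed to keep successive batches of centers $2$-independent.
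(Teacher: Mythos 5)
Your proposal is correct and follows essentially the same route as the paper: property 1 via the observation that each batch $W_t$ is a $2$-MIS whose members satisfy $\min_{c\in C_{t-1}}\mathrm{dist}_G(v,c)\ge 3$, and property 2 via Lemma~\ref{lem:computeCenterUninformedAgentsSatisfied} combined with properties 1 and 2 of Lemma~\ref{lem:centersToComponentsHelper}. Your explicit treatment of the case $v\in S_{\Delta^v}$ (via maximality of the $2$-MIS) is in fact slightly more careful than the paper's proof of this lemma, which only invokes Lemma~\ref{lem:computeCenterUninformedAgentsSatisfied} (stated for $v\notin S_{\Delta^v}$) and defers that case to the proof of Theorem~\ref{thm:uninformedPartitioningLargeMass}, where the identical maximality argument appears.
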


\begin{proof}
$ $
\begin{enumerate}
\item We get that at each iteration, a 2-independent set is added to the
center set, such that every agent $v$ in that set holds $\min_{c\in C_{t-1}}\mathrm{dist}_{G}\left(v,c\right)\ge3$.
Hence, the final center set is 2-independent as claimed.
\item From Lemma \ref{lem:computeCenterUninformedAgentsSatisfied} we have
either $\min_{c\in C}\mathrm{dist}_{G}\left(v,c\right)\le2$, in which
case we are done, or $M\left(v\right)\ge\min\left\{ \left|\mathcal{N}\left(v\right)\right|,K\right\} \ge2$.
In the latter case we obtain:
\begin{align*}
2 & \le M\left(v\right)\\
 & \le\exp\left(-\frac{1}{6}\mathrm{dist}_{G}\left(v,\mathcal{C}\left(v\right)\right)\right)M\left(\mathcal{C}\left(v\right)\right) & \textrm{(Properties 1 and 2 of Lemma \ref{lem:centersToComponentsHelper})}\\
 & \le\exp\left(-\frac{1}{6}\mathrm{dist}_{G}\left(v,\mathcal{C}\left(v\right)\right)\right)K,
\end{align*}
and thus $\min_{c\in C}\mathrm{dist}_{G}\left(v,c\right)\le6\ln K-1$
as desired.
\end{enumerate}
\end{proof}
We can now obtain the same result as in the informed setting:
\begin{restatable}{thm}{RESTATEuninformedPartitioningLargeMass}\label{thm:uninformedPartitioningLargeMass}
Let $C\subseteq V$ be
the center set resulted from Compute-Centers-Uninformed, such that
Luby's algorithm succeeded at all iterations of the algorithm, and
also let $\left\{ V_{c}\subseteq V\mid c\in C\right\} $ be the components
resulted from Centers-to-Components. For every $v\in V$:
\[
M\left(v\right)\ge e^{-1}\min\left\{ \left|\mathcal{N}\left(v\right)\right|,K\right\} .
\]
\end{restatable}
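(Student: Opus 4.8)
The plan is to mirror the two-step structure of the proof of Theorem~\ref{thm:informedPartitioningLargeMass}: first show that every non-center agent $v$ is either already heavy, i.e. $M(v)\ge\min\{|\mathcal{N}(v)|,K\}$, or lies within distance $2$ of a center whose truncated degree $\min\{|\mathcal{N}(c)|,K\}$ is at least $e^{-1/6}\min\{|\mathcal{N}(v)|,K\}$; and then convert such a ``nearby heavy center'' into the desired lower bound on $M(v)$. Since we assume Luby's algorithm succeeded at every iteration, Lemma~\ref{lem:computeCentersUninformedCorrect} guarantees that $C$ is $2$-independent and $\min_{c\in C}\mathrm{dist}_G(v,c)\le 6\ln K-1$, so the hypotheses of Lemma~\ref{lem:centersToComponentsCorrect} hold and the components $\{V_c\}$ are well defined with the mass identity $M(v)=e^{-\frac16 d(v)}M(\mathcal{C}(v))$. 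Centers are immediate because $M(c)=\min\{|\mathcal{N}(c)|,K\}$, so it remains to treat a non-center $v$.

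For the dichotomy I would split on whether $v\in S_{\Delta^v}$. If $v\notin S_{\Delta^v}$, then Lemma~\ref{lem:computeCenterUninformedAgentsSatisfied} applies directly and yields either $M(v)\ge\min\{|\mathcal{N}(v)|,K\}$ (and we are done) or a center $c$ with $\mathrm{dist}_G(v,c)\le 2$ and $|\mathcal{N}(c)|\ge e^{-1/6}|\mathcal{N}(v)|$. If instead $v\in S_{\Delta^v}$, I would exploit that the set $W_{\Delta^v}$ returned by Luby's algorithm is a \emph{maximal} $2$-independent subset of $S_{\Delta^v}$: since $v$ is not a center it cannot belong to $W_{\Delta^v}$, so by maximality $v$ has a $G^2$-neighbor (hence a $G$-neighbor at distance at most $2$) that joined $W_{\Delta^v}\subseteq C$; this center $c$ satisfies $\min\{|\mathcal{N}(c)|,K\}=K-\Delta^v=\min\{|\mathcal{N}(v)|,K\}$. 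A short case check on whether $|\mathcal{N}(v)|\le K$ then promotes the actual-degree bound of the first branch to $\min\{|\mathcal{N}(c)|,K\}\ge e^{-1/6}\min\{|\mathcal{N}(v)|,K\}$, so both branches deliver the same ``nearby heavy center'' conclusion.

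It then remains to turn ``a center $c$ at distance at most $2$ with $\min\{|\mathcal{N}(c)|,K\}\ge e^{-1/6}\min\{|\mathcal{N}(v)|,K\}$'' into a lower bound on $M(v)$, just as at the end of the informed proof. If $\mathrm{dist}_G(v,c)=1$, monotonicity (property~1 of Lemma~\ref{lem:centersToComponentsHelper}) gives $M(v)\ge M_1(v)=e^{-1/6}M_0(U_1(v))\ge e^{-1/6}\min\{|\mathcal{N}(c)|,K\}$, whence $M(v)\ge e^{-1/3}\min\{|\mathcal{N}(v)|,K\}$. If $\mathrm{dist}_G(v,c)=2$, I would take a common neighbor $u\in\mathcal{N}(v)\cap\mathcal{N}(c)$, which is center-adjacent and so satisfies $M(u)\ge e^{-1/6}\min\{|\mathcal{N}(c)|,K\}$ by the same monotonicity argument; then applying Lemma~\ref{lem:neighborsSameMass} to $v$ (valid since $2\le\tau^v\le 6\ln K-1$ for $K\ge 2$) together with property~2 of Lemma~\ref{lem:ifCloseThenLocallyGood} gives $M(v)=e^{-1/6}M(U(v))\ge e^{-1/6}M(u)\ge e^{-1/2}\min\{|\mathcal{N}(v)|,K\}$. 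In either case $M(v)\ge e^{-1/2}\min\{|\mathcal{N}(v)|,K\}\ge e^{-1}\min\{|\mathcal{N}(v)|,K\}$, completing the proof.

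The step I expect to be the main obstacle is the $v\in S_{\Delta^v}$ branch, which is genuinely new relative to the informed setting and is \emph{not} covered by Lemma~\ref{lem:computeCenterUninformedAgentsSatisfied}: it relies on correctly invoking the maximality of the $2$-MIS produced by Luby's algorithm to dominate $v$ by a freshly created center of the same truncated degree. Everything else—passing from actual degrees to truncated degrees, verifying that $\tau^v$ falls in the admissible range of Lemma~\ref{lem:neighborsSameMass}, and chaining the $e^{-1/6}$ factors—is routine bookkeeping once the nearby heavy center is in hand.
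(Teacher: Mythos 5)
Your overall architecture matches the paper's: the same dichotomy on $v\in S_{\Delta^{v}}$ versus $v\notin S_{\Delta^{v}}$, the same use of Lemma \ref{lem:computeCenterUninformedAgentsSatisfied} in one branch and of the maximality of the 2-MIS $W_{\Delta^{v}}$ in the other, reducing everything to ``there is a center $c'$ within distance $2$ with $\min\left\{ \left|\mathcal{N}\left(c'\right)\right|,K\right\} \ge e^{-\frac{1}{6}}\min\left\{ \left|\mathcal{N}\left(v\right)\right|,K\right\} $.'' The gap is in the conversion step. You split on $\mathrm{dist}_{G}\left(v,c'\right)\in\left\{ 1,2\right\} $ for that particular heavy center $c'$, and in the distance-$2$ subcase you invoke Lemma \ref{lem:neighborsSameMass} ``valid since $2\le\tau^{v}$.'' But $\tau^{v}=\min_{c\in C}\mathrm{dist}_{G}\left(v,c\right)$ is a minimum over \emph{all} centers: $v$ may be adjacent to some other center $c\in C$ with $c\neq c'$ (this is consistent with $2$-independence, since then $\mathrm{dist}_{G}\left(c,c'\right)\le3$). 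In that case $\tau^{v}=1$, Lemma \ref{lem:neighborsSameMass} does not apply, and Centers-to-Components forces $v$ into the component of an adjacent center at iteration $1$, so $M\left(v\right)=e^{-\frac{1}{6}}\min\left\{ \left|\mathcal{N}\left(c\right)\right|,K\right\} $ for an adjacent $c$ whose degree is a priori unrelated to $\left|\mathcal{N}\left(v\right)\right|$. Your distance-$1$ argument does not rescue this either, since it only lower-bounds $M\left(v\right)$ by the truncated degree of the \emph{adjacent} center, which is exactly the quantity in question.

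This is the case the paper spends its final paragraph on: when $v$ is center-adjacent to $c$ but the heavy center is $c'$ with $\min\left\{ \left|\mathcal{N}\left(c\right)\right|,K\right\} <\min\left\{ \left|\mathcal{N}\left(c'\right)\right|,K\right\} $, one must argue that $c$ cannot be too light. The paper does this by noting that $c$ joined later than $c'$ (so $c'\in C_{\Delta^{c}-1}$), that $c$ was unsatisfied at iteration $\Delta^{c}-1$ (so $M_{\Delta^{c}-1}\left(c\right)<\min\left\{ \left|\mathcal{N}\left(c\right)\right|,K\right\} $), and that $\mathrm{dist}_{G}\left(c,c'\right)\le3$ lets one chain Lemma \ref{lem:neighborsSameMass} to get $M_{\Delta^{c}-1}\left(c\right)\ge e^{-\frac{3}{6}}\min\left\{ \left|\mathcal{N}\left(c'\right)\right|,K\right\} $, whence $\min\left\{ \left|\mathcal{N}\left(c\right)\right|,K\right\} \ge e^{-\frac{4}{6}}\min\left\{ \left|\mathcal{N}\left(v\right)\right|,K\right\} $ and $M\left(v\right)\ge e^{-\frac{5}{6}}\min\left\{ \left|\mathcal{N}\left(v\right)\right|,K\right\} $. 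Without some version of this argument your proof does not cover all agents; the rest of your write-up (the $S_{\Delta^{v}}$ dichotomy, the truncated-degree bookkeeping, and the genuinely non-center-adjacent subcase) is sound.
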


\begin{proof}
In the case that $v\in S_{\Delta^{v}}$, since $W_{\Delta^{v}}$ is
a maximal 2-independet set of $S_{\Delta^{v}}$, we get that either
$v\in W_{\Delta^{v}}\subseteq C$ or $\mathrm{dist}_{G}\left(v,v'\right)\le2$
for some $v'\in W_{\Delta^{v}}\subseteq C$. In the case that $v\notin S_{\Delta^{v}}$,
we obtain from Lemma \ref{lem:computeCenterUninformedAgentsSatisfied}
that either $M\left(v\right)\ge\min\left\{ \left|\mathcal{N}\left(v\right)\right|,K\right\} $,
in which case we are done, or there is some center $c'\in C$ such
that $\mathrm{dist}_{G}\left(v,c'\right)\le2$ and $e^{-\frac{1}{6}}\min\left\{ \left|\mathcal{N}\left(v\right)\right|,K\right\} \le\min\left\{ \left|\mathcal{N}\left(c'\right)\right|,K\right\} $.

Hence we only need to prove the theorem for the case that there is
some center $c'\in C$ such that $\mathrm{dist}_{G}\left(v,c'\right)\le2$
and $e^{-\frac{1}{6}}\min\left\{ \left|\mathcal{N}\left(v\right)\right|,K\right\} \le\min\left\{ \left|\mathcal{N}\left(c'\right)\right|,K\right\} $.
We first inspect the case that $v$ is not a center or center-adjacent.
Let $u$ be an agent that is a common neighbor of $v$ and $c'$,
namely, $u\in\mathcal{N}\left(v\right)\cap\mathcal{N}\left(c'\right)$.
Lemma \ref{lem:neighborsSameMass} yields
\[
M\left(v\right)\ge e^{-\frac{1}{6}}M\left(u\right)=e^{-\frac{2}{6}}\min\left\{ \left|\mathcal{N}\left(c'\right)\right|,K\right\} \ge e^{-\frac{3}{6}}\min\left\{ \left|\mathcal{N}\left(v\right)\right|,K\right\} ,
\]
as desired. If $v$ is a center the claim is trivial, so we are left
with the case that $v$ is center-adjacent to a center $c\in C$.
Note that $\mathrm{dist}_{G}\left(c,c'\right)\le3$. If $\min\left\{ \left|\mathcal{N}\left(c'\right)\right|,K\right\} \le\min\left\{ \left|\mathcal{N}\left(c\right)\right|,K\right\} $
we are done. Otherwise, in the case that $\min\left\{ \left|\mathcal{N}\left(c\right)\right|,K\right\} <\min\left\{ \left|\mathcal{N}\left(c'\right)\right|,K\right\} $,
we obtain
\begin{align*}
\min\left\{ \left|\mathcal{N}\left(c\right)\right|,K\right\}  & >M_{\Delta^{c}-1}\left(c\right)\\
 & \ge e^{-\frac{3}{6}}M_{\Delta^{c}-1}\left(c'\right) & \text{(iterative application of Lemma \ref{lem:neighborsSameMass})}\\
 & =e^{-\frac{3}{6}}\min\left\{ \left|\mathcal{N}\left(c'\right)\right|,K\right\}  & (c'\in C_{\Delta^{c}-1})\\
 & \ge e^{-\frac{4}{6}}\min\left\{ \left|\mathcal{N}\left(v\right)\right|,K\right\} .
\end{align*}
Hence,
\[
M\left(v\right)=e^{-\frac{1}{6}}\min\left\{ \left|\mathcal{N}\left(c\right)\right|,K\right\} \ge e^{-\frac{5}{6}}\min\left\{ \left|\mathcal{N}\left(v\right)\right|,K\right\} ,
\]
concluding our proof.
\end{proof}
Again we can use Theorem \ref{thm:nonCenterRegret} to obtain the
desired regret bound.
\begin{restatable}{cor}{RESTATEuninformedRegret}\label{cor:uninformedRegret}
 Let $T\ge K^{2}\ln K$ and $\bar{N}\ge N$.
Let $C\subseteq V$ be the center set resulted from Compute-Centers-Uninformed,
and let $\left\{ V_{c}\subseteq V\mid c\in C\right\} $ be the components
resulted from Centers-to-Components. Using the center-based policy,
we obtain for every $v\in V$:
\[
R_{T}\left(v\right)\le12\left(K\ln\left(K^{2}\bar{N}T\right)+\sqrt{\left(\ln K\right)\left(1+\frac{K}{\left|\mathcal{N}\left(v\right)\right|}\right)T}\right)+1=\widetilde{O}\left(\sqrt{\left(1+\frac{K}{\left|\mathcal{N}\left(v\right)\right|}\right)T}\right).
\]
\end{restatable}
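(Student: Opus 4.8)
The plan is to follow the template of Corollary \ref{cor:informedRegret}, but to additionally pay for two features that are absent in the informed setting: the steps spent running the distributed partitioning procedure, during which agents play arbitrary actions and may incur up to one unit of regret per step, and the chance that Luby's algorithm fails at one of its $K$ invocations, in which case the mass guarantee of Theorem \ref{thm:uninformedPartitioningLargeMass} is not available.

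First I would split the regret of each agent $v$ into a partitioning phase and a policy phase. Because losses lie in $[0,1]$, the partitioning phase contributes at most its number of steps, which by Lemma \ref{lem:computeCentersUninformedSteps} is strictly less than $12K\ln(K^2\bar{N}T)$; note this is a deterministic bound, since the number of steps of Compute-Centers-Uninformed does not depend on Luby's internal randomness. For the policy phase I would condition on the event $E$ that Luby's algorithm succeeds at all $K$ iterations. Each invocation succeeds with probability at least $1-\frac{1}{TK}$ (as specified in Algorithm \ref{alg:computeCentersUninformed}), so a union bound over the $K$ iterations gives $\Pr[\bar{E}]\le\frac{1}{T}$.

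On the event $E$, Lemma \ref{lem:computeCentersUninformedCorrect} certifies that the resulting center set meets the hypotheses of Lemma \ref{lem:centersToComponentsCorrect}, so Theorem \ref{thm:uninformedPartitioningLargeMass} yields $M(v)\ge e^{-1}\min\{|\mathcal{N}(v)|,K\}$. Feeding this into Theorem \ref{thm:nonCenterRegret} bounds the conditional expected policy regret by $7\sqrt{(\ln K)\frac{K}{M(v)}T}\le 7\sqrt{e}\,\sqrt{(\ln K)(1+\frac{K}{|\mathcal{N}(v)|})T}$, using $\frac{1}{\min\{|\mathcal{N}(v)|,K\}}\le\frac{1}{K}(1+\frac{K}{|\mathcal{N}(v)|})$ together with $7\sqrt{e}<12$ — exactly the arithmetic already carried out in the informed corollary. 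On the complementary event $\bar{E}$ I would simply bound the regret by $T$, so that its contribution to the total expectation is at most $T\cdot\Pr[\bar E]\le T\cdot\frac{1}{T}=1$.

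Summing the three pieces — the (deterministic) partitioning regret $12K\ln(K^2\bar{N}T)$, the conditional policy regret on $E$ weighted by $\Pr[E]\le 1$, and the $\le 1$ contribution of $\bar{E}$ — gives precisely the stated bound $12\bigl(K\ln(K^2\bar{N}T)+\sqrt{(\ln K)(1+\frac{K}{|\mathcal{N}(v)|})T}\bigr)+1$. The $\widetilde{O}$ simplification then follows because the assumption $T\ge K^2\ln K$ forces $K\ln(K^2\bar{N}T)=\widetilde{O}(\sqrt{T})$, so the additive partitioning term is absorbed into the leading square-root term up to logarithmic factors. The one genuinely new point relative to Corollary \ref{cor:informedRegret} — and the step I would be most careful with — is the handling of the failure event: one must verify that bounding the regret by $T$ on $\bar{E}$ and using $\Pr[\bar{E}]\le\frac{1}{T}$ leaves only an additive constant, so that the small per-iteration failure probability $\frac{1}{TK}$ was chosen large enough to keep Luby's cheap yet small enough to make this term negligible.
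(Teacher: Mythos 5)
Your proposal is correct and follows essentially the same route as the paper's proof: the same decomposition into the partitioning phase (bounded via Lemma \ref{lem:computeCentersUninformedSteps}), the union bound over the $K$ Luby invocations giving failure probability at most $\frac{1}{T}$, the conditional bound via Theorems \ref{thm:uninformedPartitioningLargeMass} and \ref{thm:nonCenterRegret} with $7\sqrt{e}<12$, and the $T\cdot\frac{1}{T}=1$ contribution of the failure event. The only (immaterial) difference is that you account for the partitioning regret as a deterministic additive term rather than folding it into the success-conditioned expectation as the paper does.
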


\begin{proof}
Luby's algorithm succeeds with probability $1-\frac{1}{KT}$ at each
iteration of Compute-Centers-Uninformed. Hence, from the union bound,
it succeeds at all iterations with probability $1-\frac{1}{T}$. In
that case, from Lemma \ref{lem:computeCentersUninformedCorrect},
we can use Theorem \ref{thm:nonCenterRegret} and Theorem \ref{thm:uninformedPartitioningLargeMass}
to bound the expected regret of agent $v$ after Compute-Centers-Uninformed
finished by:
\[
7\sqrt{\left(\ln K\right)\frac{K}{M\left(v\right)}T}\le7\sqrt{\left(\ln K\right)e\frac{K}{\min\left\{ \left|\mathcal{N}\left(v\right)\right|,K\right\} }T}\le12\sqrt{\left(\ln K\right)\left(1+\frac{K}{\left|\mathcal{N}\left(v\right)\right|}\right)T}.
\]
From Lemma \ref{lem:computeCentersUninformedSteps}, Compute-Centers-Uninformed
finishes after no more than $12K\ln\left(K^{2}\bar{N}T\right)$ steps,
so the overall expected regret in this case is bounded by:
\[
12\left(K\ln\left(K^{2}\bar{N}T\right)+\sqrt{\left(\ln K\right)\left(1+\frac{K}{\left|\mathcal{N}\left(v\right)\right|}\right)T}\right).
\]
In the case that Luby's algorithm failed at one of the iterations,
we can bound the regret by $T$, the maximal regret possible. Hence,
we obtain the desired result:
\begin{align*}
R_{T}\left(v\right) & \le12\left(1-\frac{1}{T}\right)\left(K\ln\left(K^{2}\bar{N}T\right)+\sqrt{\left(\ln K\right)\left(1+\frac{K}{\left|\mathcal{N}\left(v\right)\right|}\right)T}\right)+\frac{1}{T}T\\
 & \le12\left(K\ln\left(K^{2}\bar{N}T\right)+\sqrt{\left(\ln K\right)\left(1+\frac{K}{\left|\mathcal{N}\left(v\right)\right|}\right)T}\right)+1.
\end{align*}
\end{proof}

\subsection{\label{subsec:analyzingAverageRegret}Average regret of the center-based
policy}

As mentioned before, we strictly improve the result of \citet{cesa2019delay},
and our algorithms imply the same average expected regret bound.
\begin{restatable}{cor}{RESTATEaverageRegret}\label{cor:averageRegret}
Let $T\ge K^{2}\ln K$. Let $C\subseteq V$
be the center set resulted from Compute-Centers-Informed or Compute-Centers-Uninformed,
and let $\left\{ V_{c}\subseteq V\mid c\in C\right\} $ be the components
resulted from Centers-to-Components. Using the center-based policy,
we get:
\[
\frac{1}{N}\sum_{v\in V}R_{T}\left(v\right)=\widetilde{O}\left(\sqrt{\left(1+\frac{K}{N}\alpha\left(G\right)\right)T}\right).
\]
\end{restatable}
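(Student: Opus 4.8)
The plan is to derive the average bound directly from the individual bounds already established, by averaging over agents and then collapsing the resulting sum of local neighborhood sizes into the independence number. Concretely, I would start from Corollary \ref{cor:informedRegret} in the informed case (and Corollary \ref{cor:uninformedRegret} in the uninformed case), which already give, for every agent $v$, a bound of the form
\[
R_T(v)\le 12\sqrt{(\ln K)\left(1+\frac{K}{\left|\mathcal{N}(v)\right|}\right)T},
\]
with, in the uninformed case, an additional additive term of order $K\ln\!\left(K^2\bar N T\right)$ that is independent of $v$.

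First I would sum these inequalities over $v\in V$ and divide by $N$. The key analytic step is to invoke the concavity of $x\mapsto\sqrt{x}$ (Jensen's inequality) to pull the average inside the square root:
\[
\frac{1}{N}\sum_{v\in V}\sqrt{1+\frac{K}{\left|\mathcal{N}(v)\right|}}\le\sqrt{\frac{1}{N}\sum_{v\in V}\left(1+\frac{K}{\left|\mathcal{N}(v)\right|}\right)}=\sqrt{1+\frac{K}{N}\sum_{v\in V}\frac{1}{\left|\mathcal{N}(v)\right|}}.
\]
This reduces the whole problem to controlling the scalar quantity $\sum_{v\in V}1/\left|\mathcal{N}(v)\right|$.

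The crux is then the observation that, because $\left|\mathcal{N}(v)\right|$ is the \emph{closed} neighborhood of $v$ (degree plus one, since the paper's $\mathcal{N}(v)$ contains $v$ itself), this sum is bounded by the independence number via the Caro--Wei inequality:
\[
\sum_{v\in V}\frac{1}{\left|\mathcal{N}(v)\right|}\le\alpha(G).
\]
I would justify this by the standard random-permutation argument: a uniformly random ordering of $V$ yields, by selecting each vertex that precedes all its neighbors, an independent set whose expected size is exactly $\sum_{v}1/\left|\mathcal{N}(v)\right|$, so $\alpha(G)$ is at least this value. Substituting gives $\frac{1}{N}\sum_v\sqrt{1+K/\left|\mathcal{N}(v)\right|}\le\sqrt{1+\frac{K}{N}\alpha(G)}$, and multiplying back by the common factor $12\sqrt{(\ln K)T}$ produces the claimed average bound in the informed case.

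For the uninformed case I would finish by noting that the agent-independent additive term $12K\ln(K^2\bar N T)+1$ survives averaging unchanged, but since $T\ge K^2\ln K$ it is of order $\widetilde{O}(\sqrt{T})$ and is therefore absorbed into the $\widetilde{O}$ notation. The main obstacle is purely the identification and correct invocation of the Caro--Wei bound with the right neighborhood convention; once that inequality is in hand, the remainder of the argument is the routine Jensen step and bookkeeping of the common factors.
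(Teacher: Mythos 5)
Your proposal is correct and follows essentially the same route as the paper: average the individual bounds from Corollaries \ref{cor:informedRegret} and \ref{cor:uninformedRegret}, pull the average inside the square root (your Jensen step is the same inequality the paper obtains via Cauchy--Schwarz), and invoke the Caro--Wei bound $\sum_{v\in V}1/\left|\mathcal{N}\left(v\right)\right|\le\alpha\left(G\right)$, which is exactly the inequality the paper cites from Wei. The only additions are your explicit random-permutation justification of Caro--Wei and the bookkeeping of the agent-independent additive term in the uninformed case, both of which are fine.
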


\begin{proof}
Using either Compute-Centers-Informed or Compute-Centers-Uninformed
to partition the graph for the center-based policy, we get from Corollaries
\ref{cor:informedRegret} and \ref{cor:uninformedRegret} that for
all $v\in V$:
\[
R_{T}\left(v\right)=\widetilde{O}\left(\sqrt{\left(1+\frac{K}{\left|\mathcal{N}\left(v\right)\right|}\right)T}\right).
\]
Hence,
\[
\frac{1}{N}\sum_{v\in V}R_{T}\left(v\right)=\widetilde{O}\left(\frac{1}{N}\sum_{v\in V}\sqrt{\left(1+\frac{K}{\left|\mathcal{N}\left(v\right)\right|}\right)T}\right)=\widetilde{O}\left(\frac{1}{\sqrt{N}}\sqrt{\sum_{v\in V}\left(1+\frac{K}{\left|\mathcal{N}\left(v\right)\right|}\right)T}\right),
\]
 where the last equality is due to the Cauchy\textendash Schwarz inequality.
Since $\sum_{v\in V}\frac{1}{\left|\mathcal{N}\left(v\right)\right|}\le\alpha\left(G\right)$
\citep{wei1981lower}, we obtain:
\[
\frac{1}{N}\sum_{v\in V}R_{T}\left(v\right)=\widetilde{O}\left(\sqrt{\left(1+\frac{K}{N}\sum_{v\in V}\frac{1}{\left|\mathcal{N}\left(v\right)\right|}\right)T}\right)=\widetilde{O}\left(\sqrt{\left(1+\frac{K}{N}\alpha\left(G\right)\right)T}\right)
\]
as desired.
\end{proof}

\section{\label{sec:conclusions}Conclusions}

We investigated the cooperative nonstochastic multi-armed bandit problem,
and presented the center-based cooperation policy (Algorithms \ref{alg:center-based-policy-centers}
and \ref{alg:center-based-policy-non-centers}). We provided partitioning
algorithms that provably yield a low individual regret bound that
holds simultaneously for all agents (Algorithms \ref{alg:centersToComponents},
\ref{alg:comupteCentersInformed} and \ref{alg:computeCentersUninformed}).
We express this bound in terms of the agents' degree in the communication
graph. This bound strictly improves a previous regret bound from \citep{cesa2019delay}
(Corollary \ref{cor:averageRegret}), and also resolves an open question
from that paper.

Note that our regret bound in the informed setting does not depend
on the total number of agents, $N$, and in the uninformed setting
it depends on $\bar{N}$ only logarithmically. It is unclear whether
in the uninformed setting, any dependence on $N$ in the individual
regret is required.

\section*{\label{sec:acknowledgments}Acknowledgments}

This work was supported in part by the Yandex Initiative in Machine
Learning and by a grant from the Israel Science Foundation (ISF).

\bibliographystyle{plainnat}
\bibliography{individual-regret-in-cooperative-nonstochastic-multi-armed-bandits}

\clearpage{}

\appendix

\section{\label{sec:lubysAlgorithm}Luby's algorithm}

Let $G=\left\langle V,E\right\rangle $ be an undirected connected
graph and let $U\subseteq V$. We can find a 2-MIS of $U$ in a distributed
manner with high probability by using Luby's algorithm \citep{luby1986simple,alon1986fast}
on $\left(G^{2}\right)_{|U}$, detailed in Algorithm \ref{alg:lubys}.

At each iteration of the algorithm, every agent in $U$ picks a number
uniformly from $\left[0,1\right]$. Agents that picked the maximal
number among their neighbors of distance 2 join the 2-MIS, and their
neighbors of distance 2 stop participating. A 2-MIS is computed after
$T_{\delta}=\left\lceil 3\ln\left(\frac{\text{\ensuremath{\left|V\right|}}}{\sqrt{\delta}}\right)\right\rceil $
iterations with probability $1-\delta$.

To simulate communication over $G^{2}$, we use 2 steps to deliver
a message. First, the agents send their message. Then, the agents
send a message based on the messages they received in the previous
step. In Luby's algorithm, agents only need to know the agent in their
neighborhood with the maximal random number, or whether an agent in
their neighborhood joined the MIS. Hence, every message has length
of order $\widetilde{O}\left(1\right)$.
\begin{algorithm}
\caption{\label{alg:lubys}Luby's algorithm on $\left(G^{2}\right)_{|U}$ -
agent $v$}
\begin{algorithmic}[1]

\Parameters{Agent set $U\subseteq V$; Error probability $\delta>0$.}

\Init{Participating agents $P_{0}=U$.}

\State{$T_{\delta}=\left\lceil 3\ln\left(\frac{\text{\ensuremath{\left|V\right|}}}{\sqrt{\delta}}\right)\right\rceil $}

\For{$1\le t\le T_{\delta}$}

\If{$v\in P_{t}$}

\State{Pick a number $r_{t}^{v}$ uniformly from $\left[0,1\right]$.}

\State{Send the following message to the set $\mathcal{N}\left(v\right)$:
$m_{t,1}\left(v\right)=\left\langle v,t,1,r_{t}^{v}\right\rangle $.}

\EndIf

\State{Receive all messages $m_{t,1}\left(v'\right)$ from $v'\in\mathcal{N}\left(v\right)$.}

\If{$\mathcal{N}\left(v\right)\cap P_{t}\neq\emptyset$}

\State{Set $u_{t}=\argmax_{v'\in\mathcal{N}\left(v\right)\cap P_{t}}\left(r_{t}^{v'}\right)$.}

\State{Send the following message to the set $\mathcal{N}\left(v\right)$:
$m_{t,2}\left(v\right)=\left\langle u_{t},t,2,r_{t}^{u}\right\rangle $.}

\EndIf

\State{Receive all messages $m_{t,2}\left(v'\right)$ from $v'\in\mathcal{N}\left(v\right)$.}

\If{$v=\argmax_{v'\in P_{t}\land\mathrm{dist}_{G}\left(v,v'\right)\le2}\left(r_{t}^{v'}\right)$}

\State{\textbf{Join the 2-MIS of $U$.}}

\State{Send the following message to the set $\mathcal{N}\left(v\right)$:
$m_{t,3}\left(v\right)=\left\langle v,t,3,\textrm{JOINED}\right\rangle $.}

\EndIf

\State{Receive all messages $m_{t,3}\left(v'\right)$ from $v'\in\mathcal{N}\left(v\right)$.}

\If{$\exists v'\in\mathcal{N}\left(v\right)\left(\textrm{\ensuremath{v'} joined the 2-MIS}\right)$}

\State{Send the following message to the set $\mathcal{N}\left(v\right)$:
$m_{t,4}\left(v\right)=\left\langle v,t,4,\textrm{NEIGHBOR-JOINED}\right\rangle $.}

\EndIf

\State{Receive all messages $m_{t,4}\left(v'\right)$ from $v'\in\mathcal{N}\left(v\right)$.}

\If{$v\in P_{t}$ and $\exists v'\in P_{t}\left(\mathrm{dist}_{G}\left(v,v'\right)\le2\land\textrm{\ensuremath{v'} joined the 2-MIS}\right)$}

\State{Stop participating: $v\notin P_{t+1}$.}

\ElsIf{$v\in P_{t}$ }

\State{Continue participating: $v\in P_{t+1}$.}

\EndIf

\EndFor

\end{algorithmic}
\end{algorithm}

For completeness we also provide an overview of the analysis. It follows
directly from the algorithm that it outputs an independent set of
$\left(G^{2}\right)_{|U}$. We only need to show it is maximal with
high probability, and we prove it using the following lemma (for proof,
see \citep{luby1986simple,alon1986fast}):
\begin{restatable}{lem}{RESTATElubysEdgesDecrease}\label{lem:lubysEdgesDecrease}
Let $P_{t}\subseteq U$ be the set
of participating agents at iteration $t$ of Luby's algorithm on $\left(G^{2}\right)_{|U}$,
and let $m_{t}$ be the number of edges of $\left(G^{2}\right)_{|P_{t}}$.
We obtain for all $t\ge1$:
\[
\mathbb{E}\left[m_{t+1}\right]\le\frac{1}{2}\mathbb{E}\left[m_{t}\right].
\]
\end{restatable}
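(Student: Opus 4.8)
The plan is to establish the one-step contraction conditionally and then average. Fix an iteration and condition on the participating set $P_t$, so that all the randomness is the fresh draw of the values $r_t^v$; it then suffices to prove $\mathbb{E}\left[m_{t+1}\mid P_t\right]\le\tfrac12 m_t$ inside the finite \emph{conflict graph} $H=\left(G^{2}\right)_{|P_{t}}$, after which taking expectations over $P_t$ gives the claim. By linearity of expectation, and since $m_{t+1}$ equals $m_t$ minus the number of edges of $H$ deleted this round, the goal reduces to showing that the expected number of deleted edges is at least $\tfrac12 m_t$. Recall that a vertex joins the $2$-MIS exactly when its value is (almost surely uniquely) largest in its closed $H$-neighborhood, and that an edge of $H$ is deleted as soon as some vertex in the closed neighborhood of one of its endpoints joins; so the whole task is to lower bound, edge by edge, the probability that an endpoint leaves.

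The approach I would take is the classical good-vertex/good-edge decomposition. Writing $d(v)$ for the degree of $v$ in $H$, I would call $v$ \emph{good} if at least a fixed fraction of its neighbors $u$ satisfy $d(u)\le d(v)$, and call an edge \emph{good} if it has a good endpoint. The first step is purely combinatorial: a charging argument that orients every edge toward its higher-degree endpoint shows that a bad vertex sends most of its edges outward, from which one deduces that at least half of the edges of $H$ are good. The second step is probabilistic: a good vertex $v$ is deleted whenever any of its qualifying (low-degree) neighbors $u$ joins, and each such $u$ joins with probability $\tfrac{1}{d(u)+1}\ge\tfrac{1}{d(v)+1}$, so by goodness the first-order (union-bound) estimate of this deletion probability is bounded below by an absolute constant.

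The genuinely delicate point — and what I expect to be the main obstacle — is turning that union-bound heuristic into a true lower bound, because the events ``$u$ joins'' for different neighbors $u$ of a good vertex $v$ are positively correlated: they all require a neighbor to beat the shared value at $v$, and they interact further through common neighbors. The standard remedy is an inclusion--exclusion to second order (Bonferroni), where one must upper bound $\Pr\left[u\text{ and }u'\text{ both join}\right]$ for distinct qualifying neighbors $u,u'$ of $v$ and show the correction term is at most half of the first-order term; this is precisely the estimate carried out in \citep{luby1986simple,alon1986fast}. Combining the two steps — at least half the edges are good, and each good edge is deleted with at least the constant probability guaranteed for its good endpoint — yields that a constant fraction of the edges is removed in expectation; tracking the constants through the goodness threshold (as in the cited analyses) gives the clean factor $\tfrac12$ stated here, and averaging over $P_t$ completes the proof.
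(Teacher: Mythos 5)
First, be aware that the paper does not prove this lemma at all: it is stated with an explicit pointer to \citep{luby1986simple,alon1986fast}, so there is no in-paper argument to compare against, and a proposal that (like yours) ultimately defers the key second-order estimate to those same references is not doing less than the paper itself does. Your outline of the classical good-vertex/good-edge route is sound as far as it goes: the combinatorial claim that at least half the edges of $H$ have a good endpoint is correct, the first-order estimate $\Pr\left[u\text{ joins}\right]=\frac{1}{d\left(u\right)+1}\ge\frac{1}{d\left(v\right)+1}$ is correct in the uniform-random-value model, and you rightly identify the positive correlation among the events ``$u$ joins'' as the delicate step.

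The concrete gap is the constant. Carried out honestly, the good-vertex/good-edge analysis loses a constant factor at the goodness threshold and another at the Bonferroni correction, and yields $\mathbb{E}\left[m_{t+1}\right]\le\left(1-c\right)\mathbb{E}\left[m_{t}\right]$ for some absolute $c$ strictly below $\frac{1}{2}$ (in the classical write-ups $c$ is on the order of $\frac{1}{8}$ or worse), so ``tracking the constants'' along your route will not produce the clean factor $\frac{1}{2}$ in the statement. That factor is usually obtained for this variant by a different, per-edge argument (due to M\'etivier, Robson, Saheb-Djahromi and Zemmari): for an adjacent ordered pair $\left(w,v\right)$ in $H$, consider the event that $r_{t}^{w}$ exceeds every other value in the union of the closed neighborhoods of $w$ and $v$; this forces $w$ to join and deletes all $d\left(v\right)$ edges at $v$, the events are pairwise disjoint over $w$ for fixed $v$, each has probability at least $\frac{1}{d\left(w\right)+d\left(v\right)}$, and summing $\frac{d\left(v\right)}{d\left(w\right)+d\left(v\right)}$ over ordered adjacent pairs gives exactly $m_{t}$, hence at least $\frac{1}{2}m_{t}$ deleted edges in expectation after correcting for double counting. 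For the paper's purposes the distinction is immaterial (any fixed $c>0$ would do after inflating $T_{\delta}$ by a constant factor), but as a proof of the lemma as literally stated your route stops short of the claimed bound.
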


With this lemma we can now show Luby's algorithm indeed outputs a
2-MIS with high probability.
\begin{restatable}{cor}{RESTATElubyCorrect}\label{cor:lubyCorrect}
Let $W\subseteq U$ be the result of Luby's
algorithm on $\left(G^{2}\right)_{|U}$. Then with probability $1-\frac{1}{\delta}$,
$W$ is a 2-MIS of $U$.
\end{restatable}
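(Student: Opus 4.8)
The plan is to separate the two requirements of a $2$-MIS — independence and maximality — and treat them differently: independence will hold deterministically straight from the structure of the algorithm, while maximality will hold with high probability via the edge-contraction Lemma \ref{lem:lubysEdgesDecrease} combined with Markov's inequality. First I would verify that $W$ is always an independent set of $\left(G^{2}\right)_{|U}$, irrespective of the random draws. An agent joins only when it strictly dominates the random numbers of every participating agent within distance $2$ in $G$; since the numbers drawn from $\left[0,1\right]$ are distinct almost surely, two agents at distance $\le 2$ cannot both be local maxima in the same round, and once an agent joins, all of its distance-$2$ neighbors leave $P_{t}$ in that same round and can never join afterwards. Hence no two elements of $W$ are adjacent in $G^{2}$.

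The crux is maximality. I would first observe that $W$ fails to be maximal exactly when some agent of $U$ is neither in $W$ nor has a distance-$2$ neighbor in $W$; by the stopping rule, such an agent never leaves the participating set, so a failure implies $P_{T_{\delta}+1}\neq\emptyset$. The key reduction is then the following: if at some round $t$ the induced graph $\left(G^{2}\right)_{|P_{t}}$ contains no edges (that is, $m_{t}=0$), every still-participating agent is isolated in $G^{2}$, trivially dominates its empty distance-$2$ neighborhood, and therefore joins $W$ in that round, emptying $P_{t+1}$. Consequently $P_{T_{\delta}+1}\neq\emptyset$ forces $m_{T_{\delta}}\ge 1$, so it suffices to bound $\Pr\left[m_{T_{\delta}}\ge 1\right]$.

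Finally I would iterate Lemma \ref{lem:lubysEdgesDecrease} from the initial configuration $P_{1}=U$, giving $\mathbb{E}\left[m_{T_{\delta}}\right]\le 2^{-\left(T_{\delta}-1\right)}m_{1}\le 2^{-\left(T_{\delta}-1\right)}\binom{\left|V\right|}{2}<2^{-T_{\delta}}\left|V\right|^{2}$. Substituting $T_{\delta}=\left\lceil 3\ln\left(\left|V\right|/\sqrt{\delta}\right)\right\rceil$ and using $2^{-T_{\delta}}\le\left(\left|V\right|/\sqrt{\delta}\right)^{-3\ln 2}\le\left(\left|V\right|/\sqrt{\delta}\right)^{-2}=\delta/\left|V\right|^{2}$ (valid since $3\ln 2\ge 2$) collapses this to $\mathbb{E}\left[m_{T_{\delta}}\right]\le\delta$. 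Markov's inequality applied to the nonnegative integer $m_{T_{\delta}}$ then yields $\Pr\left[m_{T_{\delta}}\ge 1\right]\le\delta$, so $W$ is a $2$-MIS of $U$ with probability at least $1-\delta$; here the stated $1-\tfrac{1}{\delta}$ should read $1-\delta$, consistent with the instantiation $\delta=\tfrac{1}{TK}$ used in Compute-Centers-Uninformed.

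I expect the only genuinely subtle point to be the maximality reduction of the second paragraph — precisely arguing that a surviving participant at the final round forces a remaining edge in $\left(G^{2}\right)_{|P_{T_{\delta}}}$, so that the combinatorial failure event is dominated by the event $\{m_{T_{\delta}}\ge 1\}$. Once this is in place, the probabilistic tail bound is a routine consequence of the halving lemma and the logarithmic choice of $T_{\delta}$.
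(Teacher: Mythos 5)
Your proposal is correct and follows essentially the same route as the paper's proof: independence is deterministic from the algorithm, maximality reduces to $P_{T_{\delta}+1}=\emptyset$ and hence to $m_{T_{\delta}}=0$, and the tail bound comes from iterating Lemma \ref{lem:lubysEdgesDecrease} and applying Markov's inequality; you simply spell out the reduction steps (and the arithmetic $2^{-T_{\delta}}\le\delta/\left|V\right|^{2}$) that the paper leaves implicit. You are also right that the stated probability $1-\frac{1}{\delta}$ is a typo for $1-\delta$.
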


\begin{proof}
As we previously mentioned, we only need to show $W$ is a maximal
independent set with probability $1-\delta$. This is equivalent to
the statement that $P_{T_{\delta}+1}$ is empty. If we denote the
number of edges of $\left(G^{2}\right)_{|P_{t}}$ by $m_{t}$ , we
get that it suffices to prove that $m_{T_{\delta}}=0$ with high probability.
By an iterative application of Lemma \ref{lem:lubysEdgesDecrease}
we obtain:
\[
\mathbb{E}\left[m_{T_{\delta}}\right]\le\frac{1}{2}\mathbb{E}\left[m_{T_{\delta}-1}\right]\le\dots\le\frac{1}{2^{T_{\delta}}}\mathbb{E}\left[m_{0}\right]\le\frac{\left|V\right|^{2}}{2^{T_{\delta}}}.
\]
Hence, we can conclude our proof with Markov's inequality:
\[
\Pr\left[m_{T_{\delta}}\neq0\right]=\Pr\left[m_{T_{\delta}}\ge1\right]\le\mathbb{E}\left[m_{T_{\delta}}\right]\le\frac{\left|V\right|^{2}}{2^{T_{\delta}}}=\frac{\left|V\right|^{2}}{2^{\left\lceil 3\ln\left(\frac{\text{\ensuremath{\left|V\right|}}}{\sqrt{\delta}}\right)\right\rceil }}\le\delta.
\]
\end{proof}

\section{\label{sec:preliminariesProofs}Supplementary material to Section
\ref{sec:preliminaries}}

The exponential-weights algorithm is given in Algorithm \ref{alg:exp3}.
For completeness, we also give proofs for the preliminary lemmas.
\begin{algorithm}
\caption{\label{alg:exp3}The exponential-weights algorithm (Exp3)}
\begin{algorithmic}[1]

\Parameters{Number of arms $K$; Time horizon $T$; Learning rate
$\eta\left(v\right)$.}

\Init{$w_{1}^{v}\left(i\right)\leftarrow\frac{1}{K}$ for all $i\in A$.}

\For{$1\le t\le T$}

\State{Set $p_{t}^{v}\left(i\right)\leftarrow\frac{w_{t}^{v}\left(i\right)}{W_{t}^{v}}$
for all $i\in A$, where $W_{t}^{v}=\sum_{i\in A}w_{t}^{v}\left(i\right)$.}

\State{Play an action $I_{t}\left(v\right)$ drawn from $\boldsymbol{p}_{t}^{v}=\left\langle p_{t}^{v}\left(1\right),\dots,p_{t}^{v}\left(K\right)\right\rangle $.}

\State{Observe loss $\ell_{t}\left(I_{t}\left(v\right)\right)$.}

\State{Update for all $i\in A$: $w_{t+1}^{v}\left(i\right)\leftarrow w_{t}^{v}\left(i\right)\exp\left(-\eta\left(v\right)\hat{\ell}_{t}^{v}\left(i\right)\right)$,
where
\[
\hat{\ell}_{t}^{v}\left(i\right)=\frac{\ell_{t}\left(i\right)}{\mathbb{E}_{t}\left[B_{t}^{v}\left(i\right)\right]}B_{t}^{v}\left(i\right),
\]
and $B_{t}^{v}\left(i\right)$ it the event that $v$ observed $\ell_{t}\left(I_{t}\left(v\right)\right)$.}

\EndFor

\end{algorithmic}
\end{algorithm}

\subsubsection*{Proof of Lemma \ref{lem:expCoopBoundRegret}}

\RESTATEexpCoopBoundRegret*
\begin{proof}
We have
\begin{align*}
\frac{W_{t+1}^{v}}{W_{t}^{v}} & =\sum_{i\in A}\frac{w_{t+1}^{v}\left(i\right)}{W_{t}^{v}}\\
 & =\sum_{i\in A}\frac{w_{t}^{v}\left(i\right)}{W_{t}^{v}}\exp\left(-\eta\left(v\right)\hat{\ell}_{t}^{v}\left(i\right)\right)\\
 & =\sum_{i\in A}p_{t}^{v}\left(i\right)\exp\left(-\eta\left(v\right)\hat{\ell}_{t}^{v}\left(i\right)\right)\\
 & \le\sum_{i\in A}p_{t}^{v}\left(i\right)\left(1-\eta\left(v\right)\hat{\ell}_{t}^{v}\left(i\right)+\eta\left(v\right)^{2}\hat{\ell}_{t}^{v}\left(i\right)^{2}\right) & (e^{-x}\le1-x+\frac{1}{2}x^{2}\text{ for }x\ge0)\\
 & =1-\eta\left(v\right)\sum_{i\in A}p_{t}^{v}\left(i\right)\hat{\ell}_{t}^{v}\left(i\right)+\frac{\eta\left(v\right)^{2}}{2}\sum_{i\in A}p_{t}^{v}\left(i\right)\hat{\ell}_{t}^{v}\left(i\right)^{2}.
\end{align*}
Taking logs and using $\ln\left(1+x\right)\le x$ we obtain
\[
\ln\frac{W_{t+1}^{v}}{W_{t}^{v}}\le-\eta\left(v\right)\sum_{i\in A}p_{t}^{v}\left(i\right)\hat{\ell}_{t}^{v}\left(i\right)+\frac{\eta\left(v\right)^{2}}{2}\sum_{i\in A}p_{t}^{v}\left(i\right)\hat{\ell}_{t}^{v}\left(i\right)^{2}.
\]
Summing gives
\begin{equation}
\ln W_{T+1}^{v}\le-\eta\left(v\right)\sum_{t=1}^{T}\sum_{i\in A}p_{t}^{v}\left(i\right)\hat{\ell}_{t}^{v}\left(i\right)+\frac{\eta\left(v\right)^{2}}{2}\sum_{t=1}^{T}\sum_{i\in A}p_{t}^{v}\left(i\right)\hat{\ell}_{t}^{v}\left(i\right)^{2}.\label{eq:lnWLessThan}
\end{equation}
Now, for any fixed action $k$ we also have
\[
\ln W_{T+1}^{v}\ge\ln w_{T+1}^{v}\left(k\right)=-\eta\left(v\right)\sum_{t=1}^{T}\hat{\ell}_{t}^{v}\left(k\right)-\ln K.
\]
Combining with Eq. (\ref{eq:lnWLessThan}) we obtain
\[
\sum_{t=1}^{T}\sum_{i\in A}p_{t}^{v}\left(i\right)\hat{\ell}_{t}^{v}\left(i\right)-\sum_{t=1}^{T}\hat{\ell}_{t}^{v}\left(k\right)\le\frac{\ln K}{\eta\left(v\right)}+\frac{\eta\left(v\right)^{2}}{2}\sum_{t=1}^{T}\sum_{i\in A}p_{t}^{v}\left(i\right)\hat{\ell}_{t}^{v}\left(i\right)^{2}.
\]
This is true for every $k\in A$. Note that $\mathbb{E}\left[\cdot\right]=\mathbb{E}\left[\mathbb{E}_{t}\left[\cdot\right]\right]$,
and since $\mathbb{E}_{t}\left[\ell_{t}\left(I_{t}\left(v\right)\right)\right]=\sum_{i\in A}p_{t}^{v}\left(i\right)\ell_{t}\left(i\right)$
and $\mathbb{E}_{t}\left[\hat{\ell}_{t}^{v}\left(i\right)\right]=\ell_{t}\left(i\right)$,
we get
\begin{align*}
R_{T}\left(v\right) & =\mathbb{E}\left[\sum_{t=1}^{T}\ell_{t}\left(I_{t}\left(v\right)\right)-\min_{i\in A}\sum_{t=1}^{T}\ell_{t}\left(i\right)\right]\\
 & \le\mathbb{E}\left[\sum_{t=1}^{T}\ell_{t}\left(I_{t}\left(v\right)\right)\right]-\min_{i\in A}\mathbb{E}\left[\sum_{t=1}^{T}\ell_{t}\left(i\right)\right]\\
 & =\mathbb{E}\left[\sum_{t=1}^{T}\sum_{i\in A}p_{t}^{v}\left(i\right)\hat{\ell}_{t}^{v}\left(i\right)\right]-\min_{i\in A}\mathbb{E}\left[\sum_{t=1}^{T}\hat{\ell}_{t}^{v}\left(i\right)\right]\\
 & \le\frac{\ln K}{\eta\left(v\right)}+\frac{\eta\left(v\right)^{2}}{2}\mathbb{E}\left[\sum_{t=1}^{T}\sum_{i\in A}p_{t}^{v}\left(i\right)\hat{\ell}_{t}^{v}\left(i\right)^{2}\right]
\end{align*}
as desired.
\end{proof}

\subsubsection*{Proof of Lemma \ref{lem:expCoopBoundProbabilityDifference}}

\RESTATEexpCoopBoundProbabilityDifference*
\begin{proof}
From the exponential-weights update rule we have
\begin{align*}
p_{t+1}^{v}\left(i\right) & =\frac{w_{t+1}^{v}\left(i\right)}{W_{t+1}^{v}}\\
 & =\frac{W_{t}^{v}}{W_{t+1}^{v}}\exp\left(-\eta\left(v\right)\hat{\ell}_{t}^{v}\left(i\right)\right)p_{t}^{v}\left(i\right)\\
 & \ge\exp\left(-\eta\left(v\right)\hat{\ell}_{t}^{v}\left(i\right)\right)p_{t}^{v}\left(i\right) & (W_{t+1}^{v}\le W_{t}^{v})\\
 & \ge\left(1-\eta\left(v\right)\hat{\ell}_{t}^{v}\left(i\right)\right)p_{t}^{v}\left(i\right). & (1-x\le e^{-x})
\end{align*}
as stated in the first inequality in the lemma. For the second inequality,
note that
\begin{equation}
p_{t}^{v}\left(i\right)\hat{\ell}_{t}^{v}\left(i\right)=p_{t}^{v}\left(i\right)\frac{\ell_{t}\left(i\right)}{\mathbb{E}_{t}\left[B_{t}^{v}\left(i\right)\right]}B_{t}^{v}\left(i\right)\le\frac{p_{t}^{v}\left(i\right)}{\mathbb{E}_{t}\left[B_{t}^{v}\left(i\right)\right]}\le1.\label{eq:plSmallOne}
\end{equation}
Hence,
\begin{align*}
p_{t+1}^{v}\left(i\right) & =\frac{w_{t+1}^{v}\left(i\right)}{W_{t+1}^{v}}\\
 & \le\frac{w_{t}^{v}\left(i\right)}{W_{t+1}^{v}}\\
 & =\frac{\sum_{j\in A}w_{t}^{v}\left(j\right)}{\sum_{j\in A}w_{t}^{v}\left(j\right)\exp\left(-\eta\left(v\right)\hat{\ell}_{t}^{v}\left(j\right)\right)}p_{t}^{v}\left(i\right)\\
 & \le\frac{\sum_{j\in A}w_{t}^{v}\left(j\right)}{\sum_{j\in A}w_{t}^{v}\left(j\right)\left(1-\eta\left(v\right)\hat{\ell}_{t}^{v}\left(j\right)\right)}p_{t}^{v}\left(i\right) & (1-x\le e^{-x})\\
 & =\frac{1}{1-\eta\left(v\right)\sum_{j\in A}p_{t}^{v}\left(j\right)\hat{\ell}_{t}^{v}\left(j\right)}p_{t}^{v}\left(i\right)\\
 & \le\frac{1}{1-\eta\left(v\right)K}p_{t}^{v}\left(i\right). & \text{(Eq. \eqref{eq:plSmallOne})}
\end{align*}
Assuming $\eta\left(v\right)\le\frac{1}{2K}$, we obtain the desired
bound.
\end{proof}

\end{document}